\def\NN{\mathbb N}
\def\ZZ{\mathbb Z}
\def\RR{\mathbb R}
\newtheorem{assumption}{Assumption}
\newtheorem{theorem}{Theorem}
\newtheorem{lemma}{Lemma}[section]
\newtheorem{proposition}{Proposition}
\newtheorem{definition}{Definition}
\newtheorem{remark}{Remark}
\providecommand{\keywords}[1]
{
  \small	
  \textbf{\textit{Keywords:}} #1
}
\begin{document}

\title{Classification of Data Generated by Gaussian Mixture Models Using Deep ReLU Networks}

\author{Tian-Yi Zhou \and Xiaoming Huo\thanks{H. Milton Stewart School of Industrial and Systems Engineering, Georgia Institute of Technology, USA (tzhou306@gatech.edu, huo@isye.gatech.edu)}}

\maketitle
\begin{abstract}
This paper studies the binary classification of unbounded data from ${\mathbb R}^d$ generated under Gaussian Mixture Models (GMMs) using deep ReLU neural networks. We obtain — for the first time —
non-asymptotic upper bounds and convergence rates of the excess risk (excess misclassification error) for the classification without restrictions on model parameters. The convergence rates we derive
do not depend on dimension $d$, demonstrating that deep ReLU networks can overcome the curse of dimensionality in classification. While the majority of existing generalization analysis of classification algorithms relies on a bounded domain, we consider an unbounded domain by 
leveraging the analyticity and fast decay of Gaussian distributions. 
To facilitate our analysis, we give a novel approximation error bound for general analytic functions using ReLU networks, which may be of independent interest. 
Gaussian distributions can be adopted nicely to model data arising in applications, e.g., speeches, images, and texts; our results provide a theoretical verification of the observed efficiency of deep neural networks in practical classification problems.
\end{abstract}

\keywords{binary classification, Gaussian Mixture Model, excess risk, ReLU neural networks,  statistical learning theory}

\section{Introduction} \label{sec:intro}
This paper studies  the binary classification of unbounded data generated by a mixture of Gaussian distributions using neural networks. 
We assume our data in  $\RR^d$ follows a class of distribution largely used
to model real-world data, namely the Gaussian Mixture Model (GMM).
Many studies have shown that GMM is an effective model for audio, speech, image, and text processing, e.g., see \citep{reynolds2000speaker, portilla2003image, blekas2005spatially}.
The universality of GMM \citep{goodfellow2016deep} motivates us to study the classification problem under such distributional assumptions on data.

In this paper, we consider data  $X \in \RR^d$ drawn from a GMM with two classes, denoted as $\{-1,1\}$,  and members of each class
are drawn from a mixture of Gaussian distributions. 
Denote the domain by  $\mathcal{X}=\RR^d$
and the output set by $\mathcal{Y}=\{-1,1\}$. We also denote  by $\rho$ a joint  distribution on $Z:= \mathcal{X} \times \mathcal{Y}$ for a GMM to be specified later. 
We are interested in learning a binary classifier $f: \RR^d \rightarrow \{-1,1\}$ using deep neural networks (DNNs).
To evaluate the effectiveness of a classifier $f$, we conduct a misclassification error analysis. 
Specifically, we examine its 
excess risk (excess misclassification error).
For any classifier $\text{sgn}(f)$ induced by a function $f:\RR^d \rightarrow \RR$, its misclassification error is defined as 
 $$ R(f) := \mathrm{E}[\mathbbm{1}\{Y\cdot \text{sgn} (f(X))=-1\}]=
    \mathrm{P}(Y\cdot \text{sgn} (f(X))=-1).$$
A Bayes classifier $f_c$ minimizes the misclassification error 
and gives the best prediction of $Y$ for a given $X$:
   \begin{equation} \label{bayes}
        f_c(X) = \begin{cases}
      1, & \text{if } \mathrm{P}(Y=1|X) \geq \mathrm{P}(Y=-1|X), \\
      -1, & \text{if }  \mathrm{P}(Y=1|X) < \mathrm{P}(Y=-1|X).
    \end{cases}  
    \end{equation}
We aim to learn a classifier $f$ as close as possible to $f_c$ using trainable DNNs.
 The accuracy of a classifier can be characterized by the excess risk  given by:  $R(f) - R(f_c)$. 
In this work, we establish fast convergence rates of excess risk of classifiers under the GMM model
generated by DNNs (given in Theorem \ref{main3}). 

The mathematical analysis of classification algorithms was initiated upon the introduction of support vector machines \citep{cortes1995support, vapnik1999nature} with a focus on margin-based analysis.
Shortly after, the universality of classification induced by kernel-based regularization schemes was established in \citep{steinwart2001influence}.
Tsybakov's noise condition \citep{tsybakov2004optimal}, together with a comparison theorem \citep{zhang2004statistical},  have facilitated the analysis of the excess risk of classification algorithms. 
Since then, a significant body of literature has emerged to study the theoretical guarantees of kernel methods in classification. All the existing work is carried out on a bounded domain \citep{steinwart2008support, campbell2011learning}.
The study of classification algorithms continues to be an active area of research in both theory and practice.


Today,  neural networks are widely considered a popular choice for classification tasks in the machine learning community, often preferred over kernel methods. Since the last decade, the development of powerful GPUs and large data sets has enabled the training of deep and complex neural networks. 
These led to breakthroughs in many fields, including computer vision, speech recognition, and natural language processing.
A rapidly growing line of literature demonstrates the accuracy and effectiveness
of DNNs in tackling classification tasks arising in practice, e.g., text and image classifications \citep{krizhevsky2017imagenet,he2016deep}.

Given the unboundedness of Gaussian distributions, we study the classification of GMM on an unbounded domain.
We would like to highlight that all existing results of classification, whether by ReLU neural networks or kernel-based classifiers, 
rely on a bounded input domain, e.g., the unit cube $[0,1]^d$ \citep{kim2021fast, bos2022convergence, shen2022approximation}, the unit sphere $ \mathbb{S}^{d-1}$ \citep{feng2021generalization}.
However, since Gaussian distributions are unbounded, existing results cannot be applied. 
In contrast to the prior works, our paper considers the unbounded domain $\RR^d$. 
Many existing approaches in the mathematical analysis of classification problems, such as covering numbers and integral operators, do not apply to unbounded input spaces. 
We extend the analysis from a bounded to an unbounded domain by leveraging the fast decay and analyticity of Gaussian functions. 
By not restricting data in a bounded set, our work speaks directly to many modern classification tasks in practice.

To conduct a generalization analysis of neural network classifiers, we adopt the Hinge loss function. 
Given a random sample $z:= \{(x_i, y_i)\}_{i=1}^n$ drawn from $\RR^d \times \{-1,1\}$, it is natural to find a classifier that minimizes the empirical risk $\frac{1}{n} \sum_{i=1}^n \mathbbm{1}\{y_i\cdot \text{sgn} (f(x_i))=-1\}$. However,  minimizing the empirical risk with the $0$-$1$ loss considered by \citep{tsybakov2004optimal, audibert2007fast} is NP-hard and thus computationally infeasible \citep{bartlett2006convexity}. 
In this paper, we adopt the well-known Hinge loss function $\phi(t):= \max \{0,1-t\}$ to make computations feasible. Learning a neural network classifier with Hinge loss is relatively straightforward owing to the gradient descent algorithm \citep{molitor2021bias, george2023training}. 
Also, there exists a well-established and neat comparison theorem from \citep{zhang2004statistical} with respect to Hinge loss, which facilitates the  generalization analysis of neural network classifiers. 

The effectiveness of a classifier can be evaluated by its excess risk.
Excess risk bounds are typically given regarding the underlying distribution $\rho$, the loss function, and the classification algorithm. 
Before we get into the main results of this paper, we would like to review some findings on the excess risk of different classifiers in the literature. We would like to pay special attention to their assumptions on the distribution $\rho$ and their uses of loss functions. 

\subsection{Related Work}

Here, we review some related work. Previously, \citep{jalali2019efficient} studied the classification of GMM data in $\RR^d$ using $1$-layer and $2$-layer neural networks with $C^\infty$ sigmoid-type activation functions. 
It considered the set $\mathcal{S}_{D,t}= \{x\in \RR^d: D(x) \geq t\}$ with $t>0$ and $D$ being the GMM discriminant function. This is a bounded set on $\RR^d$ depending on the threshold $t$.
It established a bound of the relative error $\left|\frac{\hat D(X) - D(X)}{D(X)}\right|$ for approximation on this set by $2$-layer sigmoid networks.
Neither estimates of approximation error nor excess risk is given.
They imposed several regularity assumptions on the activation function, which ReLU does not satisfy.

Due to the availability of scalable computing and stochastic optimization techniques, sigmoid neural networks have taken a back seat to ReLU networks in the last decade. 
Deep ReLU networks are extensively used nowadays in practice because they have overcome optimization hurdles of vanishing gradients and exhibit superior empirical performances. 




In the past three years,  a handful of stimulating papers, e.g., \citep{feng2021generalization, kim2021fast,shen2022approximation}, have  studied the theoretical guarantees of ReLU neural networks for binary classification on bounded domains under structural assumptions of the regression function, noise or decision boundary.
Note that the regression function is defined as the conditional mean $f_\rho(X) = \mathrm{E}[Y|X]$. 
Tsybakov's noise condition \citep{tsybakov2004optimal} with noise exponent $q \geq 0$ assumes that  $\mathrm{P}(\{X\in \mathcal{X} : |f_\rho(X)| \leq t\}) = \mathcal{O}(t^q)$.
A noteworthy work is \citep{kim2021fast}. It showed that, with Hinge loss and noise exponent $q$, the empirical risk minimizer generated from ReLU fully-connected neural networks (ReLU FNNs) achieves  rates of $\mathcal{O}\left(n^{-\frac{\alpha(q+1)}{\alpha(q+2)+(d-1)(q+1)}}\right)$ and $\mathcal{O}\left(n^{-\frac{\alpha(q+1)}{\alpha(q+2)+d}}\right)$ for the excess risk when the decision boundary is $\alpha$-Hölder smooth or when the regression function is $\alpha$-Hölder smooth, under some additional conditions on the density of $X$.

Moreover, two recent works studied the binary classification by ReLU convolutional neural networks (ReLU CNNs) \citep{feng2021generalization,shen2022approximation}. 
Feng et al. \citep{feng2021generalization} considered the $p$-norm loss $\phi(t):= \max \{0,1-t\}^p, p\geq 1$ ($1$-norm loss is the hinge loss) and input data supported on the sphere $\mathbb{S}^{d-1}$ in $\RR^d$. 
The approximation error bound and the excess risk for a target function in the Sobolev space $W^r_p(\mathbb{S}^{d-1}) \ (\text{with } r>0, p\geq 1)$ are derived under a varying power condition. 
Two quantities including $\beta = \max\{1, (d+3+r)/(2(d-1))\}$ and $\gamma \in [0,1]$ are involved in the convergence rates.
More recently, \citep{shen2022approximation} established convergence rates of the excess risk for classification with a class of convex loss functions. They considered data drawn from $d-$dimensional cube $[0,1]^d$, which is compact.
We would like to point out that the excess risk estimates of the above-mentioned works all contain a constant term depending on $r$ or $\alpha$. For example, Feng et al. gave excess risk bounds  containing a constant $2^\beta$ that increases exponentially with the smoothness index $r>0$ \citep[Theorem 2]{feng2021generalization}.

Table \ref{table:1}  summarizes convergence rates of the excess risks in the existing literature and this paper, where the logarithm factors are dropped for brevity. 
The table is based on Table 3 in \citep{shen2022approximation}.

The regression function under a GMM is entire, implying that it is infinitely differentiable. 
\textbf {All the above-mentioned excess risk estimates in the literature increase to infinity when we take the smoothness index goes to be infinity, due to the constant terms involved. }
In our work, we use the analyticity of the GMM regression function and obtain a novel result. 
We establish an excess risk estimate of order $\mathcal{O}\left(n^{-\frac{q+1}{q+2}}\right)$, where $q\geq 0$. Our result does not depend on any smoothness index or the dimension $d$. From Table \ref{table:1}, we can see that our convergence rate is faster than all existing results in the literature.

In the field of statistics, there are two classes of classification approaches --- namely  
 generative classification and discriminative classification. For GMM, the generative approach is to train a classifier by estimating the parameters (means and covariances) of the Gaussian components and then derive the Bayes classifier using the parameter estimates. 
 On the other hand, the discriminative approach is to estimate the Bayes classifier from samples directly.
 Detailed and full definitions of generative and discriminative classifications can be found 
 in \citep{ng2001discriminative, christmann2002classification, li2015fast}. 
 We would like to point out that  the classification of GMM by neural network is a discriminative approach since the procedure does not involve the estimation of GMM  parameters.

\begin{table}[h!]
\centering
\begin{tabular}{ |c|c|c|c|c|}
\hline
Reference & Function Space & Loss & Condition & Rate \\ \hline \hline
\citep{tsybakov2004optimal} & \makecell[c]{Measurable\\Functions}& $0$-$1$ loss & \multirow{2}{*}{\Longstack{ $q$-noise condition;\\$\alpha$-Hölder decision \\ boundary} } & $\mathcal{O}\left(n^{-\frac{\alpha(q+1)}{\alpha(q+2)+(d-1)q}}\right)$\\ \cline{1-3} \cline{5-5}
 \multirow{2}{*}{\citep{kim2021fast}} & \multirow{2}{*}{ReLU FNNs} & \multirow{2}{*}{Hinge} & & $\mathcal{O}\left(n^{-\frac{\alpha(q+1)}{\alpha(q+2)+(d-1) (q+1)}}\right)$\\  \cline{4-5}
 & &&  \makecell[c]{ $q$-noise condition;\\ $\alpha$-Hölder $f_\rho$ }& $\mathcal{O}\left(n^{-\frac{\alpha(q+1)}{\alpha(q+2)+d}}\right)$  \\
 \hline
\multirow{3}{*}{\citep{feng2021generalization}}& \multirow{5}{*}{ReLU CNNs} & Hinge & \multirow{2}{*}{$f_\rho \in W^r_p (\mathbb{S}^{d-1})$}  & $\mathcal{O}\left(n^{-\frac{r}{2\beta(d-1) +r(2-\tau)}}\right)$  \\ \cline{3-3} \cline{5-5}
& &$p$-norm & & $\mathcal{O}\left(n^{-\frac{pr}{2(\beta+1)(d-1) +2pr(2-\tau)}}\right)$ \\ \cline{3-5} 
 & & $2$-norm & \makecell[c]{$f_\rho \in W^r_p (\mathbb{S}^{d-1})$;\\ $q$-noise condition} & $\mathcal{O}\left(n^{-\frac{2rq}{(2+q)((\beta+1)(d-1) +2r}}\right)$ \\\cline{1-1} \cline{3-5} 
  \multirow{3}{*}{\citep{shen2022approximation}}& & Hinge & \makecell[c]{$f_\rho \in W^r_p ([0,1]^d)$;\\ $q$-noise condition} & $\mathcal{O}\left(n^{-\frac{r(q+1)}{d+2r(q+1)}}\right)$ \\ \cline{3-5}
  & & Logistic & \multirow{2}{*}{$f_\rho\in W^r_p ([0,1]^d)$} & $\mathcal{O}\left(n^{-\frac{r}{2d+4r}}\right)$ \\ \cline{3-3}  \cline{5-5} 
 & & Least Square & & $\mathcal{O}\left(n^{-\frac{4r}{3d+16r}}\right)$ \\ 
  \hline\hline
  \makecell[c]{\textbf{Theorem \ref{main3}} \\\textbf{in this work}}&ReLU FNNs & Hinge & \makecell[c]{ $q$-noise condition; \\ \textbf{GMM} }& $\mathcal{O}\left(n^{-\frac{q+1}{q+2}}\right)$ \\
  \hline
\end{tabular}
\caption{\small {This table compares the excess risk in the existing literature and this paper. The logarithm factors are dropped for brevity.}}
\label{table:1}
\end{table}
We now turn our attention to the related works in  classifications of GMM in the statistics literature.
There is a long and continuing history of research on GMM. 
A complete review of GMM-related literature is not feasible. 
Here, we would like to focus on papers most related to our work, which are papers studying the classification of GMM. 

A handful of statistics papers have studied the binary classification of GMM by generative approaches \citep{li2015fast, li2017minimax}. 
 They considered a special kind of GMM ---  GMM consisting of only two Gaussian distributions in $\RR^d$  (each Gaussian distribution corresponds to one class) with identical covariance $\Sigma$. 
 Let $\mu_0$ and $\mu_1$ denote the Gaussian means. 
 Their analysis relies on the identical covariance assumption that the Bayes classifier is a linear function of a given sample $x\in \RR^d$. 
 Under a sparsity condition $\|\Sigma^{-1}(\mu_1-\mu_0)\|_0=s$ and some additional constraints on $\mu_0$ and $\mu_1$, Li et al. \citep{li2015fast} derived an excess risk estimate depending on $s,d$, and $n$, achieved by some regularized logistic regression classifiers (under the $0$-$1$ loss).

Subsequent to the above-mentioned work, Li et al. \citep{li2017minimax} derived the excess risk estimates   (under the $0$-$1$ loss) achieved by Fisher's linear discriminant under the same GMM setting, but with an additional assumption that the Gaussian distributions are isotropic (that is, $\Sigma = \sigma^2 I$ for some known $\sigma$).
Since the analysis in the papers \citep{li2015fast} and \citep{li2017minimax} require the Bayes classifier to be linear, their approaches cannot be applied to a general GMM setting.

In this work, we will first prove that deep ReLU neural networks can generate functions that approximate entire functions well on the whole unbounded domain. 
Using this approximation result and the fact that a GMM regression function is entire, we are able to establish an excess risk estimate of the classification of a general GMM. 
Our results apply to the binary classification of a general GMM without any assumptions on the number of Gaussian components or Gaussian parameters, especially the covariance or sparsity.

\subsection{Our Contributions}
To our best knowledge, this paper presents the best rate for the excess risk of classification with a GMM without restrictions on the domain, model parameters, or the number of Gaussian components.
Our main contributions can be summarized as follows.
{\allowdisplaybreaks
\begin{enumerate}
    \item Our first result (Theorem \ref{main1}) proves that there exists a ReLU DNN that can approximate the GMM discriminant function, which will be defined shortly, to any arbitrary accuracy as the depth of the network grows. 
This novel approximation error bound is given explicitly in terms of the network parameters and model parameters.
By approximating the discriminant function, such a  ReLU
DNN can, in turn, approximate the Bayes classifier $f_c$ well w.r.t. the misclassification error. 
\item Next, our second result (Theorem \ref{main2}) shows that there exists a  ReLU DNN that can approximate a general analytic function well.
This approximation result is of independent interest and may be useful in other problems.

\item We propose a special ReLU fully-connected neural network architecture for learning the Bayes classifier from GMM data (Section \ref{sec:methodology}). The design of the special  network is based on the results in Theorem \ref{main1}, which guarantees a small approximation error when the network is sufficiently deep.
With Hinge loss and a Tsybakov-type noise condition, our third result (Theorem \ref{main3}) establishes a fast convergence rate of the excess risk of order $\mathcal{O}\left(n^{-\frac{q+1}{q+2}} (\log n)^4\right)$ using the proposed ReLU network architecture, where $q\geq 0$ is the noise exponent. We do not require the domain to be bounded.
The convergence rate we obtained is faster than the existing results, and it does not depend on the dimension $d$, which demonstrates that ReLU networks can overcome the curse of dimensionality in classification.
\end{enumerate}}
The rest of the paper is organized as follows. In Section \ref{sec:formulation}, we describe the setup of the binary classification problem and the class of ReLU-activated neural networks used in classification. 
In Section \ref{sec:approx}, we establish convergence rates on approximating GMM discriminant functions (Theorem \ref{main1}) and general analytic functions (Theorem \ref{main2}) via deep ReLU networks. 
In Section \ref{sec:methodology}, we outline the construction of a special ReLU network architecture for learning the Bayes classifier from GMM data.
In Section \ref{sec: generalization}, we show that this network architecture achieves a fast convergence rate of excess risk under Tsybakov-type noise condition (Theorem \ref{main3}). 
The proofs of Theorem \ref{main1} and Theorem \ref{main3} are provided in Sections \ref{proofmain1} and \ref{sec:proofmain3}, respectively. 
Concluding remarks are given in Section \ref{sec:conclusions}.
Whenever possible, we relegate proofs of results, technical lemmas, and propositions to the Appendix.

\section{Problem Formulations} \label{sec:formulation}
In this section, we present the problem formulations. 
In Subsection \ref{GMM}, we describe our binary classification problem and the Gaussian Mixture Model from which our data is drawn. We then establish connections between the defined classification problem with the approximation of GMM discriminant function.
In Subsection \ref{subsec:Tsybakov}, we present a Tsybakov-type noise condition that is crucial for conducting  generalization analysis of neural network classifiers. 
Lastly, in Subsection \ref{subsec:relufnn}, we formulate the ReLU fully-connected  neural networks we consider for classification. 

\subsection{Gaussian Mixture Models (GMM) and their classifiers} \label{GMM}
Consider the binary classification problem with data from the domain $\mathcal{X} = \RR^d$ 
and output label $\mathcal{Y}=\{-1,1\}$. 
A joint distribution $\rho$ on $Z:= \mathcal{X} \times \mathcal{Y}$ can be decomposed into  the marginal distribution $\rho_X$ on $\mathcal{X}$ and the conditional distributions $\rho(\cdot|X)$ at $X\in \mathcal{X}$.

We are interested in learning a binary classifier $f: \RR^d \rightarrow \{-1,1\}$. 
As mentioned earlier, the Bayes classifier $f_c$ (also known as the Bayes decision function) takes the form (\ref{bayes}).
   Since the regression function $f_\rho$ is given by the conditional mean  $f_\rho(X) = \mathrm{E}[Y|X]$ at $X\in \mathcal{X}$, we observe that $f_c = \text{sgn} (f_\rho)$ (i.e., $f_c$ is the sign of $f_\rho$).
 
In this paper, we study the binary classification problem with data generated from a Gaussian Mixture Model (GMM). Let $P^+= \mathrm{P}(y=1)$ and $P^-= \mathrm{P}(y=-1) = 1-P^+$ denote the prior probabilities that a data point is drawn from the positive and negative class, respectively. Assume that members of each class are drawn from a mixture of Gaussian distributions. Also, assume that there are overall $K$ different Gaussian distributions to draw from, where $K$ is a positive integer at least $2$.
Each Gaussian distribution is assigned uniquely to one of the two classes.
The assignment of the Gaussian distributions to the two classes is represented by sets $\mathcal{T}^+$ and $\mathcal{T}^-$  
(i.e., $\mathcal{T}^+ \cap \mathcal{T}^- = \emptyset$ and  $\mathcal{T}^+ \bigcup \mathcal{T}^- = \{1,\ldots, K\}$).

Conditioned on being in the positive class, let $p_i$, $i\in \mathcal{T}^+$, denote the probability that the data comes from Gaussian distribution $i$. Let $\mathbf{card}(S)$ be the cardinality of the set $S$. Then, for the positive class, the data are drawn from a mixture of $\mathbf{card}(\mathcal{T}^+ )$ Gaussian distributions with densities         \begin{equation*}
            \mathcal{N}(x,\mu_i, \Sigma_i) = \frac{1}{\sqrt{(2\pi)^d|\Sigma_i|}} \exp\left(-\frac{1}{2}(x-\mu_i)^T (\Sigma_i)^{-1}(x-\mu_i)\right), \qquad i\in \mathcal{T}^+,
        \end{equation*}   
where $\mu_i \in \RR^d$ and $\Sigma_i \in \RR^{d \times d}$ denote the mean and the covariance matrix of Gaussian distribution $i$. 
We do not impose any constraints on the means or the covariances, except that we require the covariances to be non-degenerate. 
Similarly, data belonging to the negative class is drawn from $\mathbf{card}(\mathcal{T}^-)$ Gaussian distributions with densities $\mathcal{N}(x,\mu_i, \Sigma_i)$, for $i\in \mathcal{T}^-$.

For the positive class, we define its discriminant function $D^+ : \RR^d \rightarrow \RR$ in terms of the prior probabilities $\{p_i\}_{i \in \mathcal{T}^+}$:
\begin{align*} \label{dis}
        D^+(x) := \mathrm{P}(x, y=1) &= P^+\sum_{i\in \mathcal{T}^+}p_i  \mathcal{N}(x,\mu_i, \Sigma_i) ,
\end{align*}
while the discriminant function for the negative class $D^- : \RR^d \rightarrow \RR$ is defined in the same way:
\begin{align*} 
        D^-(x) :=  \mathrm{P}(x, y=-1) &=P^-\sum_{j\in \mathcal{T}^-}p_j  \mathcal{N}(x,\mu_j, \Sigma_j).
\end{align*}

Observe that the density function of the marginal distribution $\rho_X$ equals $D^+(X) + D^-(X)$. The regression function $f_\rho$ can thus be expressed as
\begin{equation} \label{frho}
    f_\rho(X) = \mathrm{E}[Y|X] = \mathrm P(Y=1|X) - \mathrm P(Y=-1|X)= \frac{D^+(X)-D^-(X)}{D^+(X)+D^-(X)},
\end{equation}
which implies $$f_c(X) = \text{sgn}(f_\rho(X))= \text{sgn}\left(D^+(X)-D^-(X)\right).$$ 
In other words, we can learn the Bayes classifier $f_c(X)$ by learning the sign of  $D^+(X) - D^-(X)$.
Denote by $D(X)=D^+(X) - D^-(X)$ our function of interest.
Notice that $D$ is of the form
\begin{equation}  \label{target}
    D(x) = D^+(x) - D^-(x) = \sum_{j=1}^K \beta_j \exp(-u_j(x)),\qquad x\in \RR^d,
\end{equation}
where $\beta_j$ are the constant coefficients, and $u_j(x)$ is a quadratic term given by
\begin{equation} \label{muj}
   u_j(x) =(x-\mu_j)^T (\Sigma_j)^{-1}(x-\mu_j)/2,\qquad \forall j=1,\ldots, K.
\end{equation} 
Essentially, $D$ is a linear combination of $K$ Gaussian functions. Since all exponential functions are analytic everywhere, we know that $D$ is entire and thus infinitely differentiable. This nice property of  $D$ enables us to learn a binary classifier that learns the Bayes classifier $f_c$ with fast learning rates. 

\subsection{Tsybakov-type Noise Condition}
\label{subsec:Tsybakov}
The Tsybakov noise condition is widely used to study the quantitative behaviors  of classification algorithms, e.g., \citep{tsybakov2004optimal, audibert2007fast, kim2021fast, feng2021generalization}. 
The original Tsybakov noise condition \citep{mammen1999smooth, tsybakov2004optimal} is stated in terms of the regression function $f_\rho$. It assumes that for some $c_0>0$ and $q\in [0, \infty)$, $\mathrm{P}(\{X\in \mathcal{X} : |f_\rho(X)| \leq t\}) \leq c_0t^q,\ \forall t>0$.
Throughout this paper, we assume a Tsybakov-type noise condition as follows.  
\begin{assumption} \label{assumption1}
Assume a \textbf{Tsybakov-type noise condition} that for some $c_0>0$ and $q\in [0,\infty)$, there holds
\begin{equation} \label{Tsybakov}
\mathrm{P}(\{X\in \mathcal{X} : |D(X)| \leq t\}) \leq c_0t^q,\qquad \forall t>0,
\end{equation}
where $q$ is often referred to as the \textbf{noise exponent}.
\end{assumption}

 Since $D(X) = (D^+(X) + D^-(X))f_\rho(X)$ in our GMM setting, and we know the factor  $D^+(X) + D^-(X)$ is bounded above on the whole space $\RR^d$ and bounded below on any bounded domain, (\ref{Tsybakov}) is of the same type of Tsybakov noise condition.

\subsection{Formulation of ReLU Fully-connected Neural Network (ReLU FNN)}
\label{subsec:relufnn}
Throughout this paper, we study deep fully-connected neural networks equipped with ReLU activation functions (ReLU FNNs), where the ReLU function is defined by $\sigma(a) = \max \{a,0\}$. We consider deep ReLU FNNs that take $d$-dimensional inputs and produce one-dimensional outputs. 

To mathematically define such a class of deep ReLU FNNs, we adopt the notation used by \citep{schmidt2020nonparametric} with slight modification. The network architecture $(L,\bm{p})$ consists of a positive integer $L$, which indicates the number of hidden layers (also known as the depth),  and a width vector $\bm{p} = (p_1, \ldots,p_L) \in \NN^L$ which indicates the width in each hidden layer. A deep ReLU FNN with architecture $(L,\bm{p})$ can be written in the following compositional form
\begin{equation} \label{relufnn}
    f(x) :=f_\theta(x) = a\cdot \sigma \left(W^{(L)} \cdot \sigma \left(W^{(L-1)}\ldots \sigma \left(W^{(1)} x + b^{(1)}\right)\ldots + b^{(L-1)}\right)+ b^{(L)}\right),
\end{equation}
where $x\in \RR^d$ is the input, $a\in \RR^{p_L}$ is the outer weight, $W^{(i)}$ is a $p_i \times p_{i-1}$ weight matrix with $p_0=d$, and $b^{(i)} \in \RR^{p_i}$ is the bias vector, for $i=1,\ldots,L$. Denote by $\bm{W} = \left\{W^{(i)}\right\}_{i=1}^L$ the set of all weight matrices, $\bm{b} = \left\{b^{(i)}\right\}_{i=1}^L$ the set of all bias vectors, and $\theta = \{\bm{W}, \bm{b},a\}$ the collection of all trainable parameters in the network. 

From now on, we use $\mathcal{F}(L,\bm{p})$ to represent the set of functions of the form (\ref{relufnn}) produced by a class of ReLU FNNs with architecture $(L,\bm{p})$. 
\section{Main Results I: Universal Approximation Theorem for GMM discriminant functions and general analytic functions} \label{sec:approx}

For $x\in \RR^d$, we denote by $\|x\|$ the standard Euclidean norm in $\RR^d$, unless otherwise specified. 
Recall that we assume our GMM model consists of $K$ Gaussian distributions in total, each with mean and covariance denoted by $\mu_j$ and $\Sigma_j$, for $j=1,\ldots, K$. We define $\mu^*$ to be
\begin{equation} \label{mu*}
    \mu^*= \max_{1\leq j\leq K} \|\mu_j\|.
\end{equation}
Let $\tilde{\sigma}$ be the smallest eigenvalue of all the $K$ covariance matrices $\{\Sigma_j\}_{j=1}^K$.

Our first result proves that there exists a ReLU FNN that approximates our function of interest $D(x) = D^+(x) - D^-(x)$ very well for $x$ on a cube $[-b,b]^d$, while keeping bounded outside. The proof of Theorem \ref{main1} is given in Section \ref{proofmain1}. 

\begin{theorem} \label{main1}
Let $b\geq 1,m,\ell \in \NN, C_K^0 \geq \frac{\sqrt{d}}{\sqrt{\tilde\sigma}}, C_K^1 \geq \frac{\mu^*}{\sqrt{\tilde\sigma}}$ and $R_2 \geq \sqrt{d}(b+1)C_K^0 + C_K^1$. Consider the GMM discriminant function $D$ defined in (\ref{target}). If $\ell \geq 2\log ((R_2)^2de)/\log (2)$, there exists a function $\widetilde{D}:\RR^d \rightarrow \RR$ implementable by a ReLU FNN $\in \mathcal{F}(L,\bm{p})$ with $L = m+3+(m+1)(
\ell + 1)$ and the width vector $\bm{p}\in \NN^L$ given by 
\begin{equation*}
   \begin{cases} \nonumber
   p_1 = 4d+2dK, p_2 = 1+dK ,\\
   p_3 = p_{m+4} = 4dK,\\
   p_{i +s(m+1)} = 10dK, &  \text{if } s=0,1, i=4,\ldots, m+3, \\
      p_{(j+2)(m+1)+4} = 5K\left(2^j\right), &\text{if } j=0, \ldots, \ell-1,\\
      p_{(j+2)(m+1)+i} = 11K\left(2^j\right), & \text{if } j=0, \ldots, \ell-1, i=5,\ldots,m+4,
    \end{cases}       
    \end{equation*}
and with all weights and biases taking values in $[-4,4]$ except for the $L$-th layer such that 
\begin{equation} \label{main1bounded}
    \left|\widetilde{D}(x)-D(x)\right| \leq 
 C_{R_2} \left(\frac{2^{\ell+1}}{4^{m+1}}+2^{-\frac{\ell (2^\ell)}{2}}\right),\qquad \forall x \in [-b,b]^d
\end{equation}
and \begin{equation} \label{main1unbounded}
    \left|\widetilde{D}(x)\right| \leq C_{R_2},\qquad \forall x \in \RR^d,
\end{equation}
where $C_{R_2} = \left(\sum_{j=1}^K |\beta_j|\right) e^{(R_2)^2d}$.
\end{theorem}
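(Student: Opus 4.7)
The plan is to realize the target $D(x) = \sum_{j=1}^K \beta_j \exp(-u_j(x))$ as a composition $\widetilde D(x) = \sum_{j=1}^K \beta_j \widetilde E(\widetilde u_j(x))$, where $\widetilde u_j$ is a ReLU approximation of the quadratic form $u_j$ and $\widetilde E$ is a ReLU approximation of $t \mapsto e^{-t}$ on a bounded range, with the possibly large coefficients $\beta_j$ placed only in the final linear layer --- consistent with the statement that only the $L$-th layer weights are unconstrained. I would build the network in two sequential stages: a first stage of $m+3$ layers that produces each $\widetilde u_j$, followed by $\ell+1$ blocks of $m+1$ layers implementing $\widetilde E$ via a low-degree polynomial approximation followed by repeated squaring. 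ReLU-based clipping at the end of each stage will be essential to get the global bound (\ref{main1unbounded}) on all of $\RR^d$ in addition to the cube estimate (\ref{main1bounded}).

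For Stage 1, I would exploit that $\Sigma_j^{-1}$ is symmetric positive definite, so a Cholesky factorization $\Sigma_j^{-1} = A_j^\top A_j$ yields $u_j(x) = \tfrac12 \sum_{k=1}^d L_{j,k}(x)^2$ with each $L_{j,k}$ affine; the constants $C_K^0 \ge \sqrt{d}/\sqrt{\tilde\sigma}$ and $C_K^1 \ge \mu^*/\sqrt{\tilde\sigma}$ are exactly what is needed to show $|L_{j,k}(x)| \le R_2$ on $[-b,b]^d$. The first two layers will use the identity $a = \sigma(a) - \sigma(-a)$ to prepare and compress the $dK$ values $L_{j,k}(x)$ in parallel, explaining the widths $p_1 = 4d+2dK$ and $p_2 = 1+dK$ (the unit channel carrying a constant bias). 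Layers $3$ through $m+3$ then apply, in parallel across $j,k$, Yarotsky's depth-$(m+1)$ ReLU approximation of $t \mapsto t^2$ on $[-R_2,R_2]$, whose pointwise error is $O(R_2^2 / 4^{m+1})$; summing with weight $\tfrac12$ yields $\widetilde u_j(x)$ with error $O(dR_2^2/4^{m+1})$ on $[-b,b]^d$, and a ReLU clipping at $R_2^2 d$ keeps $\widetilde u_j \in [0, R_2^2 d]$ globally.

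For Stage 2, I would write $e^{-t} = (e^{-t/2^\ell})^{2^\ell}$ and approximate $e^{-s}$ for $s = t/2^\ell$ by a low-degree Taylor polynomial $P_0$, then iterate $E_{j+1}(t) = (E_j(t))^2$ for $j=0,\ldots,\ell-1$, implementing each squaring by a Yarotsky $x^2$-module of depth $m+1$ applied in parallel to the $K$ channels. The hypothesis $\ell \ge 2\log((R_2)^2 de)/\log 2$ makes $t/2^\ell$ small enough on $[0, R_2^2 d]$ that, through a Stirling-type estimate on the Taylor remainder $(eR_2^2 d/2^\ell)^{2^\ell}$, the polynomial error collapses to the stated $2^{-\ell 2^\ell/2}$ term. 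The width scaling $5K \cdot 2^j$ and $11K \cdot 2^j$ at the $j$-th block simply tracks the doubling effective polynomial degree of the running approximation, plus Yarotsky's fixed internal width. A final linear layer carrying the weights $\beta_j$ then aggregates the $K$ approximate exponentials.

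The hard part will be error control through the $\ell$ squaring iterations: the map $t \mapsto t^2$ is not contractive on $\RR$, so naive compounding gives a $2^{O(\ell)}$ blow-up of the per-level Yarotsky error. The key is to ensure each intermediate $E_j$ lies in $[0,1]$, where $t \mapsto t^2$ is $2$-Lipschitz; the exact iterates $e^{-t/2^{\ell-j}}$ satisfy this automatically on $[0, R_2^2 d]$, and I would enforce it on the approximations by an additional ReLU clipping. This turns the per-level amplification into a factor of just $2$ and yields exactly the $2^{\ell+1}/4^{m+1}$ term in (\ref{main1bounded}). The global bound (\ref{main1unbounded}) then follows because the same clipping ensures each $\widetilde E(\widetilde u_j(x))$ is bounded by $e^{R_2^2 d}$ uniformly in $x \in \RR^d$, so $|\widetilde D(x)| \le (\sum_j |\beta_j|)\, e^{R_2^2 d} = C_{R_2}$. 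The remaining bookkeeping --- rescaling Yarotsky's weights into $[-4,4]$ and matching the explicit widths $(L, \bm p)$ exactly --- is routine but must be done with care.
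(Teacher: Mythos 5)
Your Stage 1 (factoring $\Sigma_j^{-1}$, forming the affine coordinates, Yarotsky squaring in parallel, and clipping to obtain global boundedness) is essentially the paper's construction, which uses $(\Sigma_j)^{-1/2}$ and a trapezoid truncation gate $\Psi_b$ multiplied in via the product gate rather than a direct clip, but the two are equivalent in spirit and both deliver (\ref{main1unbounded}). The genuine gap is in Stage 2. The paper approximates $e^{-t}$ on $[0,(R_2)^2d]$ by its degree-$2^\ell$ Taylor polynomial, realizing all monomials $u,u^2,\dots,u^{2^\ell}$ simultaneously with the expansive binary-tree monomial gate; the $2^{-\ell 2^\ell/2}$ term is precisely the Stirling bound on the tail $\sum_{k>2^\ell}((R_2)^2d)^k/k!$, which is where the hypothesis $\ell \geq 2\log((R_2)^2de)/\log 2$ enters. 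Your scaling-and-squaring scheme with a \emph{fixed low-degree} base polynomial $P_0$ of degree $N$ cannot reproduce this term: the intrinsic error of $\bigl(P_0(t/2^\ell)\bigr)^{2^\ell}$ versus $\bigl(e^{-t/2^\ell}\bigr)^{2^\ell}$ is, after the factor-$2$-per-level amplification you correctly identify, of order
\begin{equation*}
2^\ell\,\frac{((R_2)^2d/2^\ell)^{N+1}}{(N+1)!} \;=\; \frac{((R_2)^2d)^{N+1}}{(N+1)!}\,2^{-\ell N},
\end{equation*}
which for fixed $N$ decays only singly exponentially in $\ell$, whereas $2^{-\ell(2^\ell)/2}$ is doubly exponential in $\ell$. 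The remainder $(e(R_2)^2d/2^\ell)^{2^\ell}$ you invoke is the tail of a degree-$2^\ell$ expansion in the \emph{unscaled} variable, not of a low-degree expansion in $t/2^\ell$; to actually reach the stated bound you would need $N$ of order $2^{\ell-1}$, at which point you are computing high-degree monomials and need the binary-tree monomial gate anyway. This is not cosmetic: Theorem \ref{main3} chooses $m=\ell(2^{\ell-1})$ exactly so that $2^{\ell+1}/4^{m+1}$ balances $2^{-\ell(2^\ell)/2}$, and a residual $2^{-\ell N}$ term would dominate and destroy the final rate.

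A secondary symptom of the same issue: in a pure repeated-squaring scheme each block performs only $K$ squarings (one scalar per Gaussian channel), so the per-block width would be constant in $j$, not the doubling $5K(2^j)$, $11K(2^j)$ prescribed by the theorem. The doubling widths exist because level $j$ of the monomial gate computes the $2^j$ \emph{new} products $h_{2^j+i}=\Phi(h_{2^j},h_i)$ in parallel; your reading of the width pattern as "tracking the doubling effective polynomial degree of the running approximation" does not correspond to the architecture you describe.
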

Here are some interpretations of Theorem \ref{main1}. On one hand, Inequality (\ref{main1bounded}) shows that when the data is bounded in a $d$-dimensional cube, the function $\widetilde{D}$ closely approximates $D$ to any arbitrary accuracy as the depth of the ReLU network grows (i.e., as $m,\ell$  goes to infinity). On the other hand, even when the data is unbounded in $\RR^d$, Inequality (\ref{main1unbounded}) shows that the function value  $\widetilde{D}(x)$ is bounded by some constant. The error bounds given in (\ref{main1bounded}) and (\ref{main1unbounded}) are important tools for deriving the excess risk bound later in Theorem \ref{main3}. 
 
Observe that $D(x) = \sum_{j=1}^K \beta_j \exp(-u_j(x))$ is an entire function. In light of this, we can extend the above universal approximation theorem to a more general class of analytic functions. We first recall the definition of an analytic function. If a function $t$ is analytic throughout a disk $|z - u_0| <R$, centered at $u_0$ and with radius $R>0$, then $t$ has the power series representation
\begin{equation} \label{taylor}
    t(z) = \sum_{i=0}^\infty \frac{t^{(i)}(u_0)}{i!} (z-u_0)^i, \qquad |z- u_0| <R.
    \end{equation}
In other words, series (\ref{taylor}) converges to $t(z)$  when $z$ lies in the aforementioned open disk.

Now, we present our result on approximating a univariate analytic function via a deep ReLU FNN. 
\begin{theorem} \label{main2}
\label{analytic}
Let $m,\ell \in \NN$, $ 1<R_1<R_0\leq \infty$. Consider a univariate function $t(u)$ on $(-R_0, R_0)$, which can be extended to an analytic function on the disk $|z|<R_0$. For input $x\in [-1,1]$, there exists a function $F$ implementable by a deep ReLU FNN $\in \mathcal{F}(\ell(m+1),\bm{p})$ with width vector $\bm{p}\in \NN^{\ell(m+1)}$ given by, for  $j=0\ldots, \ell-1$,
\begin{equation*}
  \begin{cases} \nonumber
      p_{j(m+1)+1} = 5\left(2^j\right), \\
      p_{j(m+1)+i} = 11\left(2^j\right), & \text{for } i=2,\ldots,m+1,
    \end{cases}  
\end{equation*}
and with all weights and biases taking values in $[-4,4]$ except the last layer such that
\begin{equation}
\sup_{x\in [-1,1]}\left|F(x) - t(x)\right|  \leq
C_{R_1}\left(\frac{1}{4^{m+1}}
+ \frac{1}{R_1^{2^\ell}} \right),
\end{equation}
where $C_{R_1} = \frac{2^{7+ \lfloor\log 4 / \log R_1\rfloor }}{(R_1-1)}\sup_{i\in \ZZ_+} \left|\frac{t^{(i)}(0)}{i!} (R_1)^i\right| $. 
\end{theorem}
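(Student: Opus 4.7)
The plan is to decompose the total error into \emph{truncation error} plus \emph{network realization error} and treat each separately. Set $a_i := t^{(i)}(0)/i!$ and $P_N(x) := \sum_{i=0}^{N-1} a_i x^i$ with $N := 2^\ell$. Since $t$ extends analytically to $|z|<R_0$ and $R_1<R_0$, Cauchy's coefficient estimates on the circle $|z|=R_1$ give $|a_i|\leq M R_1^{-i}$ where $M := \sup_{i\geq 0} |a_i R_1^i|$. Hence for $x\in [-1,1]$,
\begin{equation*}
|t(x)-P_N(x)| \leq M\sum_{i=N}^\infty R_1^{-i} = \frac{M}{R_1-1}\,R_1^{-2^\ell},
\end{equation*}
which already matches the second term of the target bound. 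What remains is to approximate the polynomial $P_N$ on $[-1,1]$ by a ReLU FNN of the prescribed architecture to uniform error of order $4^{-(m+1)}$.

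For the network realization I would use the Yarotsky square-function gadget $\mathrm{sq}_m$, a constant-width ReLU FNN of depth $m+1$ with weights and biases in $[-4,4]$ approximating $x\mapsto x^2$ on $[-1,1]$ to uniform error at most $4^{-(m+1)}$. Polarization then yields an approximate multiplier $\widetilde{\mathrm{mult}}_m(u,v)$ computable by a single block of $m+1$ ReLU layers. I compute $P_N$ by the binary splitting
\begin{equation*}
Q_{2n}(x) = Q_n^{L}(x) + x^n\,Q_n^{R}(x),
\end{equation*}
unrolled $\ell$ times from degree-$1$ leaves. At stage $j \in \{0,\ldots,\ell-1\}$ the network must carry the doubling-power $x^{2^{j}}$ alongside the $2^{j}$ partial sums entering pairwise multiplications, producing exactly the width scaling $5\cdot 2^j$ in the first layer of the block and $11\cdot 2^j$ in the subsequent $m$ layers, and total depth $\ell(m+1)$.

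The step I expect to be most delicate is controlling the propagation of the per-gadget error $4^{-(m+1)}$ through the $\ell$ levels of approximate multiplication while simultaneously keeping every intermediate quantity inside $[-1,1]$, the range on which $\mathrm{sq}_m$ is accurate. My plan is to normalize each partial sub-polynomial by an appropriate power of $R_1$ before feeding it into the next $\widetilde{\mathrm{mult}}_m$ block; the factor $M = \sup_i |a_i R_1^i|$ in the prefactor $C_{R_1}$ and the combinatorial constant $\lfloor \log 4/\log R_1\rfloor$ both arise from this renormalization. After normalization the per-stage errors telescope geometrically and sum to $O(4^{-(m+1)})$; combining with the truncation bound gives the theorem. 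The bookkeeping aligning the $R_1$-weighted normalization with the domain constraint of the Yarotsky gadget, and then verifying that the resulting weights still lie in $[-4,4]$ except in the final linear layer, is the main technical obstacle.
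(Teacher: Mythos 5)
Your first half is correct and is exactly the paper's opening move: with $M=\sup_i|a_iR_1^i|$ the tail of the Taylor series beyond degree $2^\ell$ is bounded by $\frac{M}{R_1-1}R_1^{-2^\ell}$ on $[-1,1]$, which gives the second term of the bound. The divergence, and the gap, is in the realization step. The paper does \emph{not} evaluate the truncated polynomial by Estrin-style binary splitting $Q_{2n}=Q_n^{L}+x^nQ_n^{R}$; it uses the expansive monomial gate of Proposition \ref{monomialgate}, which outputs \emph{all} approximate monomials $h_1(u),\dots,h_{2^\ell}(u)$ via $h_{2^j+k}=\Phi(h_{2^j},h_k)$ and forms the linear combination $t(0)+\sum_{i=1}^{2^\ell}a_i h_i(u)$ only in the final (unconstrained) layer. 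Your binary-splitting tree is \emph{contractive}: starting from $2^{\ell-1}$ degree-one leaves, stage $j$ performs on the order of $2^{\ell-1-j}$ multiplications, so its width profile decreases like $2^{\ell-j}$, whereas the theorem prescribes widths $5\cdot 2^{j}$ and $11\cdot 2^{j}$ that \emph{grow} with $j$. Your claim that the splitting "produces exactly" the stated width scaling is therefore false as written; the prescribed architecture is the expansive tree, not the contractive one.

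The second, more substantive issue is the error-propagation claim. Repeated approximate squaring does not give per-gadget errors that "telescope geometrically to $O(4^{-(m+1)})$": the paper's induction shows $|h_{2^j+k}(u)-u^{2^j+k}|\le \frac{2^{j+1}-1}{4^{m+1}}$, i.e.\ the error of the degree-$i$ monomial grows essentially linearly in $i$, reaching order $2^{\ell}/4^{m+1}$ at the top. The uniform $O(4^{-(m+1)})$ bound is recovered only because the coefficients decay like $R_1^{-i}$ and, for $j\ge\max\{5,\log 4/\log R_1\}$, one has $R_1^{2^j}\ge 4^{j}$, which makes $\sum_j 2^{j+1}R_1^{-2^j}$ converge and produces the constant $2^{7+\lfloor\log 4/\log R_1\rfloor}/(R_1-1)$. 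You correctly sense that the $R_1$-decay must enter here, but your proposed mechanism — renormalizing intermediate sub-polynomials by powers of $R_1$ inside the network — is both unexecuted and unnecessary in the paper's construction: the product gate $\Phi$ maps $[-1,1]^2$ into $[-1,1]$ \emph{exactly} (not merely approximately), so every $h_k(u)$ automatically stays in the admissible domain with all weights in $[-4,4]$, and the powers of $R_1$ appear only as coefficients $a_iR_1^i\cdot R_1^{-i}$ in the final linear layer. As it stands, your plan leaves the decisive estimate — why the accumulated multiplicative error is $O(4^{-(m+1)})$ rather than $O(2^\ell 4^{-(m+1)})$, and why the renormalized intermediates stay in the gadget's accuracy domain with admissible weights — unproved.
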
 

Theorem \ref{main2} shows that there exists a  deep ReLU FNN that can approximate a general analytic function defined on $[-1,1]$ to any arbitrary accuracy as the ReLU network grows.
This approximation result is of independent interest and may be useful in other problems. The proof of Theorem \ref{main2} can be found in Appendix \ref{sec:analytic}.

\section{Methodology} \label{sec:methodology}
To solve the GMM classification problem effectively, our primary goal is to learn the optimal Bayes classifier $f_c = \text{sgn}(D)$ well, where $D$ is the GMM discriminant function defined earlier in (\ref{target}). 
To do so, we propose a special ReLU FNN architecture for learning the Bayes classifier from GMM data.
This special ReLU FNN has an expansive binary-tree structure. 
The design of this network architecture is based on the results in Theorem \ref{main1}, which guarantees a small approximation error for a sufficiently deep ReLU network.
For brevity, we refer to this special network as the Expansive Binary-Tree ReLU network (EBTnet).

In this section, we first outline the design of the EBTnet (Subsection \ref{section:EBTnet}). Then, we describe a preprocessing subnetwork (Subsection \ref{section:preprocess}). Lastly, in Subsection \ref{section:hypothesis}, we define our final network architecture and the hypothesis space for classification. 
 
 We will show, later in Section \ref{sec: generalization}, that this network architecture (i.e., EBTnets followed after the preprocessing subnetwork) achieves a good excess risk  bound.

\subsection{Expansive Binary-tree network} \label{section:EBTnet}
Recall that Theorem \ref{main1} shows that there exists a function  $\widetilde D$ implementable by a ReLU FNN that approximates $D$ well. From the Taylor's expansion of exponential functions, we know that $D(x)= \sum_{j=1}^K \beta_j \exp(-u_j(x))$ can be expressed as a linear combination of monomials.

The results in Theorem \ref{main1} guide us to construct a monomial gate, which is a ReLU FNN designed to approximate linear combinations of monomial functions. This network has an expansive binary-tree structure.  We start by introducing two important building blocks of the monomial gate, namely the squaring gate $\widehat{f}_m$ and the product gate $\widehat{\Phi}$.   

The squaring gate $\widehat f_m =: \widehat f_{m,\theta}$ is a ReLU FNN $\in \mathcal{F}(m,(5,5,\ldots,5))$, where $m\in \NN$. Each $\widehat{f}_m$ has parameters -- weights $\bm{W}$, bias vectors $\bm{b}$ and outer weights $a$ -- all taking values on $[-4,4]$. This network architecture, introduced by \citep{yarotsky2017error}, is used to approximate the quadratic function $f(u)=u^2$ for any input $u \geq 0$. It is demonstrated in \citep[Proposition 2]{yarotsky2017error} that there exists a specific function $f_m(u) \in \mathcal{F}(m,(5,5,\ldots,5))$ with all parameters bounded by $4$ such that 
\begin{equation} \label{Yarotskyprop2}
 f_m(u) \in [0,1]\ \hbox{ and }\ |u^2-f_m (u)|\leq 4^{-(m+1)}\qquad \forall u\in [0,1].
\end{equation}
\textbf{Here and later, we use the hat sign in $\widehat f_m$ to denote a network output function with flexible parameter choices while $f_m$ without the hat sign denotes a specific network output function with specific parameter choices.}
\begin{figure}[h]
    \centering
    \includegraphics[width=10cm]{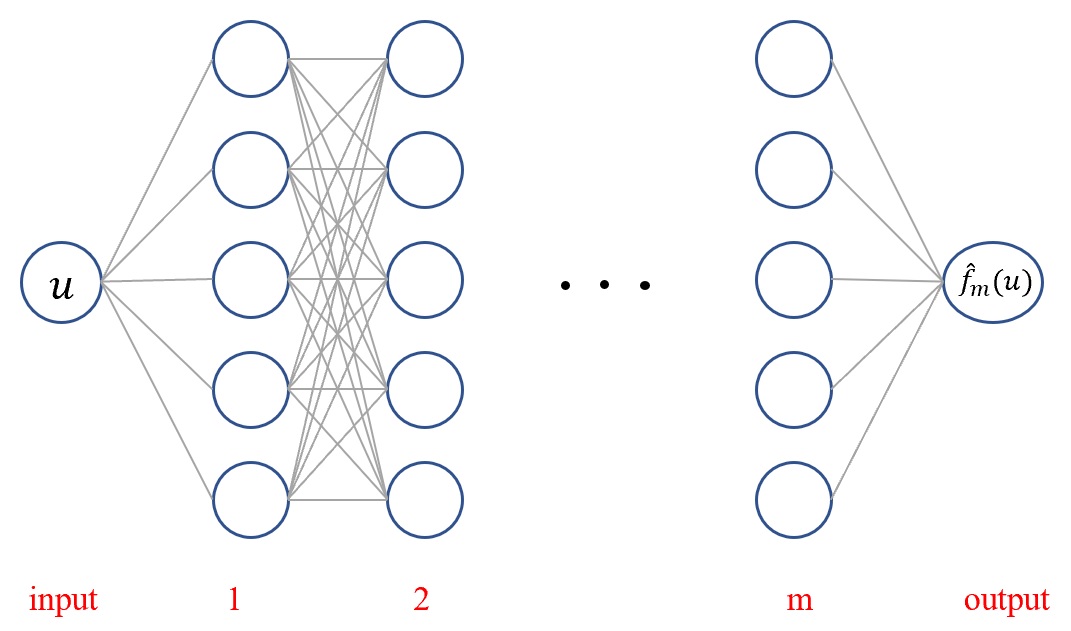}
    \caption{\small {An illustration of the network architecture $\widehat{f}_m$ for approximating $u^2$ for input $u\geq0$. $\widehat{f}_m$ is a ReLU FNN with $m$ hidden layers, and each layer has 5 neurons. The red numbers below the architecture indicate the order of hidden layers.}}
    \label{fig:fm}
\end{figure}

The main idea for constructing this specific squaring gate is to approximate $u^2$ by the network output $f_m(u):= u-\sum_{s=1}^{m}\frac{g_s(u)}{2^{2s}}$. Here, $g_s(u) := \underbrace{g \circ g \circ \cdots g(u)}_{s \hbox{ folds}} $ is a $s$-compositions of hat  functions $g:[0,\infty) \rightarrow [0,1]$ defined as
\begin{equation} \label{gdef} 
    g(u)=2\sigma(u)-4\sigma\left(u-\frac{1}{2}\right)+2\sigma(u-1)=\left\{
\begin{array}{ll}
2u,&\quad  \hbox{if} \ 0\le u\le \frac{1}{2}, \\
2(1-u),&\quad  \hbox{if} \ \frac{1}{2} < u \le 1 ,\\
0,&\quad  \hbox{if}\ u>1.
\end{array}\right.
\end{equation}
We see that the function $g$ can be implemented by a single-layer ReLU network with input $u\geq 0$. It follows that we can construct the squaring gate $\widehat f_m$, which belongs to $\mathcal{F}(m,(5,5,\ldots,5))$. A more detailed discussion on $\widehat f_m$ is given later in Subsection \ref{productgate}.

Next, invoking the identity $$u\cdot v = \left|\frac{u+v}{2}\right|^2 -  \left|\frac{u-v}{2}\right|^2$$ and $\widehat{f}_m$, we are able to construct a product gate $\widehat{\Phi} =: \widehat{\Phi}_\theta$, which is a ReLU FNN belonging to $\mathcal{F}(m+1,(4,10,10,\ldots,10))$ for $m\in \NN$. A similar network construction can be found in \citep[Lemma D.2.2]{suhapproximation}. The first hidden layer of the product gate $\widehat{\Phi}$ takes $u,v \in \RR$ as inputs and outputs $|\frac{u+v}{2}|$ and $|\frac{u-v}{2}|$ via $|u| = \sigma(u) + \sigma (-u)$. Then, $|\frac{u+v}{2}|$ and $|\frac{u-v}{2}|$ become inputs for two identical $\widehat{f}_m$ respectively. We know that $\widehat{f}_m$ takes $m$ hidden layers (each with a width of $5$) to output $\widehat{f}_m(|\frac{u+v}{2}|) \approx |\frac{u+v}{2}|^2$ and $\widehat{f}_m(|\frac{u-v}{2}|)\approx |\frac{u-v}{2}|^2$, respectively. These outputs are merged together via 
\begin{equation*}
 \widehat{\Phi}(u,v) = \widehat{f}_m\left(\left|\frac{u+v}{2}\right|\right) - \widehat{f}_m\left(\left|\frac{u-v}{2}\right|\right).   
\end{equation*}

We prove, later in Proposition \ref{prop:productgate} (in Subsection \ref{productgate}), that a specific function $\Phi\in \mathcal{F}(m+1,(4,10,10,\ldots,10))$ with a specific $f_m$ can approximate the multiplication $u\cdot v$ to any arbitrary accuracy for $m$ sufficiently large.
\begin{figure}[h]
    \centering   \includegraphics[width=14cm]{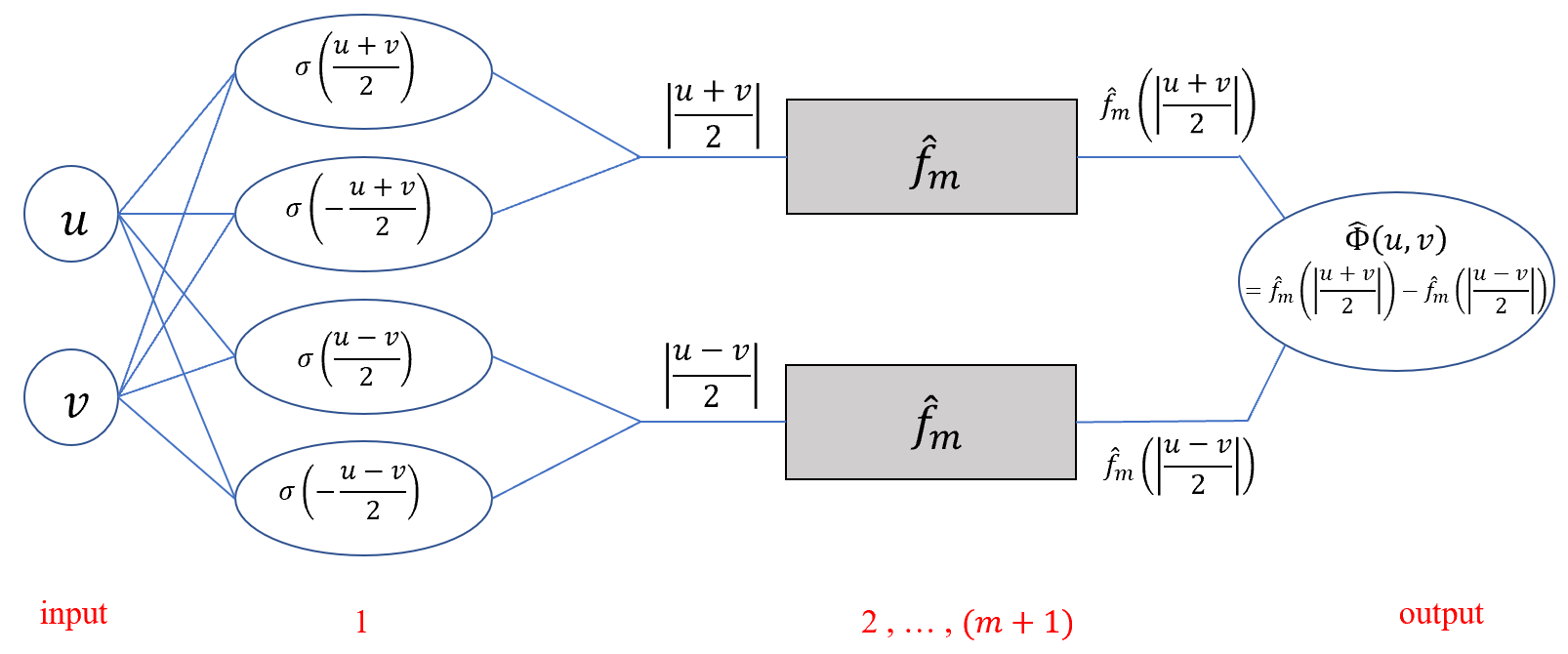}
    \caption{{\small An illustration of $\widehat{\Phi}$ for approximating $u\cdot v$ with inputs $(u,v)\in \RR^2$. $\widehat{\Phi}$ is a ReLU FNN with $m+1$ hidden layers. The first hidden layer generates $|\frac{u+v}{2}|$ and $|\frac{u-v}{2}|$, which become the inputs of two squaring gates $\widehat{f}_m$ respectively. The red numbers below the architecture indicate the order of hidden layers.}}
    \label{fig:Phi}
\end{figure}

Now we are in a position to introduce the monomial gate -- a ReLU network architecture that employs an expansive binary-tree structure. The aforementioned product gate $\widehat{\Phi}$ is an important building block of this network. The monomial gate is constructed to approximate  monomial functions of degree $k\in\NN$, that is $u^k$ with input $u \in [-1,1]$. This ReLU FNN, as illustrated in Figure \ref{fig:ebt}, belongs to $\mathcal{F}(\ell(m+1),\bm{p})$, where the width vector $\bm{p} \in \NN^{\ell(m+1)}$ is given by, for $j=0\ldots, \ell-1$,
\begin{equation*}
   \begin{cases} \nonumber
      p_{j(m+1)+1} = 5\left(2^j\right), \\
      p_{j(m+1)+i} = 11\left(2^j\right), & \text{for } i=2,\ldots,m+1.
    \end{cases}  
\end{equation*}

Notice that $\widehat\Phi(u,u) = \widehat f_m(|u|) - \widehat f_m(0)$.
Define the functions $\left\{\widehat h_k(u)\right\}_{k=1}^{2^\ell}$ on $\RR$ for $\ell\in \NN$  by  \begin{equation} \label{hk1}
 \left\{\widehat h_k(u)\right\}_{k=1}^2 = \left\{\widehat h_1(u)=u,\  \widehat h_2(u) =\widehat\Phi(u,u)=\widehat f_m(|u|)- \widehat f_m(0)\right\},   
\end{equation} and iteratively for $j=1,\ldots, \ell-1$, \begin{equation}\label{hk2}
 \widehat h_{2^j+i}(u) = \widehat \Phi\left(\widehat h_{2^j}(u), \widehat h_i(u)\right),\qquad i=1,\ldots, 2^j.   
\end{equation}

The key idea for constructing the monomial gate is to employ an expansive binary-tree structure. The network comprises $\ell$ subnetworks, each equipped with product gates $\widehat \Phi$. The 1st subnetwork takes the input $u$ and outputs  $\widehat h_2(u) =\widehat\Phi(u,u)$ and  $u$. The outputs of the 1st subnetwork become the inputs of the 2nd subnetwork, which outputs
\begin{equation*}
    \left\{u, \widehat h_2(u)\right\} \rightarrow \left\{u, \widehat h_2(u), \widehat h_3(u) = \widehat\Phi(h_2(u),u), \widehat h_4(u) = \widehat\Phi(h_2(u),h_2(u))\right\}.
\end{equation*}
The $k$-th subnetwork has $2^{k-1}$ product gates $\Phi$. This subnetwork takes in the outputs from the $(k-1)$-th subnetwork and computes
\begin{equation*}
    \left\{u,\widehat h_2(u), \ldots,\widehat h_{2^{k-1}}(u)\right\} \rightarrow \left\{u,\widehat h_2(u), \ldots,\widehat h_{2^{k-1}}(u), \ldots,\widehat h_{2^{k}}(u) \right\}.
\end{equation*}
\begin{figure}[h]
    \centering
    \includegraphics[width=17cm]{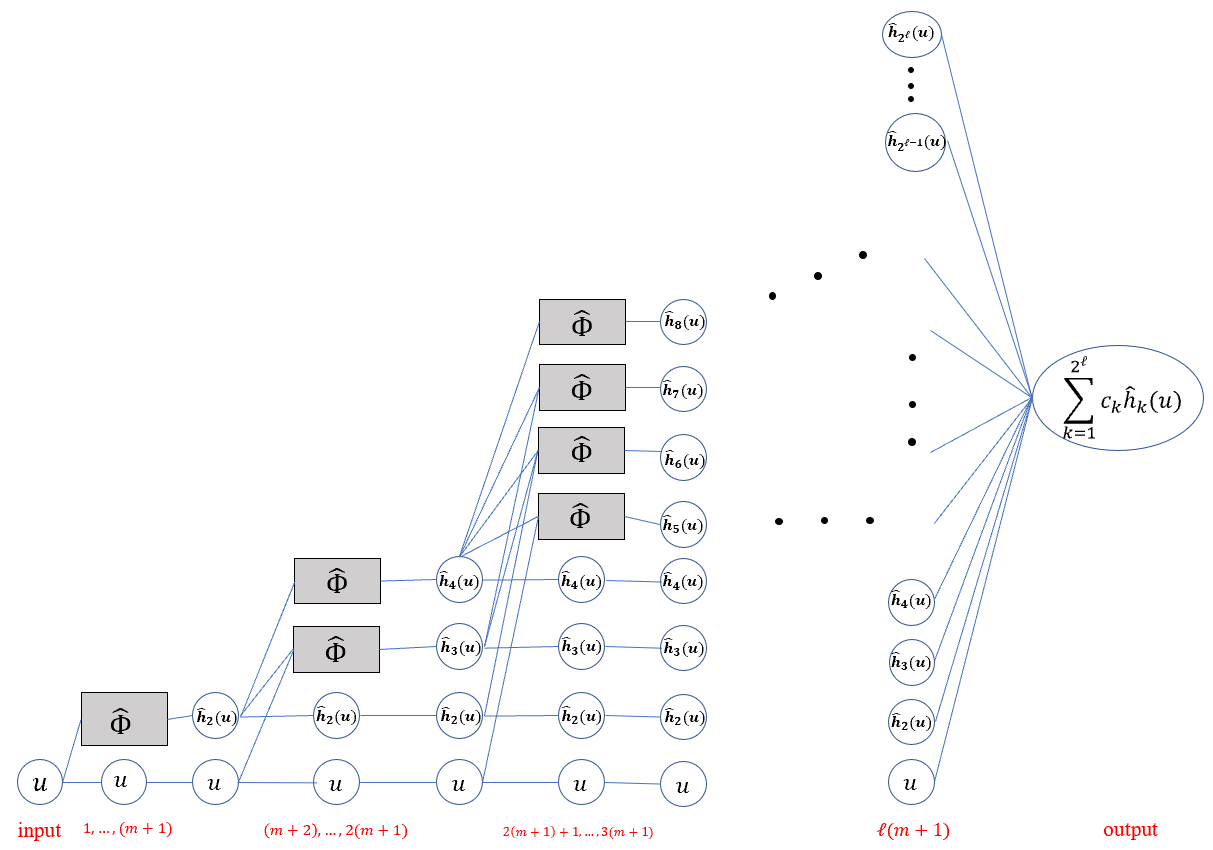}
    \caption{\small {An illustration of the monomial gate for approximating the set of monomials $\{u^k\}_{k=1}^{2^\ell}$ with input $u\in [-1,1]$. 
    The network has an expansive binary-tree structure. It comprises $\ell$ subnetworks equipped with the product gate $\widehat\Phi$. The width of the $k$-th subnetwork is doubled from that of the $ (k-1)$-th subnetwork for $k=1,\ldots,\ell$.}} 
    \label{fig:ebt}
\end{figure}

\textbf {In this way, we can see that the width of the $\bm k$-th subnetwork is doubled from that of the $\bm {(k-1)}$-th subnetwork for $\bm{k=1,\ldots,\ell}$, thereby forming the expansive binary-tree structure.}
The final output of the monomial gate (i.e., an EBTnet) is $\sum_{k=1}^{2^\ell}c_k \widehat h_k(u)$, which is a linear combination of $\widehat h_k(u)$ with some constant coefficients $c_k$ for  $k=1,\ldots,2^\ell$.

Later in Proposition \ref{monomialgate} (in Subsection \ref{subsec:monomial}), we prove that there exists a specific set of functions $\{h_k(u)\}_{k=1}^{2^\ell}$, implemented by this EBTnet with specific parameter choices, approximating the set of monomials $\{u_k\}_{k=1}^{2^\ell}$ up to any arbitrary accuracy, given that the depth of the network is sufficiently large. The approximation error bound is obtained by induction. 
\subsection{Preprocessing subnetwork: High Dimensional Truncation of Unbounded Data}
\label{section:preprocess}
 Notice that input data generated from GMM is unbounded. Later when we conduct a generalization analysis of ReLU network classifiers, we need to estimate the covering numbers of a set of output functions generated from the EBTnet. 
 However, the unboundedness of input data makes estimating covering numbers infeasible.
 We do not choose to make the assumption that data is bounded because Gaussian distributions are unbounded, and real-world features, such as images and speeches, are typically unbounded.
 
 To circumvent the challenges that come with unbounded data, we adopt a specific preprocessing subnetwork which is placed prior to the EBTnets. This preprocessing subnetwork contains a high-dimensional truncation unit.    
 This truncation unit, which will be defined shortly, manually projects  $d$-dimensional unbounded input to a bounded domain for $d\in \NN$.   

We first consider the case $d=1$. Let $b >0 $, define a univariate trapezoid-shaped function $T_b:\RR \rightarrow \RR$ by
{\allowdisplaybreaks
\begin{align} \label{truncation}
     T_b(u) &:= \sigma(u+b+1)-\sigma(u+b)-\sigma(u-b)+\sigma(u-b-1)   \\
     &= \begin{cases} \nonumber
      u+b+1, & \text{if } -b-1 \leq u < -b, \\
      1, & \text{if }  -b \leq u \leq b,\\
      -u+b+1, & \text{if }  b < u \leq b+1,\\
      0, & \text{if } u<-b-1\ \text{or } u> b+1.
    \end{cases}  
\end{align}}
With $u\in \RR$, $T_b$ can be implemented by a single-layer ReLU network without any free parameter. 

Now consider $d>1$. We extend the univariate trapezoid-shaped function to a higher dimensional space. With input $u = (u_1,u_2,\cdots,u_d)\in \RR^d$, define a  $d$-dimensional truncation  function $\Psi_b:\RR^d \rightarrow \RR$ by 
\begin{equation}\label{dtruncation}
   \Psi_b(u):= \sigma \left\{\sum_{i=1}^d T_b(u_i)-(d-1)\right\} .
\end{equation}
The truncation function $\Psi_b$ alone can be implemented by a ReLU FNN with $1$ hidden layer (of width $4d$) without any free parameter, as illustrated below: 
\begin{equation*}
    \begin{bmatrix}
    u_1\\
    \vdots\\
    u_d
    \end{bmatrix}
    \rightarrow
    \begin{bmatrix}
    \sigma(u_1+b+1)\\
    \sigma(u_1+b)\\
    \sigma(u_1-b)\\
    \sigma(u_1-b-1)\\
    \vdots\\
    \vdots\\
    \sigma(u_d+b+1)\\
    \sigma(u_d+b)\\
    \sigma(u_d-b)\\
    \sigma(u_d-b-1)
    \end{bmatrix}
    \rightarrow
    \Psi_b(u)= \sigma \left\{\sum_{i=1}^d T_b(u_i)-(d-1)\right\}.
\end{equation*}
A similar truncation network is given in  \citep{shaham2018provable}. 
The following Lemma presents the truncation property of $\Psi_b$. Its proof is given in Appendix \ref{app:prooflocalization}.

\begin{lemma} \label{localization}
Let $b>0$, and $\Psi_b$ be defined by (\ref{dtruncation}). With input $u\in \RR^d$, we have 
\begin{equation*}
 0 \leq \Psi_b(u)\leq 1   
\end{equation*} and 
\begin{equation*}
    \Psi_b(u) = \begin{cases} 
      0, & \text{if } u \notin [-b-1, b+1]^d, \\
      1, & \text{if }  u\in [-b,b]^d.
    \end{cases} 
\end{equation*}
\end{lemma}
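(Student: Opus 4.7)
The plan is to first establish the pointwise properties of the one-dimensional trapezoid $T_b$, and then lift them to $\Psi_b$ by elementary manipulations of the definition (\ref{dtruncation}).

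First I would verify directly from the piecewise formula (\ref{truncation}) that $0 \le T_b(u) \le 1$ for every $u\in\RR$, with $T_b(u)=1$ precisely on $[-b,b]$ and $T_b(u)=0$ precisely outside $[-b-1,b+1]$. This is a routine check of the four linear pieces and the flat segments, and follows immediately from writing $T_b$ as the ReLU combination in (\ref{truncation}); the key observation is that on $[-b,b]$ only the first two ReLUs are active and they cancel to give $1$, while outside $[-b-1,b+1]$ either all four terms vanish or they telescope to $0$.

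Next I would plug these bounds into (\ref{dtruncation}). For the upper bound, $T_b(u_i)\le 1$ for every coordinate yields $\sum_{i=1}^d T_b(u_i) - (d-1) \le 1$, and since $\sigma(s)\le \max\{s,0\}\le 1$ whenever $s\le 1$, we obtain $\Psi_b(u)\le 1$; the lower bound $\Psi_b(u)\ge 0$ is automatic from $\sigma \ge 0$.

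For the two equality cases, if $u\in[-b,b]^d$ then $T_b(u_i)=1$ for every $i$, so the argument of $\sigma$ becomes exactly $d-(d-1)=1$, giving $\Psi_b(u)=\sigma(1)=1$. Conversely, if $u\notin [-b-1,b+1]^d$, then at least one coordinate $u_{i_0}$ lies outside $[-b-1,b+1]$, so $T_b(u_{i_0})=0$; combined with $T_b(u_i)\le 1$ for the remaining $d-1$ coordinates, the argument of $\sigma$ is at most $(d-1)-(d-1)=0$, and hence $\Psi_b(u)=\sigma(\text{non-positive})=0$.

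I do not expect a serious obstacle here: the whole statement is essentially bookkeeping with the ReLU function. The only point requiring a touch of care is the upper bound $\Psi_b(u)\le 1$, since $\sigma$ does not a priori cap its output at $1$; this is resolved by the telescoping $-(d-1)$ in the definition, which is exactly what forces the argument of the outer $\sigma$ to stay in $(-\infty,1]$.
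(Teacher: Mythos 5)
Your proposal is correct and follows essentially the same route as the paper's proof: establish $0\le T_b\le 1$ with the two equality regimes, then observe that the shift by $-(d-1)$ forces the argument of the outer $\sigma$ into $(-\infty,1]$, equal to $1$ exactly when every coordinate contributes $T_b(u_i)=1$ and nonpositive as soon as one coordinate gives $T_b(u_{i_0})=0$. Your version simply spells out the piecewise check of $T_b$ in more detail, which the paper takes as immediate from (\ref{truncation}).
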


Lemma \ref{localization} tells us that applying $\Psi_b$ to any data  $u \in \RR^d$ will project the data onto the interval $[0,1]$. Particularly, if $u$ lies outside the cube $[-b-1,b+1]^d$, $\Psi_b$ will truncate $u$ to 0 (i.e., makes $u$ vanishes). 

Now, we apply the  truncation function $\Psi_b$ to our data, which is $x\in \RR^d$ generated from some GMM. Recall that we assume our model has a total of $K$ Gaussian distributions, each with mean and covariance denoted by $\mu_j$ and $\Sigma_j$, for $j=1,\ldots, K$. Since $x$ is unbounded, it follows that  $$u_j(x)=(x-\mu_j)^T (\Sigma_j)^{-1}(x-\mu_j)/2, \qquad \forall j=1,\ldots,K,$$ are also unbounded. To learn the GMM discriminant function  $ D(x) = \sum_{j=1}^K \beta_j \exp(-u_j(x))
$ (previously defined in (\ref{target})) by ReLU FNNs, we need to input $u_j(x)$ into the EBTnet. But since $u_j(x)$ is unbounded, we first apply $\Psi_b$ to project $u_j(x)$ onto a bounded domain. 

Let $r_{i,j}(x)$ be the $i$-th component of the vector $(\Sigma_j)^{-1/2}\frac{(x-\mu_j)}{\sqrt{2}}$ for $i=1,\ldots,d$ and $j=1,\ldots,K$. 
They are affine functions of $x$ and thus can be implemented by a ReLU FNN via $r_{i,j}(x)=  \sigma(r_{i,j}(x))-\sigma(-r_{i,j}(x))$.  
We use a ReLU FNN belonging to $\mathcal{F}(2,(2,1))$ with the hypothesis space 
\begin{equation} \label{H0}
 \mathring{\mathcal{H}}= \left\{ \mathring W \cdot x + \mathring b: \mathring W \in \RR^d, \| \mathring W\| \leq C_K^0,  |\mathring b| \leq C_K^1 \right \}
\end{equation}
to learn $r_{i,j}(x)$ by $\widehat r_{i,j}(x) = \mathring W_{i,j} \cdot x + \mathring b_{i,j}$, where $C_K^0$ and $C_K^1$ are tunable parameters. 

Instead of directly applying $\Psi_b$ truncate to $\widehat r(x)$, we make use of $\Psi_b$ and the product gate $\widehat \Phi$ simultaneously. 
\textbf{More specifically, we construct a preprocessing subnetwork to compute $\bm {\widehat \Phi(\Psi_b(x),\widehat r_{i,j}(x))}$ with input $\bm {x\in\RR^d}$.}

Let us explain the purpose of computing $\widehat \Phi(\Psi_b(x),\widehat r_{i,j}(x))$ here.
From Lemma \ref{localization}, we know that
\begin{align}\label{boundtruncat}
\begin{cases}
    \widehat \Phi(\Psi_b(x), \widehat r_{i,j}(x)) = \widehat \Phi(0, \widehat r_{i,j}(x))= 0,  &\text{if } x \notin [-b-1, b+1]^d,   \\
    \widehat \Phi(\Psi_b(x), \widehat r_{i,j}(x))=\widehat \Phi(1, \widehat r_{i,j}(x)),  &\text{if }  x \in [-b, b]^d.
\end{cases}
\end{align}
for all $i=1,\ldots, d$ and  $j=1,\ldots,K$.

Recall that $\Phi(u,v)$ is a specific network architecture that is used to approximate the multiplication $u\cdot v$.  Equations (\ref{boundtruncat}) tells us that if $x$ lies in $[-b, b]^d$, the preprocessing subnetwork generates $\widehat \Phi(1,\widehat r_{i,j}(x))$, which is used to approximate $1 \cdot r_{i,j}(x) = r_{i,j}(x)$. On the other hand, if $x$ lies outside the $d$-dimensional cube $[-b-1, b+1]^d$, the preprocessing subnetwork outputs $0$. This is how the preprocessing network makes $\widehat r_{i,j}(x)$ vanishes when $x\in \RR^d$ is too large.

The following flowchart (\ref{flowchart}) illustrates the preprocessing subnetwork. It is equipped with one truncation unit $\Psi_b$ and $d\cdot K$ product gates $\widehat \Phi$. 
This subnetwork alone belongs to $\mathcal{F}(m+3, (4d+2dK, 1+dK, 4dK, 10dK, 10dK,\cdots, 10dK))$. 
The final outputs is the collection of  $\widehat \Phi(\Psi_b(x), \widehat r_{i,j}(x))$  for $i=1,\ldots, d,$ and $ j=1,\ldots,K$.
\begin{equation}\label{flowchart}
    \begin{bmatrix}
    x_1\\
    \vdots\\
    x_d
    \end{bmatrix}
    \rightarrow \cdot
\rightarrow 
\begin{bmatrix}
\Psi_b(x)\\
\widehat r_{1,1}(x)\\
\vdots\\
\widehat r_{d,1}(x)
\\
\vdots\\
\widehat r_{1,K}(x)
\\
\vdots\\
\widehat r_{d,K}(x)
\end{bmatrix}
\rightarrow \cdots \rightarrow 
\begin{bmatrix}
\widehat \Phi(\Psi_b(x),\widehat r_{1,1}(x))\\
\vdots\\
\widehat \Phi(\Psi_b(x),\widehat r_{d,K}(x))
\end{bmatrix}
\end{equation} 

Next, we will define our final ReLU network architecture. In the final network architecture, the preprocessing subnetwork is placed at the beginning. 

\subsection{Defining the Hypothesis Space}
\label{section:hypothesis}
Here, we define our final neural network architecture and the hypothesis space $\mathcal{H}$ for classification. Recall the Bayes classifier $f_c$ given by 
$$f_c(x) = \text{sgn}(D(x)) = \text{sgn}\left( \sum_{j=1}^K \beta_j \exp(-u_j(x))\right),$$
where $\beta_j$ are constant coefficients given by the model, and $u_j(x)=(x-\mu_j)^T (\Sigma_j)^{-1}(x-\mu_j)/2$.

The function space $\mathcal{H}$ consists of  functions implementable by a ReLU FNN and closely approximate $f_c$ for a given $x\in \mathcal{X}$. 
Before we introduce $\mathcal{H}$, we first define the function $\sigma_\lambda:\RR \rightarrow [-1,1]$ for some $0<\lambda\leq 1$ to be the linear combination of four scaled ReLU units given by  
\begin{align} \label{sigmalambda}
    \sigma_\lambda (u) &:= \sigma\left(\frac{u}{\lambda}\right) - \sigma\left(\frac{u}{\lambda} -1\right) - \sigma\left(-\frac{u}{\lambda}\right)+\sigma\left(-\frac{u}{\lambda}+1\right) 
    =
    \begin{cases} 
      1, & \text{if } u \geq \lambda, \\
      \frac{u}{\lambda}, & \text{if }  u\in [-\lambda,\lambda),\\
      -1, & \text{if } u < -\lambda.
    \end{cases} 
\end{align}
 We can see that if  $\lambda$ is close to $0$, $\sigma_\lambda (u)$ is close to $\text{sgn}(u)$. In other words, we use the function $\sigma_\lambda$ to approximate the sign function. 

Recall that $\mu^*= \max_{1\leq j\leq K} \|\mu_j\|$.
Here, we give the definition of our hypothesis space $\mathcal{H}$. 
We will show, later in Theorem \ref{main3}, that functions in $\mathcal{H}$ can indeed learn $f_c$ with a fast learning rate (i.e., the excess risk converges to $0$ fast). 
\begin{definition}[Hypothesis Space $\mathcal{H}$] \label{hypothesisspace}
Let $0 <\lambda \leq 1$, $b\geq 1$ and $m,\ell \in \NN$. 
Also let $R_2 \geq 1+ \frac{1}{\sqrt{\tilde{\sigma}}} (\sqrt{d}(b+1)+\mu^*)$.
 Define $h^*_{k,j}:\RR^d \rightarrow \RR$ by
\begin{equation}
    h^*_{k,j}(x) = \widehat h_{k}\left(\frac{1}{d}\sum_{i=1}^d \widehat \Phi \left(\widehat \Phi(\Psi_b(x),\widehat r_{i,j}(x)), \widehat \Phi(\Psi_b(x),\widehat r_{i,j}(x))\right)\right).
\end{equation}
With input $x \in \mathcal{X}=\RR^d$, the hypothesis space $\mathcal{H}$ is given by 
\begin{align*}
    \mathcal{H}&= \Biggl\{\sigma_\lambda \left(\sum_{j=1}^{K}\sum_{k=1}^{2^\ell}c_{k,j} h^*_{k,j}(x)  + c_0 \right): |c_{k,j}|,|c_0| \leq  C_K, \bm{W}= \{W_i\}_{i=1}^m \in [-4,4]^{5 \times 5},\\ 
    & \qquad \bm{b}=\{b_i\}_{i=1}^m \in [-4,4]^5, a\in [-4,4]^5, \widehat r_{i,j} \in \mathring{\mathcal{H}}  \Biggl\}.    
\end{align*}
Here, $C_K$ is a positive tunable parameter equal to or greater than $C_{R_2}=\sum_{j=1}^K |\beta_j| \exp \left((R_2)^2d\right)$.
\end{definition}

\begin{remark}\label{remark}
The hypothesis space $\mathcal{H}$ consists of functions implementable by ReLU FNNs $\in \mathcal{F}(L, \bm{p})$ where $L= m+3+ (\ell+1)(m+1)+1$ and the width vector $\bm{p}\in \NN^L$ given by \begin{equation*}
   \begin{cases} \nonumber
   p_1 = 4d+2dK, p_2 = 1+dK ,\\
   p_3 = p_{m+4} = 4dK,\\
   p_{i +s(m+1)} = 10dK, &  \text{if } s=0,1, i=4,\ldots, m+3, \\
      p_{(j+2)(m+1)+4} = 5K\left(2^j\right), &\text{if } j=0, \ldots, \ell-1,\\
      p_{(j+2)(m+1)+i} = 11K\left(2^j\right), & \text{if } j=0, \ldots, \ell-1, i=5,\ldots,m+4,\\
      p_L = 4.
    \end{cases}       
    \end{equation*} 
The beginning of the network is a preprocessing subnetwork that outputs $\widehat \Phi(\Psi_b(x), \widehat r_{i,j}(x))$  for $i=1,\ldots, d,$ and $ j=1,\ldots, K$. 
Each of these outputs is used to approximate $r_{i,j}(x)$. Next, each output from the preprocessing subnetwork enters one product gate $\widehat \Phi$ (i.e., we have $K$ product gates $\widehat \Phi$ followed by the preprocessing subnetwork). This group of product gates together outputs $\frac{1}{d}\sum_{i=1}^d \widehat \Phi \left(\widehat \Phi(\Psi_b(x),\widehat r_{i,j}(x)), \widehat \Phi(\Psi_b(x),\widehat r_{i,j}(x))\right)$ for $j=1,\ldots,K$, which is used to approximate $ \frac{1}{d}\sum_{i=1}^d(r_{i,j}(x))^2= \frac{1}{d}u_j(x)$. 
Then, each of these products enters a monomial gate (EBTnet), and all outputs of the $K$ monomial gates are merged together via $\sum_{j=1}^{K}\sum_{k=1}^{2^\ell}c_{k,j} h^*_{k,j}(x) +c_0$. 
If $\ell$ is large, this output can be used to approximate a linear combination of $\{(u_j(x))^k\}_{k=0}^\infty$, and thereby approximate $D(x)=\sum_{j=1}^{K}\beta_j\exp(-u_j(x))$ with suitable choices of coefficients $c_{k,j}$ and $c_0$. The last layer of the network is a scaling unit $\sigma_\lambda$, which is close to the sign function if $\lambda$ is close to $0$. We use the output function $f(x)\in \mathcal{H}$ to learn our target Bayes classifier $f_c(x) = \text{sgn}(D(x))$.
\end{remark}


\section{Main Results II: Generalization Analysis of ReLU Network Classifier} \label{sec: generalization}

Consider the hypothesis space $\mathcal{H}$ defined above in Definition \ref{hypothesisspace}.
For any function $f$ in $\mathcal{H}$, the misclassification error w.r.t. the probability measure $\rho$ is defined as 
\begin{equation}
    R(f) := \mathrm{E}[\mathbbm{1}\{Y\cdot \text{sgn} (f(X))=-1\}]=
    \mathrm{P}(Y\cdot \text{sgn} (f(X))=-1), \qquad \forall f\in \mathcal{H}
\end{equation}

To show functions in $\mathcal{H}$ can learn $f_c$ sufficiently well, we aim to find a $f\in \mathcal{H}$ that minimizes the excess risk $R(f)- R(f_c) \geq 0$. In practice, the probability measures $\rho$ are usually unknown. The classifier $f$ will be learned based on a random sample $z:= \{(x_i, y_i)\}_{i=1}^n$ drawn independently and identically distributed from $\rho$, where $n$ is the sample size. We find a classifier that minimizes the empirical risk:
\begin{equation*}
    \hat{f}_n:= \arg \ \min_{f\in \mathcal{H}}\ \frac{1}{n}\sum_{i=1}^n \mathbbm{1}\{y_i \cdot \text{sgn}(f(x_i))=-1\}.
\end{equation*}

However, the empirical risk w.r.t. $0$-$1$ loss -- the natural loss function for binary classification -- is non-continuous and non-convex \citep{bartlett2006convexity}.
Instead of minimizing the $0$-$1$ loss, we can adopt some convex loss function $V: \mathcal{X} \times \mathcal{Y} \rightarrow [0,\infty)$ to make computation feasible.  The Hinge loss, defined as  $\phi(t):= \max \{0, 1-t\}$, is one of the most commonly used loss functions in maximum-margin classifications, most notably the support vector machine \citep{rosasco2004loss}. 
The generalization error associated with the Hinge loss $\phi$ for $f$ is defined by
\begin{align*}
    \varepsilon(f) = \int_Z \phi(yf(x))d\rho &= \int_{\mathcal{X}} \int_{\mathcal{Y}} \phi(yf(x)) d\rho(y|x) d\rho_X   \\ 
    &= \int_{\mathcal{X} } \left(\phi(f(x))\mathrm P(y=1|x) +  \phi(-f(x))\mathrm P(y=-1|x) \right)d\rho_X .
\end{align*}
Given a sample $z=\{(x_i, y_i)\}_{i=1}^n$, define the empirical risk of $f$ w.r.t. $\phi$ over $z$ as 
\begin{equation}
    \varepsilon_z (f):=  \frac{1}{n} \sum_{i=1}^n \phi(y_if(x_i)). 
\end{equation}

Our goal is to find a classifier $f \in \mathcal{H}$ that minimizes the empirical risk w.r.t. $\phi$, that is, the empirical risk minimizer (ERM) defined as
\begin{equation} \label{ERM}
    f_z:=  \arg \min_{f\in \mathcal{H}} \varepsilon_z (f) 
\end{equation}

The well-known Comparison Theorem in classification in \citep{zhang2004statistical} suggests that, for the Hinge loss $\phi$ and any measurable function $f:\mathcal{X}\rightarrow \RR$,
\begin{equation}
    R(\text{sgn}(f))-R(f_c) \leq \varepsilon(f) - \varepsilon(f_c).    
\end{equation}
In other words, we can minimize the excess generalization error (also known as the excess $\phi$-error) $\varepsilon(f_z) - \varepsilon(f_c)$ to, in turn, bound the excess risk  $R(\text{sgn}(f_z))-R(f_c)$. To derive the convergence rate of the excess generalization error, we assume the Tsybakov-type noise condition given earlier in Assumption \ref{assumption1}. As a recap, Assumption \ref{assumption1} asserts that for some $c_0>0$ and $q\in [0,\infty)$, there holds $$\mathrm{P}(\{X\in \mathcal{X} : |D(X)| \leq t\}) \leq c_0t^q,\qquad \forall t>0,$$ 
where $q$ is often referred to as the noise exponent.


Our third result derives the convergence rate of the excess generalization error $\varepsilon(f_z) - \varepsilon(f_c)$, which in turn gives the convergence rate of the excess risk. 
In other words, the following theorem establishes the learning rate of ReLU networks on learning the Bayes classifier $f_c$ of GMM data. 
To our best knowledge, this is the first generalization error bound for classifications under a general GMM setting without constraints on model parameters or the number of Gaussian components. 
In particular, we do not impose any sparsity condition on the covariance matrices. 

\begin{theorem}\label{main3}
Let $n \geq 3, q>0, b \geq 1, C_K^0 \geq \frac{\sqrt{d}}{\sqrt{\tilde\sigma}}, C_K^1 \geq \frac{\mu^*}{\sqrt{\tilde\sigma}},R_2 = 2\sqrt{d}(C_K^0 + C_K^1)b, C_K=c_2 \exp(c_3 b^2)$ with
$c_2 \geq \sum_{j=1}^K |\beta_j|, c_3 \geq 4d^2(C_K^0 + C_K^1))^2$. Let $f_z$ be the empirical risk minimizer of the hypothesis space $\mathcal{H}$ with the chosen $C_K$.
Suppose the noise condition (\ref{Tsybakov}) holds for some noise exponent $q$ and constant $c_0>0$. Take $\lambda = n^{-\frac{1}{q}}, m= \ell(2^{\ell-1}), b = \sqrt{c_q^\prime \ell (2^{\ell-1})}$ with $c_q^\prime = \frac{q(\log 2)}{\frac{1}{16\sigma^*} + c_3q}$ and $\ell$ to be the smallest integer satisfying  $\ell 2^{\ell-1} \geq \frac{16\sigma^*}{c_q^\prime} \log n $. 
For any $0 < \delta<1$, with probability $1-\delta$, there holds,
\begin{equation}
    R(\text{sgn} (f_z))-R(f_c) \leq \varepsilon(f_z) - \varepsilon(f_c) \leq C_{q,d}\log \left(\frac{2}{\delta}\right)(\log n)^4 \left(\frac{1}{n}\right)^{\frac{q+1}{q+2}},
\end{equation}
where $C_{q,d}$ is a positive constant independent of $n$ or $\delta$.
\end{theorem}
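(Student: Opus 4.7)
The plan is to follow the classical error-decomposition strategy from statistical learning theory. Pick a deterministic $f^* \in \mathcal{H}$ and write
\begin{equation*}
\varepsilon(f_z) - \varepsilon(f_c) = [\varepsilon(f_z) - \varepsilon_z(f_z)] + [\varepsilon_z(f^*) - \varepsilon(f^*)] + [\varepsilon_z(f_z) - \varepsilon_z(f^*)] + [\varepsilon(f^*) - \varepsilon(f_c)],
\end{equation*}
where the ERM property kills the third bracket, the first two form the sample error, and the fourth is the approximation error. Together with the Comparison Theorem of \citep{zhang2004statistical}, this bounds the misclassification excess $R(\mathrm{sgn}(f_z)) - R(f_c)$ as well. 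The parameters $m,\ell,b,\lambda,C_K$ that index $\mathcal{H}$ must be balanced so approximation and estimation errors meet at the target rate $n^{-(q+1)/(q+2)}$.

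For the approximation term I would take $f^{\ast} = \sigma_\lambda \circ \widetilde{D}$, where $\widetilde{D}$ is the network produced by Theorem \ref{main1} with the prescribed $m,\ell,R_2$; by the construction of $\mathcal{H}$ in Section \ref{sec:methodology}, $f^{\ast} \in \mathcal{H}$. Split $\RR^d$ into the interior cube $A_1 = [-b,b]^d$, the transition slab $A_2 = [-b-1,b+1]^d \setminus A_1$, and the tail $A_3$. On $A_3$ the Gaussian decay of the marginal gives $\rho_X(A_3) \leq C e^{-b^2/(2\sigma^*)}$, and the choice $b^2 = c_q'\,\ell\,2^{\ell-1}$ with $\ell\,2^{\ell-1} \gtrsim \log n$ makes this tail of order $n^{-1}$. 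On $A_1$, the bound $|\widetilde{D} - D| \leq C_{R_2}(2^{\ell+1}/4^{m+1} + 2^{-\ell 2^\ell/2})$ from Theorem \ref{main1} together with $m = \ell\,2^{\ell-1}$ and the calibrated inequality $c_3 c_q' < \log 2$ yields $C_{R_2}\cdot 2^{-\ell 2^{\ell-1}} \lesssim n^{-1}$. Passing to the hinge excess via the split $\{|D|>\lambda\}$ vs.\ $\{|D|\leq \lambda\}$ and applying the noise condition (\ref{Tsybakov}) gives $\varepsilon(f^{\ast}) - \varepsilon(f_c) \lesssim \lambda^{q+1} + \lambda^{-1} n^{-1}$, which with $\lambda = n^{-1/q}$ is of order $n^{-(q+1)/q}$.

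For the sample error, the noise condition delivers a variance bound of the form $\mathbb{E}[(\phi(Yf(X))-\phi(Yf_c(X)))^2] \leq c_q(\varepsilon(f)-\varepsilon(f_c))^{q/(q+1)}$ for any $f$ with $\|f\|_\infty \leq 1$. Feeding this into a Bernstein-type one-sided concentration inequality and a peeling / ratio-probability argument over $\mathcal{H}$ yields a sample error of order $[\log\mathcal{N}(\mathcal{H},1/n,\|\cdot\|_\infty)/n]^{(q+1)/(q+2)}$ plus a copy of the approximation error and a $\log(2/\delta)$ confidence factor. The covering number is controlled by a standard weight-perturbation estimate: all internal weights lie in $[-4,4]$, the $\mathcal{O}(K\,2^\ell)$ outer coefficients $c_{k,j},c_0$ lie in $[-C_K,C_K]$, and $\sigma_\lambda$ is $(1/\lambda)$-Lipschitz, so $\log\mathcal{N}(\mathcal{H},\epsilon) \lesssim N_\theta \cdot \log(C_K L / (\lambda\epsilon))$ with $N_\theta$ the total parameter count. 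With $\ell\,2^{\ell-1} \asymp \log n$ and $m = \ell\,2^{\ell-1}$, one finds $N_\theta = \mathcal{O}((\log n)^2)$ and $\log(C_K/\lambda) = \mathcal{O}(\log n)$, producing the final $(\log n)^4$ factor.

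The main obstacle is the interplay between the unbounded domain and the network-dependent constant $C_{R_2} = e^{(R_2)^2 d}$ of Theorem \ref{main1}: the Gaussian tail forces $b$ large, but enlarging $b$ inflates $C_{R_2}$ exponentially in $b^2$, which must be dominated by the super-exponential decay $2^{-\ell 2^{\ell-1}}$ of the approximation error. This is precisely why the choice $b^2 = c_q'\,\ell\,2^{\ell-1}$ with $c_q' = q(\log 2)/(1/(16\sigma^*) + c_3 q)$ — engineered so that $c_3 c_q' < \log 2$ while simultaneously $b^2/\sigma^* \gtrsim \log n$ — is essential. A secondary delicate point is the preprocessing subnetwork: one must use Lemma \ref{localization} to guarantee that $\widehat{\Phi}(\Psi_b(x),\widehat{r}_{i,j}(x))$ vanishes outside the truncation region, so the uniform bound $|f^{\ast}| \leq 1$ and the Lipschitz covering estimate are legitimate on all of $\RR^d$, not merely on $A_1 \cup A_2$.
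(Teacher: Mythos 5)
Your decomposition, your choice of $f^{*}=\sigma_\lambda(\widetilde D)$, the three-way split of $\RR^d$, the variance bound plus ratio/Bernstein concentration, the covering-number accounting, and the calibration of $b^2=c_q'\ell 2^{\ell-1}$ against $C_{R_2}=e^{(R_2)^2d}$ are all exactly the route the paper takes (Sections \ref{subsec:decomposition}--\ref{subsec:approxerror} and Lemmas \ref{secondmoment}, \ref{estimation1}, \ref{estimation2}, \ref{lemma:app}), and your identification of the tension between the Gaussian tail and the $e^{\Theta(b^2)}$ approximation constant is precisely the crux of the parameter choices.

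There is, however, one concrete error in your approximation step: the claimed bound $\varepsilon(f^{*})-\varepsilon(f_c)\lesssim \lambda^{q+1}+\lambda^{-1}n^{-1}$ does not follow from the splitting you describe, and taken literally it destroys the theorem — with $\lambda=n^{-1/q}$ the second term is $n^{(1-q)/q}$, which diverges for $q\le 1$ and for every $q>0$ exceeds the target rate $n^{-(q+1)/(q+2)}$ (e.g.\ $n^{-1/2}$ versus $n^{-3/4}$ at $q=2$). What the split actually yields is the paper's Lemma \ref{lemma:app}: on $\{|D|\le\tau\}$ the noise condition gives $2c_0\tau^q$; on $\{|D|>\tau\}$ the sup-norm control $\|\widetilde D-D\|_{L^\infty[-b,b]^d}\le\tau$ forces $\mathrm{sgn}(\widetilde D)=f_c$, so $\sigma_\lambda(\widetilde D)$ can differ from $f_c$ only where $|\widetilde D|<\lambda$, hence where $|D|<\tau+\lambda$, contributing $2c_0(\tau+\lambda)^q$; the tail contributes $\lesssim e^{-b^2/(16\sigma^*)}$. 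With the stated choices $\tau^q\lesssim n^{-1}$, $(\tau+\lambda)^q\lesssim n^{-1}$, and $e^{-b^2/(16\sigma^*)}\le n^{-1}$, so the approximation error is $O(n^{-1})$ — no $\lambda^{-1}$ factor appears. Replacing your formula by this one, the rest of your argument assembles into the paper's proof.
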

We can see that the excess risk bound depends on $q$, the noise exponent. Theorem \ref{main3} tells us that when $q=0$ (no noise assumption), the convergence rate of the excess risk is of $\mathcal{O}\left((1/n)^{\frac{1}{2}} (\log n)^4 \right)$. 
When $q$ increases (more noise), the convergence rate of the excess risk approaches $\mathcal{O}\left((1/n)(\log n)^4 \right)$.

In the classic literature, convergence rates of the excess risk of order $\mathcal{O}((1/n)^{\frac{1}{2}})$ with $0$-$1$ loss were established \citep{mammen1999smooth}. 
Moreover, it is proven in \citep{tsybakov2004optimal} that, when  the ERM is taken over all measurable classifiers, the minimax lower bound of the excess risk is $\mathcal{O}\left(n^{-\frac{\alpha(q+1)}{\alpha(q+2)+(d-1)q}}\right)$, under Tsybakov's noise condition and when the decision boundary is $\alpha$-Hölder smooth. 

Recall Table \ref{table:1} given earlier in Section \ref{sec:intro}. 
This table compares our finding in Theorem \ref{main3} with the existing results on excess risk in the literature. 
Comparatively, our convergence rates do not depend on the dimension $d$. This demonstrates that the deep ReLU networks overcome the curse of dimensionality in classifications. 
More importantly, our result does not depend on any smoothness or regularity measure, whereas the existing results from \citep{feng2021generalization, kim2021fast, shen2022approximation} contain constant terms depending on smoothness index. For example, the excess risk estimate given in \citep[Theorem 2]{feng2021generalization} contains a constant term that increases exponentially with the smoothness index $r > 0$. Since the regression function under GMM is infinitely differentiable, existing results increase to infinity when we take the smoothness index to be infinity.

\section{Proof of Theorem \ref{main1}: Approximation of GMM Discriminant Function by ReLU Network} \label{proofmain1}
In this section, we present the proof of Theorem \ref{main1}.
The proof of Theorem \ref{main1} can be divided into three steps. First, we prove that the new product gate can approximate multiplication well (Subsection \ref{productgate}). Then, using the new product gate, we study how ReLU networks can approximate monomial functions (Subsection \ref{subsec:monomial}). After that, we prove Theorem \ref{main1} by showing how the GMM discriminant function $D$ can be well approximated by ReLU networks (Subsection \ref{subsec:proofmain1}). 

\subsection{A Novel Product gate} \label{productgate}

From \citep{yarotsky2017error}, it is shown that a $m$-layer ReLU FNN can approximate $f(u)=u^2$ to an accuracy $\frac{1}{4^{m+1}}$ with input $u\in [0,1]$. Inspired by Yarotsky's results, we propose a new product gate that achieves the same approximation accuracy on $[0,1]^2$ and has a linear increment on $\RR^2$ while adopting a structure that enables efficient approximation of monomials later. 

Recall the hat function $g: [0,\infty) \rightarrow [0,1]$ we defined earlier in (\ref{gdef}) by extending the construction by \citep{yarotsky2017error} on the interval $[0,1]$ to $[0,\infty)$.
The function $g$ can be regarded as the output of a ReLU network with $1$ hidden layer and $3$ neurons.  The $s$-composition $g_s$ of $g$ with itself on $[0,\infty)$ takes the form 
\begin{align*}
g_s(u) := \underbrace{g \circ g \circ \cdots g(u)}_{s \hbox{ folds}}
=\left\{
\begin{array}{ll}
2^s\left(u-\frac{2k}{2^s}\right),& \hbox{if} \ u \in \left[\frac{2k}{2^s}, \frac{2k+1}{2^s}\right], k=0,1,\ldots, 2^{s-1} -1, \\
2^s\left(\frac{2k}{2^s}-u\right),& \hbox{if} \ u \in \left[\frac{2k-1}{2^s}, \frac{2k}{2^s}\right], k=1,2,\ldots, 2^{s-1},\\
0,&  \hbox{if} \ u>1.
\end{array}\right.    
\end{align*}
With input $u\geq 0$, we can generate $g_s$ for $s=1,2,\ldots,m$ by a ReLU network of $m$ layers. Denote a function vector $\bm{\sigma}$ by 
\begin{equation}
\bm{\sigma}(u) =\begin{bmatrix}
\sigma(u-1)\\
\sigma(u-1/2)\\
 \sigma(u)
\end{bmatrix}.    
\end{equation} 
By $g_s$ and $\sigma(u) =u$ for $u\geq 0$, the following flow chart illustrates how the functions $\{u, g_1(u), g_2(u), \ldots, g_m (u)\}$ are produced by a ReLU FNN with $m$ hidden layers:
\begin{equation}\label{fm}
u \rightarrow   \begin{bmatrix}
\bm{\sigma}(u)
\end{bmatrix}  \rightarrow 
\begin{bmatrix}
\bm{\sigma}\left(g_1(u)\right)\\
u
\end{bmatrix} \rightarrow 
\begin{bmatrix}
\bm{\sigma}\left(g_2(u)\right)\\
g_1(u)\\
u
\end{bmatrix} \rightarrow \cdots \rightarrow 
\begin{bmatrix}
\bm{\sigma}\left(g_{m-1}(u)\right)\\
\sum_{s=1}^{m-2}\frac{g_s(u)}{2^{2s}}\\
u
\end{bmatrix} 
=: G_m(u)  \rightarrow f_m(u),
\end{equation}
where $f_m:[0,\infty) \rightarrow [0,\infty)$ is defined as  a linear combination of $\{g_s\}_{s=1}^m$ given by \begin{equation*}
f_m(u)=u-\sum_{s=1}^{m}\left(f_{s-1}(u)-f_s(u)\right)\\
=\left\{
\begin{array}{ll}
u-\sum_{s=1}^{m}\frac{g_s(u)}{2^{2s}},&\quad  \hbox{if} \ 0\le u\le 1, \\
u,&\quad  \hbox{if} \  u>1.
\end{array}\right.
\end{equation*}
As observed in \citep{yarotsky2017error}, on the interval $[0,1]$, $f_m$ is the piecewise linear interpolation of $u^2$
on $2^m$ subintervals on $[0, 1]$ with breakpoints $\{0,\frac{1}{2^m},...,\frac{2^m}{2^m}=1\}$. 
From (\ref{fm}), we see that $f_m$ can be implemented by a ReLU FNN with $m$ hidden layers each of width $5$. 
All the parameters take values on $[-4,4]$. 

Now, motivated by the identity $u\cdot v = \left|\frac{u+v}{2}\right|^2 - \left|\frac{u-v}{2}\right|^2$, we introduce a novel product gate $\Phi: \RR \times \RR \to \RR$ as \begin{equation} \label{Phi}
    \Phi (u,v) = f_m\left(\left|\frac{u+v}{2}\right|\right) - f_m\left(\left|\frac{u-v}{2}\right|\right), \qquad \forall u,v \in \RR. 
\end{equation}
It follows that the multiplication function $u \cdot v$ can be well-approximated by a ReLU network, as stated in the proposition below.

\begin{proposition} \label{prop:productgate}
Let $m\in \NN$. With the input $(u,v) \in \RR^2$, the function $\Phi: \RR^2 \to \RR$ can be implemented by a ReLU FNN $\in \mathcal{F}(m+1,(4,10,10,\ldots,10))$ with all the parameters take values on $[-4,4]$ such that
\begin{enumerate}
\item $\Phi (u,v) = 0$ if $u=0$ or $v=0$;
\item $|\Phi (u,v)| \leq |u| + |v|$;

\item If $u,v \in [-1,1]$, $\Phi (u,v) \in [-1,1]$ and it achieves an approximation accuracy \begin{equation*}
|\Phi (u,v) - u \cdot v| \leq 4^{-(m+1)}.    
\end{equation*} 
\end{enumerate}
\end{proposition}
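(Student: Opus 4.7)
The plan is to realize $\Phi$ as a direct ReLU-network implementation of the polarization identity $uv = |\tfrac{u+v}{2}|^2 - |\tfrac{u-v}{2}|^2$, with each square replaced by the squaring gate $f_m$ constructed in (\ref{fm}). The first hidden layer I would take to consist of the four ReLU units $\sigma(\pm\tfrac{u+v}{2})$ and $\sigma(\pm\tfrac{u-v}{2})$, using weights $\pm\tfrac12$ and zero biases; the identity $|a|=\sigma(a)+\sigma(-a)$ then allows the two non-negative quantities $|\tfrac{u+v}{2}|$ and $|\tfrac{u-v}{2}|$ to enter the next hidden layer as linear combinations. The remaining $m$ hidden layers would be two disjoint parallel copies of the $f_m$ subnetwork, each of width $5$, giving layers of width $10$, and the output layer would combine the two scalar outputs into $f_m(|\tfrac{u+v}{2}|) - f_m(|\tfrac{u-v}{2}|)$ via outer weights $\pm 1$. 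This produces a network in $\mathcal{F}(m+1,(4,10,\dots,10))$ whose internal parameters inherit the $[-4,4]$ bound of $f_m$, while the first and last layers I add only use weights $\pm\tfrac12$ and $\pm 1$.

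For Part~1, I would simply note that $u=0$ forces $|\tfrac{u+v}{2}| = |v|/2 = |\tfrac{u-v}{2}|$, so the two squaring gates receive identical inputs and their difference vanishes; the $v=0$ case is symmetric. For Part~2, I would first establish two elementary monotonicity facts about $f_m$ on $[0,\infty)$: $f_m(t)\ge 0$ everywhere, since $f_m$ linearly interpolates the non-negative function $t^2$ on $[0,1]$ and equals $t$ on $[1,\infty)$; and $f_m(t)\le t$ everywhere, since on each subinterval $[k/2^m,(k+1)/2^m]\subset[0,1]$ the affine function $f_m(t)-t$ is non-positive at both endpoints (using $s^2\le s$ on $[0,1]$), and $f_m(t)=t$ for $t\ge 1$. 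Applying $|a-b|\le\max(a,b)$ for $a,b\ge 0$ would then yield
\[
|\Phi(u,v)| \le \max\!\left(f_m(|\tfrac{u+v}{2}|),\, f_m(|\tfrac{u-v}{2}|)\right) \le \max\!\left(|\tfrac{u+v}{2}|,\, |\tfrac{u-v}{2}|\right) \le \tfrac{|u|+|v|}{2}.
\]

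For Part~3, when $u,v\in[-1,1]$ both absolute values lie in $[0,1]$, so (\ref{Yarotskyprop2}) places the two $f_m$ values in $[0,1]$ and $\Phi(u,v)\in[-1,1]$. For the error bound I would exploit convexity of $t^2$ on $[0,1]$: the piecewise-linear interpolant $f_m$ lies above its target, i.e.\ $f_m(t)\ge t^2$ pointwise, and combined with (\ref{Yarotskyprop2}) this forces both residuals $f_m(|\tfrac{u+v}{2}|)-|\tfrac{u+v}{2}|^2$ and $f_m(|\tfrac{u-v}{2}|)-|\tfrac{u-v}{2}|^2$ to lie in the common interval $[0,4^{-(m+1)}]$. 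Since $uv = |\tfrac{u+v}{2}|^2 - |\tfrac{u-v}{2}|^2$, the quantity $\Phi(u,v)-uv$ is exactly the difference of these two residuals and hence lies in $[-4^{-(m+1)}, 4^{-(m+1)}]$. This one-sided cancellation is the main subtlety of the proof: a naive triangle-inequality estimate would give $2\cdot 4^{-(m+1)}$, and only the convexity-driven sign alignment of the two errors delivers the sharper $4^{-(m+1)}$ stated in the proposition.
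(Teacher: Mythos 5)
Your proposal is correct and follows essentially the same route as the paper: the same polarization-identity network with two parallel $f_m$ subnetworks, the same observation that $u=0$ or $v=0$ equalizes the two inputs, the bound $0\le f_m(t)\le t$ for Part 2, and the same one-sided residual cancellation $\Phi(u,v)-uv=\bigl(f_m(|\tfrac{u+v}{2}|)-|\tfrac{u+v}{2}|^2\bigr)-\bigl(f_m(|\tfrac{u-v}{2}|)-|\tfrac{u-v}{2}|^2\bigr)\in[-4^{-(m+1)},4^{-(m+1)}]$, which the paper obtains by citing Yarotsky's one-sided estimate $0\le f_m(t)-t^2\le 4^{-(m+1)}$ and you rederive from convexity. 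Your Part 2 estimate via $|a-b|\le\max(a,b)$ even gives the slightly sharper constant $(|u|+|v|)/2$, but this is a cosmetic difference, not a different argument.
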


\begin{proof}
Note that $|t|= \sigma(t) + \sigma (-t)$ for all $t\in\RR$. Hence $\sigma\left(\frac{u+v}{2}\right)+\sigma\left(-\frac{u+v}{2}\right)$ gives  $\left|\frac{u+v}{2}\right|$, which will be the input  of a subnetwork $f_m$. The following flow chart demonstrates how the product gate $\Phi$ is generated: 
\begin{equation*}
    \begin{bmatrix}
    u\\v
    \end{bmatrix}\rightarrow
    \begin{bmatrix}
    \sigma\left(\frac{u+v}{2}\right) \\
    \sigma\left(-\frac{u+v}{2}\right) \\
    \sigma\left(\frac{u-v}{2}\right) \\
    \sigma\left(-\frac{u-v}{2}\right) 
    \end{bmatrix} \rightarrow
    \begin{bmatrix}
    {\bm \sigma} \left(\left|\frac{u+v}{2}\right|\right)\\
   {\bm \sigma} \left(\left|\frac{u-v}{2}\right|\right)
    \end{bmatrix}
    \begin{matrix}
  \rightarrow \cdots  \rightarrow\\
    \rightarrow \cdots  \rightarrow
    \end{matrix}
    \begin{bmatrix}
    G_m( \left|\frac{u+v}{2}\right|)  \\
     G_m( \left|\frac{u-v}{2}\right|)
    \end{bmatrix}
  \rightarrow
    \Phi (u,v).
\end{equation*}
The complexity of the network $\Phi$ follows from that of $f_m$.

If $u=0$ or $v=0$, we have $\left|\frac{u+v}{2}\right|= \left|\frac{u-v}{2}\right|$ and thereby $\Phi(u, v)=0$. Observe that $0 \leq f_m(u) \leq u$ for $u \in [0, \infty)$, if follows that for $u,v\in \RR$,
\begin{align*}
|\Phi(u,v)|=\left|f_m\left(\left|\frac{u+v}{2}\right|\right)- f_m\left(\left|\frac{u-v}{2}\right|\right) \right|
 \leq \left|\frac{u+v}{2}\right| + \left|\frac{u-v}{2}\right|
 \leq |u|+|v|.
\end{align*}
It is shown in \citep[Proposition 2]{yarotsky2017error} that $0\leq f_m(u)-u^2 \leq 4^{-(m+1)}$ and $0 \leq f_m(u)\leq 1$ for $u\in [0,1]$. We have, for $u,v \in [-1,1]$, $-1 \leq \Phi (u,v)\leq 1$ and 
\begin{align*}
 \Phi (u,v) - u \cdot v 
& = \left\{f_m\left( \left|\frac{u+v}{2}\right|\right) -  \left|\frac{u+v}{2}\right|^2\right\}
- \left\{f_m\left( \left|\frac{u-v}{2}\right|\right) -  \left|\frac{u-v}{2}\right|^2\right\}\\
&\in \left[-4^{-(m+1)}, 4^{-(m+1)}\right] .  
\end{align*}
This proves the proposition. 
\end{proof}
 One of the advantages of our product gate is that for input on the domain $[0,1]$, the output is also on $[0,1]$. Such consistency of the domain and range $[0,1]$ helps us to define the monomial gate in an elegant way in the next subsection.

\subsection{Monomial Gate} 
\label{subsec:monomial}
Following the idea of the product gate introduced in the previous subsection, we construct a network  to approximate monomial functions $u^k$ on $[-1,1]$, with $k\in \NN$. 
This network is a monomial gate (i.e., EBTnet) defined in subsection \ref{section:EBTnet} with specific parameter choices.
Define the functions $\{h_k(u)\}_{k=1}^{2^\ell}$ on $\RR$ for $\ell\in \ZZ_+$  by  \begin{equation} \label{hkk1}
 \{h_k(u)\}_{k=1}^2 = \{h_1(u)=u,\  h_2(u) =\Phi(u,u)\},   
\end{equation} 
and iteratively for $j=1,\ldots, \ell-1$, \begin{equation} \label{hkk2}
 h_{2^j+k}(u) = \Phi(h_{2^j}(u), h_k(u)),\qquad k=1,\ldots, 2^j.   
\end{equation}
Here, we focus on the input domain $[-1,1]$.
\begin{proposition} \label{monomialgate}
 Let $\ell \in \ZZ_+$. Consider the functions $\{h_k(u)\}_{k=1}^{2^\ell}$ defined in (\ref{hkk1}) and (\ref{hkk2}). For input $u \in [-1,1]$, there exists a ReLU FNN that outputs the set of functions $\{h_k(u)\}_{k=1}^{2^\ell}$ such that for $j=0,1,\ldots, \ell-1$ and $k=1,\ldots, 2^j$, 
 \begin{equation}
     h_{2^{j}+k}(u) \in [-1,1]
 \end{equation}
 and \begin{equation}\label{boundofhk}
 \left|h_{2^{j}+k}(u) - u^{2^{j}+k}\right| \leq \frac{2^{j+1}-1}{4^{m+1}}.
\end{equation} 
This ReLU FNN belongs to $\mathcal{F}(\ell(m+1),\bm{p})$, where the width vector $\bm{p}$ is given by, for $j=0\ldots, \ell-1$,
\begin{equation*}
   \begin{cases} \nonumber
      p_{j(m+1)+1} = 5\left(2^j\right), \\
      p_{j(m+1)+i} = 11\left(2^j\right), & \text{for } i=2,\ldots,m+1.
    \end{cases}  
\end{equation*}All parameters take value in $[-4,4]$. 
\end{proposition}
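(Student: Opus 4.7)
The plan is to prove Proposition \ref{monomialgate} by induction on the stage index $j \in \{0, 1, \ldots, \ell-1\}$. The inductive hypothesis at level $j$ is that for every $k \in \{1, \ldots, 2^j\}$ the function $h_{2^j+k}$ defined by (\ref{hkk1})--(\ref{hkk2}) satisfies $h_{2^j+k}(u) \in [-1,1]$ and $|h_{2^j+k}(u) - u^{2^j+k}| \le (2^{j+1}-1)/4^{m+1}$ on $[-1,1]$. Since $h_1(u) = u$ is exact and in $[-1,1]$, together with the inductive claims at all lower levels, this gives the uniform statement that $h_k(u) \in [-1,1]$ and $|h_k(u) - u^k| \le (2^{j+1}-1)/4^{m+1}$ for every $1 \le k \le 2^{j+1}$.

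For the base case $j = 0$, the only new function is $h_2(u) = \Phi(u,u)$; property (3) of Proposition \ref{prop:productgate} gives $h_2(u) \in [-1,1]$ and $|h_2(u) - u^2| \le 4^{-(m+1)} = (2^1 - 1)/4^{m+1}$, as required. For the inductive step, fix $j \ge 1$ and a target index $k \in \{1,\ldots,2^j\}$. Because $h_{2^j}(u)$ and $h_k(u)$ both lie in $[-1,1]$ by the inductive hypothesis (using either $h_1(u) = u$ if $k=1$, or the unique decomposition $k = 2^{j_1} + k_1$ with $j_1 < j$ and $1 \le k_1 \le 2^{j_1}$), Proposition \ref{prop:productgate} yields $h_{2^j+k}(u) = \Phi(h_{2^j}(u), h_k(u)) \in [-1,1]$ and
\[
\bigl|\Phi(h_{2^j}(u), h_k(u)) - h_{2^j}(u)\, h_k(u)\bigr| \le 4^{-(m+1)}.
\]
Then a standard add-and-subtract argument gives
\[
\bigl|h_{2^j}(u)\, h_k(u) - u^{2^j+k}\bigr| \le |h_{2^j}(u)|\cdot |h_k(u) - u^k| + |u^k|\cdot |h_{2^j}(u) - u^{2^j}|,
\]
and since $|h_{2^j}(u)|, |u^k| \le 1$, invoking the inductive bound twice yields $2(2^j - 1)/4^{m+1}$. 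Combining the two pieces and solving the resulting recurrence $a_{j+1} = 2a_j + 1$ with $a_0 = 1$ produces the desired constant $(2^{j+1} - 1)$.

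For the network realization, at stage $j$ we implement $2^j$ parallel copies of the product gate $\Phi \in \mathcal{F}(m+1,(4,10,10,\ldots,10))$, one for each $\Phi(h_{2^j}(u), h_k(u))$ with $k=1,\ldots,2^j$; this contributes $4 \cdot 2^j$ neurons in the first layer of the stage and $10 \cdot 2^j$ in each subsequent layer. In addition, the $2^j$ monomial approximations $h_1, \ldots, h_{2^j}$ produced so far must be routed forward: since each lies in $[-1,1]$, the identity $h_k(u) = \sigma(h_k(u) + 1) - 1$ lets us transmit each value through the layer using a single extra neuron, with all weights and biases in $[-4,4]$. This accounts for the widths $5 \cdot 2^j$ and $11 \cdot 2^j$ declared in the proposition, and concatenating the $\ell$ stages gives total depth $\ell(m+1)$.

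The main obstacle I anticipate is bookkeeping rather than inequalities: one must check carefully that at the moment $h_{2^j+k}$ is formed, all the inputs $h_{2^j}$ and $h_k$ are already available as outputs of previous stages (so that the binary-tree indexing is consistent with the unidirectional layer order) and that the pass-through neurons respect the $[-4,4]$ weight constraint. Once the layer-by-layer routing is established, the error propagation collapses into the elementary recurrence above, and Proposition \ref{prop:productgate} supplies every per-gate estimate needed.
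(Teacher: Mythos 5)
Your proposal is correct and follows essentially the same route as the paper's proof: induction on the stage index, with the base case handled by property (3) of Proposition \ref{prop:productgate} and the inductive step combining the per-gate error $4^{-(m+1)}$ with the add-and-subtract bound $2(2^j-1)/4^{m+1}$ to yield the recurrence $a_{j+1}=2a_j+1$. Your extra bookkeeping on the network realization (parallel product gates plus identity pass-through neurons via $\sigma(h_k(u)+1)-1$ accounting for the widths $5\cdot 2^j$ and $11\cdot 2^j$) is a useful elaboration of what the paper delegates to the EBTnet description in Section \ref{section:EBTnet}, but it does not change the substance of the argument.
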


\begin{proof}
 The network described here is an EBTnet defined in subsection \ref{section:EBTnet}. It consists of $\ell$ subnetworks, each with depth $m+1$, forming an expansive binary tree structure. We prove our statements by induction on $j$.
 
The case $j=0$ is obvious: 
the 1st subnetwork takes $u$ a sinput and outputs $\{h_2(u), h_1(u)\}$. For $u \in [-1,1]$, we have 
\begin{equation*}
    h_2(u) =\Phi(u,u) = f_m(|u|) \in [0,1]
\end{equation*} and 
\begin{equation*}
    |h_2(u)-u^2| =\left|f_m(|u|) - u^2\right| = f_m(|u|) - u^2 \leq 4^{-(m+1)} = \frac{2 -1}{4^{(m+1)}}.
\end{equation*}
Now, assume the statements are true for the $j$-th subnetwork, where $0\leq j \leq \ell-1$. The $(j+1)$-th subnetwork takes  $[h_i(u)]_{i=1}^{2^j}$ as inputs, and outputs $h_{2^{j}+1}(u), \ldots, h_{2^{j+1}}(u)$.

By the induction hypothesis, $h_{2^{j-1}+k}(u) \in [-1,1]$ and  $|h_{2^{j-1}+k}(u) - u^{2^{j-1}+k}| \leq \frac{2^j-1}{4^{m+1}}$ for $u\in [-1,1]$ and $k=1,\ldots, 2^{j-1}$. We obtain, for $u\in [-1,1]$ and $k=1,\ldots, 2^j$,
\begin{equation*}
    h_{2^{j}+k}(u) = \Phi(h_{2^{j}}(u),h_k(u)) \in [-1,1],
\end{equation*} and 
\begin{eqnarray*}
  &  &|h_{2^{j}+k}(u) - u^{2^{j}+k}| \\
    &=& | h_{2^{j}+k}(u)- h_{2^j}(u) \cdot h_{k}(u)+h_{2^j}(u) \cdot h_{k}(u)- u^{2^{j}+k}|\\
    &\leq& \left|\Phi(h_{2^j}(u), h_k(u))  - h_{2^j}(u) \cdot h_{k}(u) \right|+\left|\left(h_{2^j}(u) - u^{2^{j}}\right)h_k(u) +u^{2^{j}}(h_{k}(u)- u^k)\right|\\
    &\leq&  \frac{1}{4^{m+1}} + 2\cdot \frac{2^j-1}{4^{m+1}}
    = \frac{2^{j+1}-1}{4^{m+1}}.
\end{eqnarray*}
This completes the induction procedure and the proof of Proposition 
\ref{monomialgate}.
\end{proof}

\subsection{Proof of Theorem \ref{main1}}
\label{subsec:proofmain1}
 Recall the GMM discriminant function $D$ defined in (\ref{target}) by
 \begin{equation*}
    D(x) = D^+(x) - D^-(x) = \sum_{j=1}^K \beta_j \exp(-u_j(x)),        
 \end{equation*}
 where $\beta_j$ are constant coefficients, and $u_j(x)=(x-\mu_j)^T (\Sigma_j)^{-1}(x-\mu_j)/2$. 
 
Recall that $r_{i,j}(x)$ is the $i$-th component of the vector $(\Sigma_j)^{-1/2}\frac{(x-\mu_j)}{\sqrt{2}}$ for $i =1,\ldots,d$ and $j=1,\ldots,K$. 
Also, recall that
$\mu^*= \max_{1\leq j\leq K} \|\mu_j\|$
and $\tilde{\sigma}$ is the smallest eigenvalue of all the $K$ covariance matrices. We have for $j=1,\ldots,K$,
\begin{equation} \label{rij}
    |r_{i,j}(x)| \leq \|(\Sigma_{j})^{-1/2}(x-\mu_j)\| \leq \frac{1}{\sqrt{\tilde{\sigma}}}\|x-\mu_j\|
    \leq \frac{1}{\sqrt{\tilde{\sigma}}} (\|x\|+\mu^*).  
\end{equation}
Here, we present the proof of Theorem \ref{main1}.

\begin{proof}[Proof of Theorem \ref{main1}]
Since the input $x\in \RR^d$ is unbounded, $u_j(x)$ is unbounded for $j=1,\ldots,K$.
We  first apply the preprocessing subnetwork to truncate $x$. This part of the neural network is a fixed network structure (i.e., all network parameters are not free). 
The flowchart below  showcases the preprocessing unit of our neural network:
\begin{equation*}
    \begin{bmatrix}
    x_1\\
    \vdots\\
    x_d
    \end{bmatrix}
    \rightarrow,
\rightarrow 
\begin{bmatrix}
\Psi_b(x)\\
r_{1,1}(x)\\
\vdots\\
r_{d,1}(x)
\\
\vdots\\
r_{1,K}(x)
\\
\vdots\\
r_{d,K}(x)
\end{bmatrix}
\rightarrow \cdots \rightarrow 
\begin{bmatrix}
\Phi(\Psi_b(x),r_{1,1}(x))\\
\vdots\\
\Phi(\Psi_b(x),r_{d,K}(x))
\end{bmatrix}
\end{equation*} 
It follows from Proposition \ref{prop:productgate} that 
   $ |\Phi(\Psi_b(x), r_{i,j}(x))| \leq |\Psi_b(x)|+|r_{i,j}(x)|$.

Then we have, from Lemma \ref{localization}, for $j=1,\ldots,K$, 
\begin{align} \label{boundtruncate}
\begin{cases}
    \Phi(\Psi_b(x),r_{i,j}(x)) = \Phi(0,r_{i,j}(x))= 0,  &\text{if } x \notin [-b-1, b+1]^d,   \\
    \Phi(\Psi_b(x), r_{i,j}(x))=\Phi(1, r_{i,j}(x)) ,  &\text{if }  x \in [-b, b]^d.
\end{cases}
\end{align}

Recall the function space $\mathring{\mathcal{H}}$ defined earlier in (\ref{H0}). Every $\widehat r_{i,j}\in \mathring{\mathcal{H}}$ has the form $\widehat r_{i,j}(x)= \mathring W_{i,j} \cdot x + \mathring b_{i,j}$ with $\|\mathring W_{i,j}\| \leq C_K^0, |b_{i,j}| \leq C_K^1$. In Theorem \ref{main1}, we choose $C_K^0 \geq \frac{\sqrt{d}}{\sqrt{\tilde\sigma}}$ and $ C_K^1 \geq \frac{\mu^*}{\sqrt{\tilde\sigma}}$. We observe from (\ref{rij}) that  $r_{i,j} \in \mathring{\mathcal{H}}$ due to the choices of $C_K^0, C_K^1$ here. 
Also, $|r_{i,j}(x)| \leq \frac{1}{\sqrt{\tilde{\sigma}}} (\sqrt{d}(b+1)+\mu^*)$ for $x \in [-b-1, b+1]^d$.
But $R_2 \geq \sqrt{d}(b+1)C_K^0 + C_K^1$. Then for $j=1,\ldots, K$ and $i =1,\ldots,d$, $|r_{i,j}(x)/R_2| \leq 1$ and
\begin{equation} \label{R2bound}
 \left | \Phi\left(\Psi_b(x),\frac{r_{i,j}(x)}{R_2}\right)\right| \leq 1,\qquad \forall x \in [-b-1, b+1]^d
\end{equation}and thus for all $x \in \RR^d$ by (\ref{boundtruncate}).
Observe that 
\begin{align*}
    D(x) = \sum_{j=1}^K \beta_j \exp(-u_j(x)) &= \sum_{j=1}^K \beta_j \exp\left(-\sum_{i=1}^d(r_{i,j}(x))^2\right)\\
    &= \sum_{j=1}^K \beta_j \exp\left(-(R_2)^2\sum_{i=1}^d \left(\frac{r_{i,j}(x)}{R_2}\right)^2\right). 
\end{align*}
By the Taylor expansion of the exponential function $\exp{\left(-(R_2)^2u\right)}$ for $u\in \RR$, we can further write $D$ as
\begin{equation*}
    D(x) = \sum_{j=1}^K \beta_j \sum_{k=0}^\infty \frac{(-1)^k(R_2)^{2k}d^k}{k!}\left(\frac{1}{d}\sum_{i=1}^d \left(\frac{r_{i,j}(x)}{R_2}\right)^2\right)^k .    
\end{equation*}
Note that $\Phi (u,u)= f_m(|u|)$ by the definition of $\Phi$ in (\ref{Phi}).
Now define $\widetilde{D}:\RR^d \rightarrow \RR$ by
\begin{align} \label{tildeD}
    \widetilde{D}(x)
    = \sum_{j=1}^K \beta_j \left\{1 + \sum_{k=1}^{2^\ell} \frac{(-1)^k(R_2)^{2k}d^k}{k!} h_k\left(\frac{1}{d}\sum_{i=1}^d f_m\left(\left| \Phi\left(\Psi_b(x),\frac{r_{i,j}(x)}{R_2}\right)\right|\right)\right)\right\} .
\end{align}
For brevity, we wrote $f_m\left(\left| \Phi\left(\Psi_b(x),\frac{r_{i,j}(x)}{R_2}\right)\right|\right)$ instead of $\Phi \left(\Phi\left(\Psi_b(x),\frac{r_{i,j}(x)}{R_2}\right),\Phi\left(\Psi_b(x),\frac{r_{i,j}(x)}{R_2}\right)\right)$.
We would like to highlight that $\sigma_\lambda (\widetilde{D}) \in \mathcal{H}$. In other words, $\widetilde{D}$ can be implemented by a ReLU FNN described in Remark \ref{remark} excluding the last scaling layer $\sigma_\lambda$.

For $k = 1,\ldots, 2^\ell$ and $x\in \RR^d$, we have 
\begin{eqnarray*}
     &&h_k\left(\frac{1}{d}\sum_{i=1}^d f_m\left(\left| \Phi\left(\Psi_b(x),\frac{r_{i,j}(x)}{R_2}\right)\right|\right)\right)\\
    &=& h_k\left(\frac{1}{d}\sum_{i=1}^d f_m\left(\left| \Phi\left(\Psi_b(x),\frac{r_{i,j}(x)}{R_2}\right)\right|\right)\right) -\left(\frac{1}{d}\sum_{i=1}^d f_m\left(\left| \Phi\left(\Psi_b(x),\frac{r_{i,j}(x)}{R_2}\right)\right|\right)\right)^k \\
    &&+ \left(\frac{1}{d}\sum_{i=1}^d f_m\left(\left| \Phi\left(\Psi_b(x),\frac{r_{i,j}(x)}{R_2}\right)\right|\right)\right)^k .   
\end{eqnarray*}
Applying Proposition \ref{monomialgate}, by (\ref{R2bound}) and $0\leq f_m(u)\leq 1$ for $u\in [0,1]$, we get for $k = 1,\ldots, 2^\ell$ and $x\in \RR^d$,

\begin{equation} \label{hkbound}
  \left|h_k\left(\frac{1}{d}\sum_{i=1}^d f_m\left(\left| \Phi\left(\Psi_b(x),\frac{r_{i,j}(x)}{R_2}\right)\right|\right)\right)\right| \leq 1 .   \end{equation}
By (\ref{hkbound}), when $x\in [-b-1, b+1]^d$,
\begin{equation}
 |\widetilde{D}(x)|
 \leq \sum_{j=1}^K |\beta_j| \left\{1 + \sum_{k=1}^{2^\ell} \frac{((R_2)^2d)^k}{k!} \right\} \leq \sum_{j=1}^K |\beta_j| \exp{\left((R_2)^2d\right)}.
\end{equation}
When $x\notin [-b-1,b+1]^d$, by (\ref{boundtruncate}), we have $
   f_m\left(\left| \Phi\left(\Psi_b(x),\frac{r_{i,j}(x)}{R_2}\right)\right| \right)=0$ and thereby 
$|\widetilde{D}(x)| \leq \left|\sum_{j=1}^K \beta_j\right| \leq \sum_{j=1}^K |\beta_j|\left( \exp{\left((R_2)^2d\right)}\right)$. 
This proves (\ref{main1unbounded}) in Theorem \ref{main1}.

Now if $x\in [-b,b]^d$, we know $\Psi_b(x)=1$ from Lemma \ref{localization}. Applying the approximation error bound (\ref{Yarotskyprop2}) for $f_m$, we get
\begin{align*}
\left| f_m\left(\left| \Phi\left(\Psi_b(x),\frac{r_{i,j}(x)}{R_2}\right)\right| \right)-\left(\frac{r_{i,j}(x)}{R_2}\right)^2\right| 
=\left| f_m\left(\left| \Phi\left(1,\frac{r_{i,j}(x)}{R_2}\right)\right| \right)-\left(\frac{r_{i,j}(x)}{R_2}\right)^2\right|
\leq 4^{-(m+1)} . 
\end{align*}
It follows from Proposition \ref{monomialgate} that for $x\in [-b,b]^d$, $k = 1,\ldots, 2^\ell$,
{\allowdisplaybreaks
\begin{eqnarray*}
  && \left| h_k\left(\frac{1}{d}\sum_{i=1}^d  f_m\left(\left| \Phi\left(\Psi_b(x),\frac{r_{i,j}(x)}{R_2}\right)\right| \right)\right)- \left(\frac{1}{d}\sum_{i=1}^d \left(\frac{r_{i,j}(x)}{R_2}\right)^2\right)^k \right|\\
  &\leq& \frac{2^{\ell}-1}{4^{m+1}}
  +\left| \left(\frac{1}{d}\sum_{i=1}^d  f_m\left(\left| \Phi\left(\Psi_b(x),\frac{r_{i,j}(x)}{R_2}\right)\right| \right)\right)^k -\left(\frac{1}{d}\sum_{i=1}^d \left(\frac{r_{i,j}(x)}{R_2}\right)^2\right)^k \right|\\
  &\leq& \frac{2^{\ell}-1}{4^{m+1}} + \frac{k}{d}\sum_{i=1}^d\left|   f_m\left(\left| \Phi\left(\Psi_b(x),\frac{r_{i,j}(x)}{R_2}\right)\right| \right) - \left(\frac{r_{i,j}(x)}{R_2}\right)^2\right| \\
&\leq& \frac{2^{\ell}-1}{4^{m+1}} + k\left(4^{-(m+1)} \right)
= \frac{2^{\ell}+k-1}{4^{m+1}}.
\end{eqnarray*}}
Here, we have used the Mean Value Theorem to bound $|u^k-v^k|\leq k|u-v|$ for $u,v \in [0,1]$. 

Then, for $x\in [-b,b]^d$, we have
{\allowdisplaybreaks
\begin{eqnarray*}
    &&\left|\widetilde{D}(x)- D(x)\right|\\
    &\leq& \left|\widetilde{D}(x)-  \sum_{j=1}^K \beta_j \sum_{k=0}^{2^\ell} \frac{(-1)^k(R_2)^{2k}d^k}{k!}\left(\frac{1}{d}\sum_{i=1}^d \left(\frac{r_{i,j}(x)}{R_2}\right)^2\right)^k \right| \\
    &&+ \left|\sum_{j=1}^K \beta_j \sum_{k=2^\ell+1}^{\infty} \frac{(-1)^k(R_2)^{2k}d^k}{k!}\left(\frac{1}{d}\sum_{i=1}^d \left(\frac{r_{i,j}(x)}{R_2}\right)^2\right)^k\right|   \\
    &\leq&\sum_{j=1}^K |\beta_j| \sum_{k=0}^{2^\ell} \frac{(R_2)^{2k}d^k}{k!}\left|h_k\left(\frac{1}{d}\sum_{i=1}^d  f_m\left(\left| \Phi\left(\Psi_b(x),\frac{r_{i,j}(x)}{R_2}\right)\right| \right)\right) - \left(\frac{1}{d}\sum_{i=1}^d \left(\frac{r_{i,j}(x)}{R_2}\right)^2\right)^k \right|\\
    &&+ \left|\sum_{j=1}^K \beta_j \sum_{k=2^\ell+1}^{\infty} \frac{(-1)^k(R_2)^{2k}d^k}{k!}\left(\frac{1}{d}\sum_{i=1}^d \left(\frac{r_{i,j}(x)}{R_2}\right)^2\right)^k\right|\\
    &\leq& \sum_{j=1}^K |\beta_j|\left\{\sum_{k=1}^{2^\ell} \frac{((R_2)^2d)^k}{k!}\left(\frac{2^{\ell}+k-1}{4^{m+1}}\right)\right\} + \sum_{j=1}^K |\beta_j|\sum_{k=2^\ell+1}^{\infty} \frac{((R_2)^{2}d)^k}{k!} \\ 
    &\leq& \sum_{j=1}^K |\beta_j|\left\{ \sum_{k=1}^{2^\ell} \frac{((R_2)^2d)^k}{k!}\left(\frac{2^{\ell+1}}{4^{m+1}}\right)+  \sum_{k=2^\ell+1}^{\infty} \frac{((R_2)^{2}d)^k}{k!}\right\}.
\end{eqnarray*}}
By Stirling's formula,
\begin{equation*}
   \sqrt{2\pi k}\left(\frac{k}{e}\right)^k\exp{\left(\frac{1}{12k+1}\right)}< k! <\sqrt{2\pi k}\left(\frac{k}{e}\right)^k\exp{\left(\frac{1}{12k}\right)},
\end{equation*}
we know that
\begin{equation*}
   \sum_{k=2^\ell+1}^{\infty} \frac{((R_2)^{2}d)^k}{k!}< \sum_{k=2^\ell+1}^{\infty} \frac{((R_2)^{2}d)^k}{\sqrt{2\pi k}} \left(\frac{e}{k}\right)^k \exp{\left(-\frac{1}{12k+1}\right)} \leq \sum_{k=2^\ell+1}^{\infty} \left(\frac{(R_2)^{2}de}{k} \right)^k.
\end{equation*}
Since $\ell \geq \frac{2\log ((R_2)^2 de)}{\log 2}$, we have$ ((R_2)^{2}de)^2 \leq 2^\ell+1$. We then  apply the bound $2^\ell+1 \leq k$ for $k \geq 2^\ell + 1$ and find 
\begin{equation*}
\sum_{k=2^\ell+1}^{\infty} \left(\frac{(R_2)^{2}de}{k} \right)^k 
 \leq \sum_{k=2^\ell+1}^{\infty} \left(\frac{1}{\sqrt{k}} \right)^k 
 \leq  \left(\frac{1}{\sqrt{2^\ell}}\right)^{2^\ell}\sum_{k=2^\ell+1}^{\infty} \left(\frac{1}{\sqrt{2^\ell}} \right)^{k-2^\ell} \leq 2^{-\frac{\ell (2^\ell)}{2}},
\end{equation*}
where we have bounded  $k \geq 2^\ell + 1$ by $2^\ell$ from below and then $\sqrt{2^\ell}$ by $2$ from  below.

Therefore, for $x\in [-b,b]^d$, there holds 
\begin{align*}
 |\widetilde{D}(x)- D(x)|&\leq  \left(\sum_{j=1}^K |\beta_j|\right) \exp{\left((R_2)^2d\right)}\left(\frac{2^{\ell+1}}{4^{m+1}}+ 2^{-\frac{\ell (2^\ell)}{2}}\right),
\end{align*}
which verifies (\ref{main1bounded}) in Theorem \ref{main1}.
The proof of Theorem \ref{main1} is complete. \end{proof}

\section{Proof of Theorem 3: Generalization Analysis}\label{sec:proofmain3}
In this section, we derive the high probability upper bound of excess generalization error $\varepsilon(f_z) - \varepsilon(f_c)$ for proving Theorem \ref{main3}. 

To start, we decompose $\varepsilon(f_z) - \varepsilon(f_c)$ into estimation error terms and an approximation error term (Subsection \ref{subsec:decomposition}). 
Then, we bound the estimation error terms (in Subsection \ref{subsec:estimationerror}) that involve estimating the covering number of our hypothesis space $\mathcal{H}$. After that, we bound the approximation error term (Subsection \ref{subsec:approxerror}). Lastly, by combining all the error estimates together, we are able to derive the proof of Theorem \ref{main3} (Subsection \ref{subsec:approxerror}). 

\subsection{Error decomposition}
\label{subsec:decomposition}
We  consider the following error decomposition.
similar error decompositions can be found in \citep{tianzhoucnn, huang2022fast}.
\begin{lemma}[Decomposition of $\varepsilon(f_z) - \varepsilon(f_c)$]
Let $f_\mathcal{H}$ be any functions in $\mathcal{H}$ defined in Definition \ref{hypothesisspace}. There holds 
\begin{equation} \label{decomposition}
    \varepsilon(f_z) - \varepsilon(f_c) \leq \{\varepsilon(f_z)- \varepsilon_z(f_z)\}  +  \{\varepsilon_z(f_\mathcal{H}) - \varepsilon(f_\mathcal{H})\} + \{\varepsilon(f_\mathcal{H}) - \varepsilon(f_c)\}.
\end{equation}
\end{lemma}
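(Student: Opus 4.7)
The plan is to prove this lemma by the classical add-and-subtract trick for empirical risk minimization, with a single sign exploitation coming from the fact that $f_z$ is the empirical minimizer over $\mathcal{H}$. The statement is a purely algebraic identity plus one inequality, so I do not expect any hard analytic obstacle; the main care needed is just to keep track of which quantities are empirical versus population and to invoke the ERM property at the correct spot.

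First I would write the target difference as a telescoping sum by inserting $\pm \varepsilon_z(f_z)$, $\pm \varepsilon_z(f_{\mathcal{H}})$, and $\pm \varepsilon(f_{\mathcal{H}})$, obtaining
\begin{align*}
\varepsilon(f_z) - \varepsilon(f_c) &= \bigl\{\varepsilon(f_z) - \varepsilon_z(f_z)\bigr\} + \bigl\{\varepsilon_z(f_z) - \varepsilon_z(f_{\mathcal{H}})\bigr\} \\
&\quad + \bigl\{\varepsilon_z(f_{\mathcal{H}}) - \varepsilon(f_{\mathcal{H}})\bigr\} + \bigl\{\varepsilon(f_{\mathcal{H}}) - \varepsilon(f_c)\bigr\}.
\end{align*}
This identity holds for any $f_{\mathcal{H}} \in \mathcal{H}$. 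The four bracketed pieces now carry distinct interpretations: two sample-vs-population gaps, one in-sample comparison between $f_z$ and $f_{\mathcal{H}}$, and one approximation gap between $\mathcal{H}$ and the Bayes classifier under the hinge loss.

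Next I would dispose of the middle in-sample comparison. Since $f_{\mathcal{H}} \in \mathcal{H}$ and $f_z$ is defined in \eqref{ERM} as the minimizer of $\varepsilon_z$ over $\mathcal{H}$, we have $\varepsilon_z(f_z) \leq \varepsilon_z(f_{\mathcal{H}})$, so
\[
\varepsilon_z(f_z) - \varepsilon_z(f_{\mathcal{H}}) \leq 0.
\]
Dropping this nonpositive term from the identity above immediately yields the claimed bound
\[
\varepsilon(f_z) - \varepsilon(f_c) \leq \bigl\{\varepsilon(f_z) - \varepsilon_z(f_z)\bigr\} + \bigl\{\varepsilon_z(f_{\mathcal{H}}) - \varepsilon(f_{\mathcal{H}})\bigr\} + \bigl\{\varepsilon(f_{\mathcal{H}}) - \varepsilon(f_c)\bigr\}.
\]

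The hardest part, honestly, is not in this lemma itself but in framing it for what comes next: the first two bracketed terms are the estimation errors that will later be controlled by concentration inequalities combined with covering-number estimates for $\mathcal{H}$, while the third is the approximation error that will be controlled by Theorem~\ref{main1} together with the comparison theorem for hinge loss. I would therefore keep $f_{\mathcal{H}}$ unspecified at this stage so that the eventual choice of $f_{\mathcal{H}}$ (namely an element of $\mathcal{H}$ whose underlying $\widetilde{D}$ realizes the approximation bound of Theorem~\ref{main1} with appropriately tuned $m,\ell,b,\lambda$) can be made later in a way that balances the three terms optimally.
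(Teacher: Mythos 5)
Your proposal is correct and is essentially identical to the paper's own proof: the same telescoping insertion of $\varepsilon_z(f_z)$, $\varepsilon_z(f_{\mathcal{H}})$, and $\varepsilon(f_{\mathcal{H}})$, followed by dropping the nonpositive term $\varepsilon_z(f_z) - \varepsilon_z(f_{\mathcal{H}}) \leq 0$ via the ERM property. No issues.
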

\begin{proof} We express $\varepsilon(f_z) - \varepsilon(f_c)$ by inserting empirical risks as follows
\begin{equation*}
    \varepsilon(f_z) - \varepsilon(f_c) = \{\varepsilon(f_z)- \varepsilon_z(f_z)\} + \{\varepsilon_z(f_z) - \varepsilon_z(f_\mathcal{H})\} +  \{\varepsilon_z(f_\mathcal{H}) - \varepsilon(f_\mathcal{H})\} + \{\varepsilon(f_\mathcal{H}) - \varepsilon(f_c)\}.
\end{equation*}
Both $f_z$ and $f_\mathcal{H}$ lies on the hypothesis space $\mathcal{H}$. From the definition of $f_z$ at (\ref{ERM}), $f_z$ minimizes the empirical risk $\varepsilon_z(f)$ over $\mathcal{H}$. Thus we have $\varepsilon_z(f_z) - \varepsilon_z(f_\mathcal{H}) \leq 0$. This yields the expression (\ref{decomposition}).
\end{proof}
$\{\varepsilon(f_z)- \varepsilon_z(f_z)\}$ is the first estimation error (also known as the sample error) term, $\{\varepsilon_z(f_\mathcal{H}) - \varepsilon(f_\mathcal{H})\}$ is the second estimation error term, whereas $\{\varepsilon(f_\mathcal{H}) - \varepsilon(f_c)\}$ --- which does not depend on the data --- is the approximation error term induced by $f_\mathcal{H}$. To give an upper bound to the excess generalization error, we will proceed to bound these three error terms respectively. 

\subsection{Upper Bound of Estimation Errors}\label{subsec:estimationerror}
In this subsection, we derive an upper bound of the estimation errors  $\varepsilon(f_z)- \varepsilon_z(f_z) + \varepsilon_z(f_\mathcal{H}) - \varepsilon(f_\mathcal{H})$. 

We first rewrite it by inserting $\varepsilon(f_c)$ and $\varepsilon_z(f_c)$:
\begin{eqnarray}
 \varepsilon(f_z) -\varepsilon_z(f_z) + \varepsilon_z(f_\mathcal{H}) - \varepsilon(f_\mathcal{H})  &=&   \varepsilon(f_z) -\varepsilon(f_c)- (\varepsilon_z(f_z)-\varepsilon_z(f_c)) \label{estimationerr1} \\
 &&+ \varepsilon_z(f_\mathcal{H}) - \varepsilon_z(f_c) - (\varepsilon(f_\mathcal{H})- \varepsilon(f_c)).   \label{estimationerr2}    
\end{eqnarray}
In other words, to bound $\varepsilon(f_z)- \varepsilon_z(f_z) + (\varepsilon_z(f_\mathcal{H}) - \varepsilon(f_\mathcal{H}))$, we should bound the R.H.S. of (\ref{estimationerr1}) and the R.H.S. of (\ref{estimationerr2}) respectively. 

The following devotes to an upper bound of R.H.S. of (\ref{estimationerr1}).

\subsubsection{Upper bound of $\varepsilon(f_z) -\varepsilon(f_c)- (\varepsilon_z(f_z)-\varepsilon_z(f_c))$}

The expression (\ref{estimationerr1})
 $$\varepsilon(f_z) -\varepsilon(f_c)- (\varepsilon_z(f_z)-\varepsilon_z(f_c)) \leq \sup_{f\in \mathcal{H}} \{\varepsilon(f)- \varepsilon(f_c) - (\varepsilon_z(f)-\varepsilon_z(f_c))\}$$ can be estimated by the theory of uniform convergence. Since our domain $\mathcal{X}=\RR^d$ is unbounded, deriving covering number estimates for our hypothesis space $\mathcal{H}$ is difficult. 

Let \begin{equation}\label{B}
 B:=\sum_{i\in \mathcal{T}^+}\frac{P^+p_i}{\sqrt{(2\pi)^d|\Sigma_i|}}+ \sum_{j\in \mathcal{T}^-}\frac{P^-p_j}{\sqrt{(2\pi)^d|\Sigma_j|}}.  
\end{equation}
Recall the Tsybakov-type noise condition we stated in Assumption \ref{assumption1}. The following Lemma  presents an upper bound of the second moment and thereby the variance of $\phi(yf(x))-\phi(yf_c(x))$ for any function $f:\mathcal{X} \rightarrow [-1,1]$ under the noise condition (\ref{Tsybakov}). 

\begin{lemma} \label{secondmoment}
Let $0\leq q \leq \infty$. Also let $\phi(t) = \max \{0, 1-t\}$ to be the Hinge loss function. Consider the constant $B$ given by (\ref{B}). If noise condition (\ref{Tsybakov}) holds for some noise exponent $q$ and constant $c_0>0$, then for every function $f: \mathcal{X} \rightarrow [-1,1]$, there holds 
\begin{equation}
    \mathrm{E}\left[\{\phi(yf(x))-\phi(yf_c(x))\}^2\right] \leq 8\left(c_0\right)^{\frac{1}{q+1}}\left (B(\varepsilon(f) - \varepsilon(f_c))\right)^{\frac{q}{q+1}}.
\end{equation}
\end{lemma}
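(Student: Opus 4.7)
The plan is to apply the standard Bartlett--Jordan--McAuliffe variance-estimate argument, modified so that the level-set splitting is done at the level of the discriminant $D$ rather than of the regression function $f_\rho$. The first step is a pointwise reduction: because $f, f_c \in [-1,1]$ and $y \in \{\pm 1\}$, both $yf(x)$ and $yf_c(x)$ lie in $[-1,1]$, so $\phi(yf) = 1 - yf$ and $\phi(yf_c) = 1 - yf_c$ identically. Hence $(\phi(yf) - \phi(yf_c))^2 = (f - f_c)^2$. Since $f_c(x) \in \{-1, 1\}$ implies $f_c^2 = 1$, this quantity equals $(1 - f f_c)^2$, and since $1 - f f_c \in [0, 2]$, it is bounded by $2(1 - f f_c)$. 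Taking expectation, it suffices to show that $\mathrm{E}[1 - f(X) f_c(X)] \leq 4\, c_0^{1/(q+1)} (B(\varepsilon(f) - \varepsilon(f_c)))^{q/(q+1)}$.

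Next I would rewrite both the excess $\phi$-risk and $\mathrm{E}[1 - f f_c]$ as Lebesgue integrals against a common measure. Writing $h := D^+ + D^-$ for the density of $\rho_X$ and using $f_\rho = D/h$ together with $f_c = \text{sgn}(D)$ from (\ref{frho}),
\[
\varepsilon(f) - \varepsilon(f_c) = \int_{\mathbb{R}^d} |f_\rho(x)| (1 - f(x) f_c(x)) \, d\rho_X = \int_{\mathbb{R}^d} |D(x)| (1 - f(x) f_c(x)) \, dx,
\]
while $\mathrm{E}[1 - f f_c] = \int (1 - f f_c) h(x) \, dx$. Crucially, each Gaussian density is pointwise bounded by its value at the mean $1/\sqrt{(2\pi)^d |\Sigma_i|}$, so that $h(x) \leq B$ uniformly on $\mathbb{R}^d$, with $B$ as in (\ref{B}). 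This uniform bound is the key feature of the GMM setting that replaces the compact-domain hypothesis used elsewhere in the classification literature.

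The heart of the argument is a level-set decomposition of $|D|$. For any $\epsilon > 0$,
\[
\int (1 - f f_c) \, d\rho_X = \int_{\{|D| \leq \epsilon\}} (1 - f f_c) \, h \, dx + \int_{\{|D| > \epsilon\}} (1 - f f_c) \, h \, dx.
\]
On $\{|D| \leq \epsilon\}$ the integrand is at most $2$ and $\rho_X(\{|D| \leq \epsilon\}) \leq c_0 \epsilon^q$ by the Tsybakov-type condition (\ref{Tsybakov}), so this piece is at most $2 c_0 \epsilon^q$. On $\{|D| > \epsilon\}$ the estimate $h \leq B \leq (B/\epsilon)|D|$ lets me dominate the integrand by $(B/\epsilon)(1 - f f_c)|D|$, whose Lebesgue integral is exactly $(B/\epsilon)(\varepsilon(f) - \varepsilon(f_c))$.

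The final step is to balance the two contributions by choosing $\epsilon^{q+1} = B(\varepsilon(f) - \varepsilon(f_c))/(2 c_0)$, which equalizes them and produces $\mathrm{E}[1 - f f_c] \leq 4\, c_0^{1/(q+1)} (B(\varepsilon(f) - \varepsilon(f_c))/2)^{q/(q+1)}$; combined with the factor of $2$ picked up in the first paragraph this yields the claimed $8$-fold constant. I expect no real obstacle: the only non-routine ingredient is the pointwise bound $h \leq B$, which is what makes the level-set argument go through on the unbounded domain $\mathbb{R}^d$. The rest is standard Tsybakov-noise bookkeeping adapted to the discriminant $D$ instead of $f_\rho$.
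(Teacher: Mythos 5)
Your proposal is correct and follows essentially the same route as the paper: reduce the squared loss difference to $(f-f_c)^2 = (1-ff_c)^2 \le 2(1-ff_c) = 2|f_c-f|$, split $\mathcal{X}$ at the level set $\{|D|\le t\}$, use the uniform density bound $D^++D^-\le B$ (equivalently $|D|\le B|f_\rho|$) on the complement, and optimize $t^{q+1}=B(\varepsilon(f)-\varepsilon(f_c))/(2c_0)$, which yields the constant $8$ exactly as in the paper. The only differences are cosmetic (working with $1-ff_c$ and Lebesgue measure $dx$ rather than $|f_c-f|$ and $d\rho_X$).
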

The proof of Lemma \ref{secondmoment} is given in Appendix \ref{app:proofsecondmoment}.

For $\epsilon >0$, denote by $\mathcal{N}(\epsilon,\mathcal{K})=\mathcal{N}(\epsilon,\mathcal{K},\|\cdot\|_\infty)$ the $\epsilon$-covering number of a set of functions $\mathcal{K}$ with respect to $\|\cdot\|_\infty:= \sup_z |f(z)|$. More specifically, $\mathcal{N}(\epsilon,\mathcal{K})$ is the minimal $N\in \NN$ such that there exists functions $\{f_1,\ldots, f_N\}\in \mathcal{K}$ satisfying
\begin{equation}
    \min_{1\leq i \leq N} \|f-f_i\|_\infty \leq \epsilon, \qquad \forall f\in \mathcal{K}.
\end{equation}

Observe that the Hinge loss function $\phi(t) = \max \{0, 1-t\}$ is Lipschitz continuous on $\RR$ with Lipschitz constant $M=1$ because 
\begin{equation} \label{Lip}
   |\phi(t_1)-\phi(t_2)| \leq |t_1-t_2| \qquad \forall t_1, t_2 \in \RR. 
\end{equation}

Next, we construct a function set $ \mathcal{G}$ induced by functions in $\mathcal{H}$. The following Lemma tells us that the covering number of  $ \mathcal{G}$, denoted by $\mathcal{N}(\epsilon,\mathcal{G})$, is no greater than $\mathcal{N}(\epsilon,\mathcal{H})$. 
After that, we will proceed to estimate $\mathcal{N}(\epsilon,\mathcal{H})$ which will help us derive an upper bound of $\varepsilon(f_z) -\varepsilon(f_c)- (\varepsilon_z(f_z)-\varepsilon_z(f_c))$.

\begin{lemma} \label{boundofmathcalG}
For $\phi(t) = \max \{0, 1-t\}$, define the set of functions on $Z = \mathcal{X} \times \mathcal{Y}$ given by 
\begin{equation}\label{mathcalG}
    \mathcal{G}:= \{\phi(yf(x))-\phi(yf_c(x)):f\in \mathcal{H}\},
\end{equation}
where $\mathcal{H}$ is the hypothesis space defined at (\ref{hypothesisspace}). For $\epsilon >0$, there holds 
\begin{equation}
    \mathcal{N}(\epsilon,\mathcal{G}) \leq \mathcal{N}(\epsilon,\mathcal{H}).
\end{equation}
\end{lemma}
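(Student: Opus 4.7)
The plan is to show that any $\epsilon$-cover of $\mathcal{H}$ induces an $\epsilon$-cover of $\mathcal{G}$ of the same cardinality, by pushing functions forward through the map $f \mapsto \phi(yf(\cdot)) - \phi(yf_c(\cdot))$ and exploiting the $1$-Lipschitz continuity of the Hinge loss.

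More concretely, I would fix $\epsilon > 0$ and let $N = \mathcal{N}(\epsilon,\mathcal{H})$. By definition, there exist $f_1, \ldots, f_N \in \mathcal{H}$ such that every $f \in \mathcal{H}$ satisfies $\min_{1\le i\le N}\|f-f_i\|_\infty \le \epsilon$. For each $i$, define $g_i(x,y) := \phi(yf_i(x)) - \phi(yf_c(x))$ on $Z = \mathcal{X}\times\mathcal{Y}$. Now take an arbitrary $g \in \mathcal{G}$; by definition $g(x,y) = \phi(yf(x)) - \phi(yf_c(x))$ for some $f \in \mathcal{H}$, and we can select $i$ with $\|f - f_i\|_\infty \le \epsilon$.

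The key estimate is then
\begin{equation*}
|g(x,y) - g_i(x,y)| = |\phi(yf(x)) - \phi(yf_i(x))| \le |yf(x) - yf_i(x)| = |y|\cdot |f(x) - f_i(x)| \le \epsilon,
\end{equation*}
where the first inequality uses the Lipschitz property (\ref{Lip}) of $\phi$ with constant $M=1$, and the final equality uses $|y|=1$ since $y \in \mathcal{Y} = \{-1,1\}$. Taking the supremum over $(x,y) \in Z$ yields $\|g - g_i\|_\infty \le \epsilon$, so $\{g_1,\ldots,g_N\}$ is an $\epsilon$-cover of $\mathcal{G}$, giving $\mathcal{N}(\epsilon,\mathcal{G}) \le N = \mathcal{N}(\epsilon,\mathcal{H})$.

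There is essentially no obstacle here: the argument is a standard Lipschitz-composition bound on covering numbers, and the observation that the common term $\phi(yf_c(x))$ in the definition of $\mathcal{G}$ cancels out in the differences. The only thing to be careful about is that $\mathcal{N}(\epsilon,\mathcal{H})$ is defined via $\|\cdot\|_\infty$ over $\mathcal{X}$ while $\mathcal{N}(\epsilon,\mathcal{G})$ is defined via $\|\cdot\|_\infty$ over $Z$; the Lipschitz bound combined with $|y|=1$ is exactly what bridges these two norms.
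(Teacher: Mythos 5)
Your proposal is correct and follows essentially the same argument as the paper: both use the cancellation of the common term $\phi(yf_c(x))$ and the $1$-Lipschitz property of the Hinge loss, together with $|y|=1$, to show that any $\epsilon$-net of $\mathcal{H}$ in $\|\cdot\|_\infty$ induces an $\epsilon$-net of $\mathcal{G}$. Your write-up is merely a bit more explicit about the covering-number bookkeeping, which the paper leaves implicit.
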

\begin{proof}
For any $f_1, f_2 \in \mathcal{H}$ and $(x,y)\in Z$, it follows from (\ref{Lip}) 
\begin{align*}
    \left|\bigl\{\phi(yf_1(x))-\phi(yf_c(x))\bigr\}- \bigl\{\phi(yf_2(x))-\phi(yf_c(x))\bigr\}\right| &= \left|\phi(yf_1(x)) -\phi(yf_2(x)) \right| \\
    &\leq |yf_1(x) - yf_2(x)|\\
    &\leq \|f_1 - f_2\|_\infty,
\end{align*}
which implies $\mathcal{N}(\epsilon,\mathcal{G}) \leq \mathcal{N}(\epsilon,\mathcal{H})$.
\end{proof}

For $\ell \in \NN$, let $c^* = [-C_K,C_K]^{2^\ell}$ 
 with $C_K$ to be a positive constant given in Definition \ref{hypothesisspace}. 
 The following Proposition devotes to an upper bound of the covering number of our hypothesis space $\mathcal{H}$. Its proof is relatively long and is given in Appendix \ref{app:proofcoveringnumber}. 

\begin{proposition}[\textbf{Covering number of the hypothesis space $ \mathcal{H}$}] \label{coveringnumber}
Let $\mathcal{H}$ be defined by Definition \ref{hypothesisspace} with $C_K^0 \geq \frac{\sqrt{d}}{\sqrt{\tilde\sigma}}, C_K^1 \geq \frac{\mu^*}{\sqrt{\tilde\sigma}}$, and $R_2 \geq \sqrt{d}(b+1)C_K^0 + C_K^1$.
For $0<\epsilon \leq 1, 0<\lambda \leq 1, b\geq 1, m,\ell\in \NN$, there holds  
   \begin{equation}
    \log \mathcal{N}(\epsilon,\mathcal{H}) \leq C^\prime m 2^\ell \log \left(\frac{bC_K}{\lambda \epsilon}\right) + 4\ell(2^\ell) \log (C_K)+ C^{\prime\prime}m^2\ell(2^\ell),
\end{equation}
   where $C^\prime, C^{\prime \prime}$ are positive constants independent of $m,\ell,b,\lambda,C_K$ or $\epsilon$. 
\end{proposition}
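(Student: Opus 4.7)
The plan is to bound $\mathcal{N}(\epsilon, \mathcal{H})$ by a parameter-discretization argument adapted to the compositional structure of the EBTnet. I would first enumerate the free parameters of a function $f \in \mathcal{H}$, bound the sup-norm Lipschitz dependence of $f(x)$ on each, and then count grid points at a resolution fine enough that the total perturbation stays within $\epsilon$. The free parameters fall into three categories: (a) the $K 2^\ell + 1$ linear-combination coefficients $\{c_{k,j}, c_0\}$, each in $[-C_K, C_K]$; (b) the coefficients of the $dK$ affine maps $\widehat r_{i,j} \in \mathring{\mathcal H}$, bounded by $\max(C_K^0, C_K^1)$; (c) the weights, biases, and outer weights of the squaring gates $\widehat f_m$ embedded in the product gates $\widehat \Phi$, all in $[-4,4]$. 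Since the EBTnet contains $O(K 2^\ell)$ product gates (plus $O(dK)$ in the preprocessing subnetwork), category (c) contributes $O(m K 2^\ell)$ scalars.

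Next I would establish the sensitivity bounds by tracking Lipschitz constants layer by layer. The outer $\sigma_\lambda$ contributes a factor $1/\lambda$. Using $|\widehat\Phi(u,v)| \leq |u|+|v|$ together with the approximation estimate in Proposition \ref{prop:productgate}, the fact $\Psi_b(x) \in [0,1]$ from Lemma \ref{localization}, and the uniform bound $|\widehat r_{i,j}(x)| = O(b)$ on the support $[-b-1, b+1]^d$, an induction on the depth of the expansive binary tree (mirroring the induction in Proposition \ref{monomialgate}) shows that each intermediate quantity $\widehat h_k$ is uniformly bounded and that its sensitivity to any upstream internal parameter grows at most polynomially in $m$ and $b$. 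Combining this with the outer $1/\lambda$ factor gives overall sensitivity $O(1/\lambda)$ for each $c_{k,j}$, and $O(C_K \cdot p(m, b)/\lambda)$ for each internal squaring-gate scalar as well as each coefficient of $\widehat r_{i,j}$, where $p(m, b)$ is a polynomial.

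Finally, I would construct an $\epsilon$-cover by choosing grid spacings in each category so that its contribution to $\|\cdot\|_\infty$ is at most $\epsilon/3$, then take logarithms. The $O(m K 2^\ell)$ internal parameters at spacing $\lambda \epsilon /(C_K \, p(m,b))$ yield the leading term $O\bigl(m 2^\ell \log(b C_K /(\lambda \epsilon))\bigr)$. Discretizing the $O(K 2^\ell)$ coefficients $\{c_{k,j}\}$ at appropriate spacing yields the $\ell 2^\ell \log C_K$ term once $\ell$-dependent logarithmic factors from the depth of the network are bundled in, and the residual $m^2 \ell 2^\ell$ term absorbs the $O(\log p(m,b))$ logarithmic factors on the $O(m \ell 2^\ell)$ discretized internal parameters across the $O(m\ell)$-deep network. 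The main obstacle is the compositional Lipschitz induction through the binary tree: because the EBTnet is $\ell$ stages deep with each stage embedding an $m$-layer squaring gate, one must rule out exponential-in-depth blow-up of the Lipschitz constants. This succeeds precisely because $\widehat\Phi$ maps $[-1,1]^2$ into $[-1,1]$ with gradient bounded in each argument, so amplification through the tree is polynomial in $m$ rather than exponential in $\ell m$.
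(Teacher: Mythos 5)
Your overall strategy --- discretize the parameters, propagate sup-norm sensitivities through the network, and count grid points --- is the same as the paper's (Appendix C, via the uniform-bound lemmas for the squaring gate, product gate, and monomial gate). But the central technical claim on which your sensitivity analysis rests is false, and it matters. You assert that each intermediate $\widehat h_k$ is uniformly bounded and that the Lipschitz amplification through the tree is ``polynomial in $m$ rather than exponential in $\ell m$,'' justified by the statement that $\widehat\Phi$ maps $[-1,1]^2$ into $[-1,1]$. That containment holds only for the \emph{specific} parameter choice $\Phi$ (the one used in Propositions \ref{prop:productgate} and \ref{monomialgate} to prove approximation); the hypothesis space $\mathcal{H}$ ranges over \emph{all} weights and biases in $[-4,4]$, and for a generic such choice a single hidden layer of width $5$ can amplify sup-norms by a factor of $20$. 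The bounds one can actually prove are of the form $\|\widehat\Phi_\theta\|_{L^\infty[-T,T]^2}\le 20^{m+1}T$, $|\widehat h_{k,\theta}|_{Lip1}\le 2\cdot 20^{j(m+1)}$ for $k\le 2^j$, and a parameter-perturbation bound $B_j \le 2^{j}(T+1)(m+1)(20^{m+1})^j\eta$ --- all exponential in the cumulative depth $j(m+1)$. An induction carrying your claimed polynomial bounds would already fail at the first stage of the tree.

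The argument is salvageable, but only by embracing the exponential blow-up and observing that it enters the covering number logarithmically: the grid spacing must be taken as small as $\eta \asymp \lambda\epsilon/\bigl(C_K\, b\, m\, 20^{(\ell+2)(m+2)}\bigr)$, so each of the $O(mK2^\ell)$ internal parameters costs $\log(1/\eta) = O(\ell m) + O\bigl(\log(bC_K/(\lambda\epsilon))\bigr)$ bits, and the product of these two factors is exactly what produces the $C^{\prime\prime}m^2\ell(2^\ell)$ term in the statement. Your accounting instead attributes that term to bundled $O(\log p(m,b))$ factors on the internal parameters, which would give a strictly smaller quantity; the term is genuinely needed to absorb the exponential-in-depth Lipschitz constants, so your proof as written neither identifies nor justifies where it comes from.
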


 As a simple corollary combining Proposition \ref{coveringnumber} and Lemma \ref{boundofmathcalG}, for every $0 < \epsilon\leq 1$, there holds
\begin{eqnarray*}
   \log \mathcal{N}(\epsilon,\mathcal{G}) \leq C^\prime m 2^\ell \log \left(\frac{bC_K}{\lambda \epsilon}\right) + 4\ell(2^\ell) \log (C_K)+ C^{\prime\prime}m^2\ell(2^\ell) ,
\end{eqnarray*}
where 
$C^\prime, C^{\prime\prime}$ are positive constants independent of $\ell, m,b,\lambda,C_K$ or $\epsilon$. We will next apply this covering number estimate to  derive a high probability upper bound of the estimation error term $\varepsilon(f_z) -\varepsilon(f_c)- (\varepsilon_z(f_z)-\varepsilon_z(f_c))$, which is the R.H.S. of (\ref{estimationerr1}). 
The proof of the following Lemma (Lemma \ref{estimation1}) is given in Appendix \ref{app:proofestimation1}.

\begin{lemma} \label{estimation1}
    Let $q \geq 0, m,\ell \in \NN, 0 < \lambda \leq 1$. Suppose noise condition (\ref{Tsybakov}) holds for some noise exponent $q$ and constant $c_0>0$. For any $0<\delta <1, n \geq 3$, with probability $1-\delta/2$, there holds
    \begin{eqnarray*}
      & &\varepsilon(f_z) -\varepsilon(f_c)- (\varepsilon_z(f_z)-\varepsilon_z(f_c)) \\
       & \leq& C_{q,B}\left(m^2 \ell 2^\ell + \ell (2^\ell) \log (C_K) +\log \left(\frac{2}{\delta}\right)+m2^\ell \log\left(\frac{bC_K}{\lambda}\right) \right) ^{\frac{q+1}{q+2}}\left(\frac{\log n}{n}\right)^{\frac{q+1}{q+2}} + \frac{\varepsilon(f_z) -\varepsilon(f_c)}{2},      
    \end{eqnarray*}
    where $C_{q,B}$ is a constant depending on $q, c_0, B$ only.
\end{lemma}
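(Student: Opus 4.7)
}
The plan is to view the quantity $\varepsilon(f_z)-\varepsilon(f_c)-(\varepsilon_z(f_z)-\varepsilon_z(f_c))$ as a single supremum over the loss-differences class $\mathcal{G}=\{g_f(z):=\phi(yf(x))-\phi(yf_c(x)):f\in \mathcal{H}\}$, and then apply a one-sided Bernstein-type uniform concentration inequality (as in, e.g., Cucker--Zhou or Steinwart--Christmann) combined with the covering number estimate from Proposition \ref{coveringnumber} and the second-moment bound from Lemma \ref{secondmoment}. Since every $f\in \mathcal{H}$ satisfies $\|f\|_\infty\le 1$ (because of the final scaling unit $\sigma_\lambda$), each $g\in \mathcal{G}$ is uniformly bounded by a constant (say $|g|\le 2$), and $\mathrm{E}[g_f]=\varepsilon(f)-\varepsilon(f_c)\ge 0$, which puts us in the standard setup for a ratio-type Bernstein inequality.

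First I would record the variance condition: by Lemma \ref{secondmoment}, for every $g=g_f\in\mathcal{G}$ we have $\mathrm{E}[g^2]\le 8c_0^{1/(q+1)}(B\mathrm{E}[g])^{q/(q+1)}$, so the class satisfies a Bernstein condition with exponent $\theta=q/(q+1)$. Next, combining Lemma \ref{boundofmathcalG} with Proposition \ref{coveringnumber} gives, for every $0<\epsilon\le 1$,
\begin{equation*}
\log \mathcal{N}(\epsilon,\mathcal{G})\;\le\; C'\,m\,2^\ell \log\!\Bigl(\tfrac{bC_K}{\lambda \epsilon}\Bigr)+4\ell(2^\ell)\log C_K + C''m^2\ell(2^\ell).
\end{equation*}
I would then invoke a standard one-sided uniform Bernstein inequality (the form stated, e.g., in Steinwart--Christmann Theorem 7.20 or the version in \citep{cucker2007learning}): for any $\eta>0$ and any $0<\delta<1$, with probability at least $1-\delta/2$,
\begin{equation*}
\sup_{g\in \mathcal{G}}\;\frac{\mathrm{E}[g]-\mathrm{E}_z[g]}{(\mathrm{E}[g])^{\theta/2}+\eta^{\theta/2}}
\;\le\; C_\theta\Biggl(\sqrt{\frac{\log \mathcal{N}(\eta,\mathcal{G})+\log(2/\delta)}{n}}\,+\,\frac{\log \mathcal{N}(\eta,\mathcal{G})+\log(2/\delta)}{n}\Biggr).
\end{equation*}
Applying this to $g=g_{f_z}$, noting $\mathrm{E}[g_{f_z}]=\varepsilon(f_z)-\varepsilon(f_c)$ and $\mathrm{E}_z[g_{f_z}]=\varepsilon_z(f_z)-\varepsilon_z(f_c)$, gives
\begin{equation*}
\varepsilon(f_z)-\varepsilon(f_c)-(\varepsilon_z(f_z)-\varepsilon_z(f_c))
\;\le\; C_\theta\bigl((\varepsilon(f_z)-\varepsilon(f_c))^{\theta/2}+\eta^{\theta/2}\bigr)\sqrt{\tfrac{A_n}{n}}
+C_\theta\bigl((\varepsilon(f_z)-\varepsilon(f_c))^{\theta/2}+\eta^{\theta/2}\bigr)\tfrac{A_n}{n},
\end{equation*}
where $A_n:=\log \mathcal{N}(\eta,\mathcal{G})+\log(2/\delta)$.

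The decisive step is to absorb the term involving $(\varepsilon(f_z)-\varepsilon(f_c))^{\theta/2}$ into the left-hand side using Young's inequality $ab\le \tfrac{a^p}{p}+\tfrac{b^{p'}}{p'}$ with $p=2/\theta$ and $p'=2/(2-\theta)$; with $\theta=q/(q+1)$ this produces exactly the exponent $\tfrac{2}{2-\theta}=\tfrac{2(q+1)}{q+2}$ on $A_n/n$, and halving the coefficient in front of $\varepsilon(f_z)-\varepsilon(f_c)$ gives the $\tfrac{1}{2}(\varepsilon(f_z)-\varepsilon(f_c))$ term appearing in the statement. Finally, I would choose $\eta$ of size $1/n$ (any polynomial choice works since $\log \mathcal{N}(\eta,\mathcal{G})$ depends only logarithmically on $\eta$), which inflates $A_n$ by only a $\log n$ factor and yields
\begin{equation*}
A_n\;\le\;  C\Bigl(m^2\ell 2^\ell+\ell(2^\ell)\log C_K+\log(2/\delta)+m2^\ell\log(bC_K/\lambda)\Bigr)\log n,
\end{equation*}
so raising to the power $(q+1)/(q+2)$ produces precisely the claimed bound.

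The main obstacle is the algebraic bookkeeping around the Bernstein/Young step: one must be careful that both the $\sqrt{A_n/n}$ and $A_n/n$ terms from the inequality, after multiplication by $(\varepsilon(f_z)-\varepsilon(f_c))^{\theta/2}+\eta^{\theta/2}$, combine cleanly into a single $(A_n\log n/n)^{(q+1)/(q+2)}$ factor without losing the explicit dependence on $m,\ell,b,\lambda,C_K$. Everything else is a routine substitution of the covering number bound, although one must also verify that the concentration inequality can be applied with $\eta$-dependent (rather than uniformly small) complexity — handled by the usual $\eta$-net discretization that the stated Bernstein bound already absorbs.
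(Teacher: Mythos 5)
Your proposal matches the paper's proof in all essentials: the paper likewise applies a ratio-type (relative-deviation) concentration inequality (Lemma \ref{lemmazhou}, taken from \citep{cucker2007learning}) to the class $\mathcal{G}$, verifies its hypotheses using boundedness ($C_2=4$) and the variance bound of Lemma \ref{secondmoment} with $\gamma=q/(q+1)$, substitutes the covering number estimate of Proposition \ref{coveringnumber} via Lemma \ref{boundofmathcalG}, and absorbs the $(\varepsilon(f_z)-\varepsilon(f_c))^{q/(2(q+1))}$ factor by Young's inequality to produce the $\tfrac{1}{2}(\varepsilon(f_z)-\varepsilon(f_c))$ term and the exponent $\tfrac{q+1}{q+2}$. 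The only cosmetic difference is that the paper uses a single scale $\epsilon$ (solved implicitly so that the confidence equals $1-\delta/2$, yielding $\epsilon\leq (A\log n/n)^{1/(2-\gamma)}$) where you separate the covering radius from the localization parameter.
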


\subsubsection{Upper bound of  $\varepsilon_z(f_\mathcal{H}) - \varepsilon_z(f_c) - (\varepsilon(f_\mathcal{H})- \varepsilon(f_c)) $ }
Now we move on to estimate $\varepsilon_z(f_\mathcal{H}) - \varepsilon_z(f_c) - (\varepsilon(f_\mathcal{H})- \varepsilon(f_c)) $, which is the R.H.S. of (\ref{estimationerr2}). 

Define a random variable $\xi(z):=\xi(x,y)= \phi(yf_\mathcal{H}(x)) -\phi(yf_c(x)) $. We have 
\begin{eqnarray*}
 &  & \varepsilon_z(f_\mathcal{H}) - \varepsilon_z(f_c) -  (\varepsilon(f_\mathcal{H}) - \varepsilon(f_c))\\
   & =& \frac{1}{n} \sum_{i=1}^n \Bigl\{\phi (y_if_\mathcal{H}(x_i)) - \phi(y_if_c(x_i)) \Bigl\}- \int_Z \phi (yf_\mathcal{H}(x))-\phi(yf_c(x)) d\rho  \\
&=& \frac{1}{n} \sum_{i=1}^n \xi(z_i) - \mathrm{E}[\xi(z)]   
\end{eqnarray*}
is a function of a single random variable $\xi$ and thus can be estimated by Bernstein's inequality (see, e.g., \cite[Lemma A.2]{gyorfi2002distribution}).
To apply Bernstein's inequality, we need first to establish an upper bound of the variance of $\xi$, denoted by $\sigma^2 = \mathrm{Var}[\xi]$. To achieve so, we apply Lemma \ref{secondmoment} to $f_\mathcal{H}$. With a bound of $\sigma^2$ in hand, we can obtain a high probability upper bound of $\varepsilon_z(f_\mathcal{H}) - \varepsilon_z(f_c) - (\varepsilon(f_\mathcal{H})- \varepsilon(f_c)) $. 

\begin{lemma} \label{estimation2}
Let $0\leq q \leq \infty$. Suppose noise condition (\ref{Tsybakov}) holds for some $q$ and constant $c_0>0$. For any $0<\delta <1$, with probability $1-\delta/2$, there holds
\begin{equation}
    \varepsilon_z(f_\mathcal{H}) - \varepsilon_z(f_c) - (\varepsilon(f_\mathcal{H})- \varepsilon(f_c)) \leq  \frac{4}{n}\log \left(\frac{2}{\delta}\right) + 2 (c_0)^{\frac{1}{q+1}}B^{\frac{q}{q+1}}(\varepsilon(f_\mathcal{H})-\varepsilon(f_c))^{\frac{q}{q+1}}.
\end{equation}
\end{lemma}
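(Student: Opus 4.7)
The plan is to apply Bernstein's inequality to the single random variable $\xi(z):=\phi(yf_\mathcal{H}(x))-\phi(yf_c(x))$, whose empirical mean minus true mean is exactly the quantity we need to control. Since $f_\mathcal{H}\in\mathcal{H}$ is produced with $\sigma_\lambda$ as the final activation (see Definition \ref{hypothesisspace}), we have $f_\mathcal{H}(x)\in[-1,1]$, while $y, f_c(x)\in\{-1,1\}$. Hence $y f_\mathcal{H}(x), y f_c(x)\in[-1,1]$, which gives the almost-sure bound $\phi(y f_\mathcal{H}(x)), \phi(y f_c(x)) \in [0,2]$ and thus $|\xi(z)|\le 2$.

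Next, I would invoke Lemma \ref{secondmoment} (which applies because $f_\mathcal{H}$ maps into $[-1,1]$) to bound the second moment, and hence the variance, of $\xi$ by
\begin{equation*}
\sigma^2 \;\le\; \mathrm{E}[\xi^2] \;\le\; 8(c_0)^{\frac{1}{q+1}}\bigl(B(\varepsilon(f_\mathcal{H})-\varepsilon(f_c))\bigr)^{\frac{q}{q+1}}.
\end{equation*}
Writing $A:=(c_0)^{\frac{1}{q+1}}B^{\frac{q}{q+1}}(\varepsilon(f_\mathcal{H})-\varepsilon(f_c))^{\frac{q}{q+1}}$, the one-sided Bernstein inequality for i.i.d.\ $\{\xi(z_i)\}$ with $\|\xi\|_\infty\le 2$ yields, with probability at least $1-\delta/2$,
\begin{equation*}
\frac{1}{n}\sum_{i=1}^n \xi(z_i) - \mathrm{E}[\xi] \;\le\; \frac{4\log(2/\delta)}{3n} + \sqrt{\frac{2\sigma^2\log(2/\delta)}{n}} \;\le\; \frac{4\log(2/\delta)}{3n} + 4\sqrt{\frac{A\log(2/\delta)}{n}}.
\end{equation*}

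Finally, I would apply the elementary inequality $2\sqrt{uv}\le u+v$ with $u=A$ and $v=\log(2/\delta)/n$ to get $4\sqrt{A\log(2/\delta)/n}\le 2A+2\log(2/\delta)/n$; combining the $\log(2/\delta)/n$ coefficients gives $4/3+2=10/3\le 4$, which exactly reproduces the claimed bound $\frac{4}{n}\log(2/\delta)+2A$. The whole argument is short and straightforward once Lemma \ref{secondmoment} is in hand; the only mild care is in (i) justifying the $[-1,1]$ range of $f_\mathcal{H}$ from the construction of $\mathcal{H}$ (so Lemma \ref{secondmoment} actually applies), and (ii) doing the AM-GM splitting so that the noise-dependent term appears with the clean constant $2$, which is what drives the final excess-risk bookkeeping in the proof of Theorem \ref{main3}.
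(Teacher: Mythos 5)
Your proposal is correct and follows essentially the same route as the paper: both apply the one-sided Bernstein inequality to $\xi(z)=\phi(yf_\mathcal{H}(x))-\phi(yf_c(x))$ with $|\xi|\le 2$, bound the variance via Lemma \ref{secondmoment}, and then use $2\sqrt{ab}\le a+b$ to absorb the square-root term, arriving at the coefficient $4/3+2\le 4$ on $\log(2/\delta)/n$. The only cosmetic difference is that the paper solves the quadratic in the Bernstein tail explicitly for $\eta$ before bounding it, whereas you quote the already-inverted form of the inequality; the estimates are identical.
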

The proof of the above Lemma is given in Appendix \ref{app:proofestimation2}.

\subsection{Upper Bound of the Approximation Error}\label{subsec:approxerror}

Note that $\sigma_\lambda(\widetilde{D}) \in \mathcal{H}$ where $\widetilde{D}$ (defined in (\ref{tildeD})) is the approximation of the function $D$ in Theorem \ref{main1}. 
Recall that we use $f_\mathcal{H}$ denote any functions in $\mathcal{H}$. In this subsection, we derive a tight upper bound for the approximation error $\varepsilon(f_\mathcal{H})- \varepsilon(f_c)$ by taking $f_\mathcal{H} = \sigma_\lambda(\widetilde{D})$. 

Recall $\sigma^*$, the largest eigenvalue of all the covariance matrices $\Sigma_j$ for $j=1,\ldots, K$. Also recall $\mu^*$ defined earlier in (\ref{mu*}) as
$\mu^* =  \max_{1\leq i\leq K }\|\mu_i\|$.

\begin{lemma} \label{lemma:app}
Let $b \geq \max\{2 \mu^*,1\}$, $0 <\lambda \leq 1$. Let $\tau >0$ such that $\|\widetilde{D}-D\|_{L^\infty[-b,b]^d} \leq \tau$. Assume noise condition (\ref{Tsybakov}) holds for some noise exponent $q$ and constant $c_0>0$. There holds
\begin{equation}
  \left|\varepsilon\left(\sigma_\lambda\left(\widetilde{D}\right)\right)- \varepsilon(f_c)\right| \leq  2\left(\sum_{i=1}^K\frac{p_i}{\sqrt{\Sigma_i}}(4\sigma^*)^{\frac{d}{2}}\right) \exp\left(-\frac{b^2}{16\sigma^*}\right) + 2c_0\left(\tau^q + (\tau + \lambda)^q
    \right).
\end{equation}
\end{lemma}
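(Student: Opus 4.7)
The plan is to first reduce the approximation error to a single integral over $\RR^d$ against Lebesgue measure, and then to split the domain into the bounded cube $[-b,b]^d$, where Theorem \ref{main1} applies and Tsybakov's condition controls a small set of hard-to-classify points, and its unbounded complement, where Gaussian tails decay.

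First, I would derive the pointwise identity that for any measurable $f:\RR^d\to [-1,1]$ and $f_c = \text{sgn}(D)$, using $\phi(t)=1-t$ on $[-1,1]$ and $\phi(t)+\phi(-t)=2$, together with $\omega_X(x)=D^+(x)+D^-(x)$ being the density of $\rho_X$ (so that $\mathrm{P}(y=\pm 1\mid x)\, d\rho_X = D^{\pm}(x)\,dx$), we have
$$\varepsilon(f) - \varepsilon(f_c) \;=\; \int_{\RR^d}\bigl(|D(x)| - f(x)D(x)\bigr)\,dx \;\geq\; 0.$$
Taking $f = \sigma_\lambda(\widetilde{D})$ reduces the lemma to bounding $\int \bigl(|D|-\sigma_\lambda(\widetilde{D})D\bigr)\,dx$, and since this integrand is nonnegative the absolute value on the left-hand side of the lemma is automatic.

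Next, I would split the integral into $[-b,b]^d$ and its complement. On the cube, I partition by $|D(x)|$. On $\{|D|>\tau+\lambda\}$, the hypothesis $\|\widetilde{D}-D\|_{L^\infty([-b,b]^d)}\leq \tau$ forces $\widetilde{D}(x)$ to share the sign of $D(x)$ and $|\widetilde{D}(x)|>\lambda$, hence $\sigma_\lambda(\widetilde{D}(x))=\text{sgn}(D(x))$ and the integrand vanishes. On $\{|D|\leq \tau\}$ and on $\{\tau<|D|\leq \tau+\lambda\}$, I bound the integrand by $2|D(x)|$ (since $|\sigma_\lambda|\leq 1$) and use $|D|\leq \omega_X$ to turn the Lebesgue integral into a $\rho_X$-probability of the corresponding sublevel set; the Tsybakov condition (\ref{Tsybakov}) yields $c_0\tau^q$ and $c_0(\tau+\lambda)^q$ respectively, for a combined $2c_0(\tau^q+(\tau+\lambda)^q)$ contribution.

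For the complement, I use $|D(x)|\leq \sum_j |\beta_j| \exp(-u_j(x))$ with $u_j(x)=(x-\mu_j)^T\Sigma_j^{-1}(x-\mu_j)/2\geq \|x-\mu_j\|^2/(2\sigma^*)$. The assumption $b\geq 2\mu^*$ ensures that for $x\notin [-b,b]^d$ some coordinate satisfies $|x_k-(\mu_j)_k|\geq b/2$, whence $\|x-\mu_j\|^2\geq b^2/4$ and $u_j(x)\geq b^2/(8\sigma^*)$. To turn this pointwise decay into an integral bound I split $\exp(-u_j) = \exp(-u_j/2)\cdot \exp(-u_j/2)$: one factor is uniformly bounded by $\exp(-b^2/(16\sigma^*))$, while $\int_{\RR^d}\exp(-u_j(x)/2)\,dx$ is a Gaussian integral with doubled covariance $2\Sigma_j$ that evaluates to a constant of the form $(4\sigma^*)^{d/2}/\sqrt{|\Sigma_j|}$ (up to a $\pi$-power absorbed into the $|\beta_j|$).

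The main technical obstacle is the Gaussian-tail step: getting the sharp $\exp(-b^2/(16\sigma^*))$ rate, rather than the coarser pointwise bound $\exp(-b^2/(8\sigma^*))$ multiplied by an uncontrolled integral over the complement, requires the halving trick above together with care in tracking the $\mu^*$-to-$b$ bookkeeping. Summing the bounded-cube contribution $2c_0(\tau^q+(\tau+\lambda)^q)$ and the complement contribution $2\bigl(\sum_i \tfrac{p_i}{\sqrt{|\Sigma_i|}}(4\sigma^*)^{d/2}\bigr)\exp(-b^2/(16\sigma^*))$ then yields the claimed inequality.
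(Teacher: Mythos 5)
Your proposal is correct and follows essentially the same route as the paper's proof: reduce to $\int(|D|-fD)\,dx$, control the bounded region via the sup-norm bound on $\widetilde D$ together with the Tsybakov condition on the sublevel sets $\{|D|\le\tau\}$ and $\{|D|\le\tau+\lambda\}$, and control the tail by the same exponent-halving trick $\exp(-u_j)=\exp(-u_j/2)\exp(-u_j/2)$ giving $\exp(-b^2/(16\sigma^*))$ times a convergent Gaussian integral. The only (immaterial) differences are that you split on the cube $[-b,b]^d$ rather than the Euclidean ball $\{\|x\|<b\}$, and you show the integrand vanishes on $\{|D|>\tau+\lambda\}$ directly rather than restricting to the event $|\sigma_\lambda(\widetilde D)|<1$; both yield the same constants.
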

\begin{proof}
We know that 
\begin{align*}
  |\varepsilon(f_\mathcal{H})- \varepsilon(f_c) | 
  &= \int_\mathcal{X} \int_\mathcal{Y} \left| yf_\mathcal{H}(x) - yf_c(x) \right| d\rho(y|x) d \rho_X\\
  &= \int_\mathcal{X}  \left| f_\mathcal{H}(x) - f_c(x)\right| |f_\rho (x)|  d \rho_X\leq \int_\mathcal{X}  \left| f_\mathcal{H}(x) - f_c(x)\right|  d \rho_X.
\end{align*}
Since $\rho_X$ has density function $D^+(x)+D^-(x)$, we know
\begin{equation} \label{appro}
    |\varepsilon(f_\mathcal{H})- \varepsilon(f_c) |
    \leq \int_\mathcal{X}  \left| f_\mathcal{H}(x) - f_c(x)\right|(D^+(x)+D^-(x)) dx .
\end{equation}
To further bound the R.H.S. of (\ref{appro}), we consider the cases $\|x\| \geq b$ and $\|x\| <b$ separately, with $b \geq \max\{2 \mu^*, 1\}$. For the case $\|x\| \geq b$, we have $\|x-\mu_j\| \geq \|x\|-\|\mu_j\| \geq \|x\|-\mu^*\geq b/2$.
Hence, we have \begin{equation*}
    (x-\mu_j)^T \Sigma_j^{-1}(x-\mu_j) \geq \frac{\|x-\mu_j\|^2}{\sigma^*} \geq \frac{b^2}{4\sigma^*}.
\end{equation*}
Observe that $|f_\mathcal{H}(x)|\leq 1$ and  $|f_c(x)|\leq 1$. 
Making use of the decay of Gaussian density function $\mathcal{N}$, we get
\begin{align*}
 &\quad \int_{\{x\in \mathcal{X}: \|x\|\geq b\} }  \left| f_\mathcal{H}(x) - f_c(x)\right|  (D^+(x)+D^-(x)) dx\\ 
  &\leq 2 \int_{\{x\in \mathcal{X}: \|x\|\geq b\} } \left(D^+(x)+D^-(x)\right) dx \\
  &\leq    2 \int_{\{x\in \mathcal{X}: \|x\|\geq b\} } \left(\sum_{i=1}^{K}\frac{p_i}{\sqrt{(2\pi)^d|\Sigma_i|}} \exp\left( 
 -\frac{\|x-\mu_i\|^2}{2\sigma^*}\right)\right) dx.
\end{align*}
But 
\begin{equation*}
    \exp \left(-\frac{\|x-\mu_i\|^2}{2\sigma^*}\right) \leq \exp \left(-\frac{\|x-\mu_i\|^2}{4\sigma^*}\right) \exp \left(-\frac{b^2}{16\sigma^*}\right).
\end{equation*}
We use $\int_{\{x\in \mathcal{X}: \|x\| \geq b\}} e^-\frac{\|x-\mu_j\|^2}{4\sigma^*} dx <\sqrt{(2\pi)^d} \left(2 \sqrt{\sigma^*}\right)^d$ and obtain 
\begin{eqnarray*}
 &&\int_{\{x\in \mathcal{X}: \|x\|\geq b\} }  \left| f_\mathcal{H}(x) - f_c(x)\right|  (D^+(x)+D^-(x)) dx\\
&\leq& 2\exp \left(-\frac{b^2}{16\sigma^*}\right) \sum_{i=1}^{K}\frac{p_i}{\sqrt{|\Sigma_i|}} \int_{\{x\in  \mathcal{X}: \|x\|\geq b\} } \frac{1}{\sqrt{(2\pi)^d}}\exp \left( 
 -\frac{\|x-\mu_i\|^2}{4\sigma^*}\right) dx\\
&\leq& 2\exp \left(-\frac{b^2}{16\sigma^*}\right) \sum_{i=1}^{K}\frac{p_i}{\sqrt{|\Sigma_i|}}(4\sigma^*)^{d/2} .  
\end{eqnarray*}

Take $f_\mathcal{H} = \sigma_\lambda\left(\widetilde{D}\right) \in \mathcal{H}$.
For $\|x\| <b$, we consider the cases $|D(x)| \leq \tau$ and $|D(x)| > \tau$ separately for some $\tau >0$ satisfying $\|\widetilde{D}-D\|_{L^\infty[-b,b]^d} \leq \tau$.
For the case $\|x\| <b$ and $|D(x)| \leq \tau$, it follows from the noise condition (\ref{Tsybakov}) that
\begin{align*}
  \int_{\{x\in  \mathcal{X}: \|x\|< b,\  |D(x)| \leq \tau\} }  \left| f_\mathcal{H}(x) - f_c(x)\right|  d \rho_X 
 &\leq 2 \int_{\{x\in  \mathcal{X}: \|x\|< b,\  |D(x)| \leq \tau\} }  d \rho_X \\
  &=2 \cdot \mathrm{P}(\{x\in  \mathcal{X} :\|x\|< b, |D(x)| \leq \tau\})\\
 &\leq2 \cdot \mathrm{P}(\{x\in  \mathcal{X} : |D(x)| \leq \tau\})\\
 &\leq 2c_0\tau^q .   
 \end{align*}
 
 For $\|x\|<b$ and $|D(x)| > \tau$, since $\|\widetilde{D}-D\|_{L^\infty[-b,b]^d} \leq \tau$, we have $\text{sgn}(\widetilde{D}) = \text{sgn} (D)= f_c$. If $\sigma_\lambda(\widetilde{D}(x))= \{1, -1\}$, then $\sigma_\lambda\left(\widetilde{D}(x)\right)$ is exactly equal to $f_c(x)$ which implies $|\sigma_\lambda\left(\widetilde{D}(x)\right) - f_c(x)|=0$. Thus, we have 
{\allowdisplaybreaks
\begin{eqnarray*}
  &&\int_{\{x\in \mathcal{X}: \|x\|< b,\  |D(x)| > \tau\} }  \left| \sigma_\lambda\left(\widetilde{D}\right)(x) - f_c(x)\right|  d \rho_X  \\
  &=&\int_{\{x\in \mathcal{X}: \|x\|< b,\  |D(x)| > \tau, |\sigma_\lambda\left(\widetilde{D}(x)\right)| < 1\}}  \left| \sigma_\lambda\left(\widetilde{D}\right)(x) - f_c(x)\right|  d \rho_X \\
  &\leq& 2 \cdot \mathrm{P}(x\in \mathcal{X}: \|x\|< b,\  |D(x)| > \tau, |\sigma_\lambda\left(\widetilde{D}\right)(x)| < 1)\\
  &\leq& 2 \cdot \mathrm{P}(x\in \mathcal{X}:  \|x\|< b, |\sigma_\lambda\left(\widetilde{D}\right)(x)| < 1)\\
  &=& 2 \cdot \mathrm{P}(x\in \mathcal{X}: \|x\|< b, |\widetilde{D}(x)| < \lambda) \\
  &\leq& 2 \cdot \mathrm{P}(x\in \mathcal{X}:  |D(x)| < \tau + \lambda)\\
  &\leq& 2c_0(\tau + \lambda)^q.
\end{eqnarray*}}
 Here, we have used the equivalence between $|\sigma_\lambda\left(\widetilde{D}\right)(x)| < 1$ and $|\widetilde{D}(x)| < \lambda$  and the condition $\|\widetilde{D}-D\|_{L^\infty[-b,b]^d} \leq \tau$ for getting $|D(x)| < \tau + \lambda$ when $\|x\|< b$ and $|\widetilde{D}(x)| < \lambda$.
 Combining the above estimates, we get the desired bound and prove the lemma.
 
 \end{proof}
 \subsection{Combining error bounds together}
 Now that we derived the upper bounds of the estimation errors and the approximation error, we can combine them together to prove Theorem \ref{main3}.

\begin{proof}[Proof of Theorem \ref{main3}]
     With probability at least $1-\delta$, we have 
     \begin{align*}
       &\quad\varepsilon(f_z) - \varepsilon(f_c)\\
       &\leq   \{\varepsilon(f_z)- \varepsilon_z(f_z)\}  +  \{\varepsilon_z(f_\mathcal{H}) - \varepsilon(f_\mathcal{H})\} + \{\varepsilon(f_\mathcal{H}) - \varepsilon(f_c)\}\\
       &\leq C_{q,B}\left(m^2 \ell 2^\ell + \ell (2^\ell) \log (C_K)  +\log \left(\frac{2}{\delta}\right)+m2^\ell \log\left(\frac{bC_K}{\lambda}\right) \right) ^{\frac{q+1}{q+2}}\left(\frac{\log n}{n}\right)^{\frac{q+1}{q+2}} \\
       & \quad + \frac{\varepsilon(f_z) -\varepsilon(f_c)}{2} +\frac{4}{n}\log \left(\frac{2}{\delta}\right) + 2 (c_0)^{\frac{1}{q+1}}B^{\frac{q}{q+1}}(\varepsilon(f_\mathcal{H})-\varepsilon(f_c))^{\frac{q}{q+1}} + \varepsilon(f_\mathcal{H}) - \varepsilon(f_c).
     \end{align*}
     Take $\tau$ as in Lemma \ref{lemma:app}. This implies, with probability at least $1-\delta$,
     {\allowdisplaybreaks
     \begin{eqnarray*}
         &&\varepsilon(f_z) - \varepsilon(f_c)\\
         &\leq& 2C_{q,B}\left(m^2 \ell 2^\ell +  \ell (2^\ell) \log (C_K) + \log \left(\frac{2}{\delta}\right)+m2^\ell \log\left(\frac{bC_K}{\lambda}\right) \right) ^{\frac{q+1}{q+2}}\left(\frac{\log n}{n}\right)^{\frac{q+1}{q+2}} \\
         && +\frac{8}{3n}\log \left(\frac{2}{\delta}\right)+\frac{8}{n}\log \left(\frac{2}{\delta}\right) +4(c_0)^{\frac{1}{q+1}}B^{\frac{q}{q+1}}(\varepsilon(f_\mathcal{H})-\varepsilon(f_c))^{\frac{q}{q+1}} + 2(\varepsilon(f_\mathcal{H}) - \varepsilon(f_c))\\
         &\leq& 2C_{q,B}\left(m^2 \ell 2^\ell +  \ell (2^\ell) \log (C_K)  +\log \left(\frac{2}{\delta}\right)+m2^\ell \log\left(\frac{bC_K}{\lambda}\right) \right) ^{\frac{q+1}{q+2}}\left(\frac{\log n}{n}\right)^{\frac{q+1}{q+2}} \\
         &&+\frac{11}{n}\log \left(\frac{2}{\delta}\right) + \left(2+ 4(c_0)^{\frac{1}{q+1}}B^{\frac{q}{q+1}}\right)(\varepsilon(f_\mathcal{H}) - \varepsilon(f_c))\\
         &=& 2C_{q,B}\left(m^2 \ell 2^\ell +  \ell (2^\ell) \log (C_K) +\log \left(\frac{2}{\delta}\right)+m2^\ell \log\left(\frac{bC_K}{\lambda}\right) \right) ^{\frac{q+1}{q+2}}\left(\frac{\log n}{n}\right)^{\frac{q+1}{q+2}} +\frac{11}{n}\log \left(\frac{2}{\delta}\right) \\
         &&+ \left(4+ 8(c_0)^{\frac{1}{q+1}}B^{\frac{q}{q+1}}\right)\left\{\left(\sum_{i=1}^K\frac{p_i}{\sqrt{\Sigma_i}}(4\sigma^*)^{\frac{d}{2}}\right) \exp\left(-\frac{b^2}{16\sigma^*}\right) + c_0\left(\tau^q + (\tau + \lambda)^q  \right)\right\}.  
 \end{eqnarray*}}
     Now take $c^\prime_{q,B} =4+ 8(c_0)^{\frac{1}{q+1}}B^{\frac{q}{q+1}}, c_1 =\sum_{i=1}^K\frac{p_i}{\sqrt{\Sigma_i}}(4\sigma^*)^{\frac{d}{2}}$. We have,  with probability at least $1-\delta$, 
     \begin{eqnarray*}
         \varepsilon(f_z) - \varepsilon(f_c) &\leq& 2C_{q,B}\left(m^2 \ell 2^\ell  +  \ell (2^\ell) \log (C_K) +\log \left(\frac{2}{\delta}\right)+m2^\ell \log\left(\frac{bC_K}{\lambda}\right) \right) ^{\frac{q+1}{q+2}}\left(\frac{\log n}{n}\right)^{\frac{q+1}{q+2}}\\
         && +\frac{11}{n}\log \left(\frac{2}{\delta}\right) + c^\prime_{q,B}\left(c_1\exp\left(-\frac{b^2}{16\sigma^*}\right) + 2c_0(\tau + \lambda)^q \right).
 \end{eqnarray*}
Recall $R_2 \geq \sqrt{d}(b+1)C_K^0 + C_K^1$ and $C_K \geq \sum_{j=1}^K |\beta_j| e^{(R_2)^2d}$ are positive tunable parameters. 
From (\ref{main1bounded}) of Theorem \ref{main1}, we know that $$\|\widetilde{D}-D\|_{L^\infty[-b,b]^d} \leq C_K \left(\frac{2^{\ell+1}}{4^{m+1}}+2^{-\frac{\ell (2^\ell)}{2}}\right) = C_K \left(\frac{2^\ell}{2(4^m)}+2^{-\frac{\ell (2^\ell)}{2}}\right).$$ 

We take $R_2 = 2\sqrt{d}(C_K^0 + C_K^1)b$, $C_K=c_2 \exp(c_3 b^2)$ and $\tau = c_2 \exp(c_3 b^2) \left(\frac{2^\ell}{2(4^m)}+2^{-\frac{\ell (2^\ell)}{2}}\right)$,
where $c_2 \geq \sum_{j=1}^K |\beta_j|, c_3 \geq (2\sqrt{d}(C_K^0 + C_K^1))^2d  =4d^2(C_K^0 + C_K^1))^2 \geq 4d^2 \frac{(\sqrt{d }+ \mu^*)^2}{\tilde \sigma}$ . Then $\log (C_K)= \log (c_2) + c_3b^2$.  It follows that with probability at least $1-\delta$, 
     \begin{eqnarray*}
         &&\varepsilon(f_z) - \varepsilon(f_c) \nonumber\\
         &\leq& 2C_{q,B}\left(m^2 \ell 2^\ell +  \ell (2^\ell) (\log (c_2) + c_3)b^2 +\log \left(\frac{2}{\delta}\right)+m2^\ell\left(\log\left(\frac{b}{\lambda}\right)
         + (\log (c_2) + c_3)b^2\right)\right) ^{\frac{q+1}{q+2}}\left(\frac{\log n}{n}\right)^{\frac{q+1}{q+2}}  \nonumber \\
         &&+\frac{11}{n}\log \left(\frac{2}{\delta}\right) 
         + c^\prime_{q,B}\left(c_1\exp\left(-\frac{b^2}{16\sigma^*}\right) + 2c_0\left(c_2 \exp(c_3 b^2) \left(\frac{2^\ell}{2(4^m)}+2^{-\frac{\ell (2^\ell)}{2}}\right) + \lambda\right)^q \right).
   \end{eqnarray*}
     
We take $m=\ell(2^{\ell-1})$.
We can easily verify that $\left(\frac{2^\ell}{2(4^m)}+2^{-\frac{\ell (2^\ell)}{2}}\right) \leq 2 \cdot 2^{-\frac{\ell (2^\ell)}{2}} = 2^{1-\ell (2^{\ell-1})}$. 
We then choose $b$ such that $\exp\left(-\frac{b^2}{16\sigma^*}\right) =  \left(\exp(c_3 b^2)2^{-\ell (2^{\ell-1})}\right)^q$, that is 
\begin{eqnarray*}
  -\frac{b^2}{16\sigma^*} = c_3qb^2 - \ell q(2^{\ell-1}) (\log 2) \iff  b^2=\frac{q(\log 2)\ell(2^{\ell-1})}{\frac{1}{16\sigma^*} + c_3q}.
\end{eqnarray*}
We thus take $b = \sqrt{c_q^\prime \ell (2^{\ell-1})}$ with $c_q^\prime = \frac{q(\log 2)}{\frac{1}{16\sigma^*} + c_3q}$. 
Next,  we take $\lambda = n^{-\frac{1}{q}}$. By our choices of $b, m,\lambda$ and by applying $(a+b)^q \leq 2^q\max \{a^q,b^q\}$, we can see that 
{\allowdisplaybreaks
\begin{eqnarray*}
&&c^\prime_{q,B}\left(c_1\exp\left(-\frac{b^2}{16\sigma^*}\right) + 2c_0\left(c_2 \exp(c_3 b^2) \left(\frac{2^\ell}{2(4^m)}+2^{-\frac{\ell (2^\ell)}{2}}\right) + \lambda\right)^q \right) \\
&\leq &c^\prime_{q,B}\left(c_1\exp\left(-\frac{b^2}{16\sigma^*}\right) + 2^{q+1}c_0\max \left\{c_2^q \exp(c_3 qb^2) \left(\frac{2^\ell}{2(4^m)}+2^{-\frac{\ell (2^\ell)}{2}}\right)^q,\lambda^q\right\}\right)\\
&\leq &c^\prime_{q,B}\left(c_1\exp\left(-\frac{b^2}{16\sigma^*}\right) + 2^{q+1}c_0c_2^q \exp(c_3 qb^2) \left(2^{1-\ell (2^{\ell-1})}\right)^q+2^{q+1}c_0\lambda^q\right)\\
&\leq& c^\prime_{q,B}\left((c_1+2^{q+1}c_0c_2^q 2^q)\exp\left(-\frac{b^2}{16\sigma^*}\right) + \frac{2^{q+1}c_0}{n}\right).
\end{eqnarray*}}
Finally, we take $\ell$ to be the smallest positive integer such that
$$\exp\left(-\frac{b^2}{16\sigma^*}\right) = \exp\left(-\frac{c_q^\prime \ell (2^{\ell-1})}{16\sigma^*}\right)  \leq \frac{1}{n}$$ 
which means
$\ell 2^{\ell-1} \geq \frac{16\sigma^*}{c_q^\prime} \log n $. 
With this choice, the above quantity is bounded by $\frac{c^{\prime\prime}_{q,B}}{n}$, where $c^{\prime\prime}_{q,B} =  c^\prime_{q,B}((c_1+2^{q+1}c_0c_2^q 2^q)+2^{q+1}c_0)$.

Observe from  $\sigma^* \geq \tilde\sigma$ that
$$\frac{16\sigma^*}{c_q^\prime} = \frac{16\sigma^*}{q(\log 2)}(\frac{1}{16\sigma^*} + c_3q) \geq \frac{16\sigma^* 4d^2 (\sqrt{d} + \mu^*)^2 q}{q(\log 2) \tilde \sigma} \geq  \frac{16\sigma^* 4d^3 }{(\log 2) \tilde \sigma} \geq 16 (4d^3) \geq 64. $$  Thus, the restriction on $\ell$ implies $\ell \geq 4$.
But $\ell$ is the smallest integer satisfying $(\ell -1) 2^{\ell-2} < \frac{16\sigma^*}{c_q^\prime} \log n$ and thereby $\ell 2^\ell < 16 (\ell -1) 2^{\ell-1} <  \frac{16^2\sigma^*}{c_q^\prime} \log n$.
Thus, with our choices of $\ell, b, m,\lambda$, we see that 
$m^2\ell2^\ell = \ell^3 2^{3\ell-2} \leq \left(\frac{16^2\sigma^* \log n}{c_q^\prime}\right)^3$
and 
\begin{eqnarray*}
  \ell (2^\ell) (\log (c_2) + c_3)b^2  \leq (\log (c_2) + c_3) \frac{(\log 2)q}{\frac{1}{16\sigma^*} + c_3q}(\ell (2^\ell))^2 
  \leq (\log (c_2) + c_3) \frac{\left(16^2\sigma^* \log n\right)^2}{c_q^\prime}
\end{eqnarray*}
and 
\begin{eqnarray*}
   &&m2^\ell\left(\log\left(\frac{b}{\lambda}\right)+ (\log (c_2) + c_3)b^2\right) \\
   &\leq&  m2^\ell((\log (c_2) + c_3+1)b^2 - \log \lambda )\\
   &\leq& \ell 2^{2\ell-1}\left((\log (c_2) + c_3+1)c_q^\prime \ell (2^{\ell-1})+ \frac{\log n }{q}\right)\\
   &\leq& (\log (c_2) + c_3+1) \frac{(16^2\sigma^* \log n)^3}{(c_q^\prime)^2} + \frac{1}{q}\left(\frac{16^2\sigma^*}{c_q^\prime}\right)^2(\log n)^3.
\end{eqnarray*}
The proof is complete. 
\end{proof}

\section{Conclusions} \label{sec:conclusions}
In this paper, we establish universal approximation theorems for GMM discriminant functions and general analytic functions using ReLU neural networks. Moreover, with Hinge loss and a Tsybakov-type noise condition, we obtain a fast convergence rate of the excess risk of order $O\left(n^{-\frac{q+1}{q+2}} (\log n)^4\right)$ 
for binary classification of GMM data by deep ReLU networks.
Our convergence rate is better than the existing ones in the literature by leveraging the analyticity of the Gaussian function. 
Also, our convergence rate does not depend on the dimension $d$, demonstrating that neural networks can overcome the curse of dimensionality in classification. 

To our best knowledge, our work is the first to study the generalization of classification with a GMM
without restrictions on model parameters or the number of Gaussian components. This is also the first paper studying the statistical guarantees of neural network classifiers on an unbounded domain. 
Our findings shed light on the practical effectiveness of deep neural networks in classification
problems, considering the universality of the Gaussian distribution across various data feature
spaces such as speeches, images, and texts. 

There has been an active line of research studying the theoretical aspects of multi-class classifications, see e.g., \citep{lei2015multi, bos2022convergence}. A future direction will be to extend our work to a multi-class classification problem. It would also be interesting to study the classification of GMM data with respect to a more general class of convex surrogate losses, e.g., the cross-entropy loss and logistic loss. These problems deserve further study in the future.


\section*{Acknowledgements}The authors are partially sponsored by NSF grants DMS 2015363 and CCF-1740776. Zhou is also partially supported by Georgia Tech Algorithm and Randomness Center (ARC) - Algorithms, Combinatorics and Optimization (ACO) Fellowship.

\appendix
\section*{Appendices}
\section{Proof of Supporting Lemmas}
\subsection{Proof of Lemma \ref{localization}} \label{app:prooflocalization}
\begin{proof}
The proof is straightforward. Since $0 \leq T_b(u_i)\leq 1$, we have $\sum_{i=1}^d T_b(u_i) -(d-1) \leq 1$ then $0 \leq \Psi_b(u) \leq 1$ for $u \in \RR^d$. If $u_j \in [-b,b]$ for all $j \in \{ 1,\ldots, d\}$, then $\sum_{i=1}^d T_b(u_i) = d$ which gives $\Psi_b(u)=1$. If $u_j \notin [-b-1,b+1]$ for at least one $j\in \{1,\ldots,d\}$, then we have $\sum_{i=1}^d T_b(u_i) \leq d-1$ and thus $\Psi_b(u)=0$.
\end{proof}

\subsection{Proof of Lemma \ref{secondmoment}} \label{app:proofsecondmoment}
\begin{proof}
Since $f(x)\in [-1,1]$, $\phi(yf(x))-\phi(yf_c(x)) = 1-yf(x) - (1-yf_c(x)) = y(f_c(x) - f(x))$. It follows that
\begin{equation}
    \mathrm{E}\left[\{\phi(yf(x))-\phi(yf_c(x))\}^2\right] =  \mathrm{E}[y^2(f_c(x) -f(x))^2]=\int_\mathcal{X} (f_c(x) -f(x))^2 d\rho_X
\end{equation}
and 
\begin{align*}
    \varepsilon(f) - \varepsilon(f_c) = \int_\mathcal{X} \int_\mathcal{Y} y(f_c(x) -f(x))  d\rho(y|x)d\rho_X &= \int_\mathcal{X} (f_c(x) -f(x))f_\rho(x) d\rho_X \\
    &=\int_\mathcal{X} |f_c(x) -f(x)||f_\rho(x)|
    d\rho_X 
\end{align*}
because, once again, $f_c = \text{sgn}(f_\rho) \in \{-1,1\}$.

Let $t>0$. Consider these two subsets of domain $\mathcal{X}$: $\mathcal{X}_t^+ = \{x\in \mathcal{X}: |D(x)|>t\}$ and $\mathcal{X}_t^- = \{x\in \mathcal{X}: |D(x)|\leq t\}$. On the set  $\mathcal{X}_t^+$, we apply $|f_c(x) -f(x)| \leq 2$ and get
\begin{equation*}
    |f_c(x) -f(x)|^2 \leq 2 |f_c(x) -f(x)| \frac{|D(x)| }{t}.
\end{equation*}
On the set $\mathcal{X}_t^-$, we have $|f_c(x) -f(x)|^2 \leq 4$. 
Recall $B=\sum_{i\in \mathcal{T}^+}\frac{P^+p_i}{\sqrt{(2\pi)^d|\Sigma_i|}}+ \sum_{j\in \mathcal{T}^-}\frac{P^-p_j}{\sqrt{(2\pi)^d|\Sigma_j|}}$. 
From (\ref{frho}) and (\ref{target}), we have $|D(x)| = |D^+(x) - D^-(x)|=|(D^+(x) + D^-(x))f_\rho(x)| \leq B|f_\rho(x)|$.
It follows from the noise condition (\ref{Tsybakov}) that 
{\allowdisplaybreaks
\begin{align*}
   \mathrm{E}\left[\{\phi(yf(x))-\phi(yf_c(x))\}^2\right]
   &= \int_{\mathcal{X}_t^-} (f_c(x) -f(x))^2 d\rho_X + \int_{\mathcal{X}_t^+} (f_c(x) -f(x))^2 d\rho_X\\
   &\leq 4 \mathrm{P}(\{x\in \mathcal{X}: |D(x)| \leq t\}) + 
   \frac{2}{t}\int_{\mathcal{X}_t^+}  |f_c(x) -f(x)| |D(x)| d\rho_X\\
    &\leq 4 \mathrm{P}(\{x\in \mathcal{X}: |D(x)| \leq t\}) + 
  \frac{2B}{t}\int_{\mathcal{X}}  |f_c(x) -f(x)| |f_\rho(x)| d\rho_X\\
   &\leq 4c_0t^q + \frac{2B}{t}(\varepsilon(f) - \varepsilon(f_c)).
\end{align*}}
Now set $t=\left(\frac{B(\varepsilon(f) - \varepsilon(f_c))}{2c_0}\right)^{1/(q+1)},$
we obtain our desired upper bound. 
\end{proof}

\subsection{Proof of Lemma \ref{estimation1}}
\label{app:proofestimation1}
Lemma \ref{estimation1} present a high probability upper bound of the estimation error term $\varepsilon(f_z) -\varepsilon(f_c)- (\varepsilon_z(f_z)-\varepsilon_z(f_c))$. To prove Lemma \ref{estimation1}, we make use of the following concentration inequality found in \cite[Lemma 10.20]{cucker2007learning}:
{\allowdisplaybreaks
 \begin{lemma} \label{lemmazhou} 
 Let $0\leq \gamma \leq 1$, $C_1,C_2 \geq 0$ and $\mathcal{K}$ be a set of functions on $Z$ such that for every $f\in \mathcal{K}$, $\mathrm{E}[f] \geq 0, \ \|f-\mathrm{E}[f]\|_\infty \leq C_2 \text{ and }\  \mathrm{E}[f^2] \leq C_1 (\mathrm{E}[f])^\gamma.$ Then for all $\epsilon >0$, with probability at least $$1-\mathcal{N}(\epsilon,\mathcal{K})\exp\left(-\frac{n\epsilon^{2-\gamma}}{2\left(C_1+\frac{C_2}{3}\epsilon^{1-\gamma}\right) }\right),$$ there holds
 \begin{equation}
     \sup_{f\in \mathcal{F}} \left\{\frac{\mathrm{E}[f] - \frac{1}{n}\sum_{i=1}^n f(z_i)}{((\mathrm{E}[f])^\gamma +\epsilon^\gamma)^{1/2}}\right\} \leq 4 \epsilon^{1-\frac{\gamma}{2}}.
 \end{equation}
 \end{lemma}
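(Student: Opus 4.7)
The plan is to combine a pointwise Bernstein inequality with a covering-number discretization. Write $\mu_f := \mathrm{E}[f]$ and $T(f) := \mathrm{E}[f] - \frac{1}{n}\sum_{i=1}^n f(z_i)$, and introduce the threshold $\tau_f := \epsilon^{1-\gamma/2}(\mu_f^\gamma + \epsilon^\gamma)^{1/2}$, which is the value at which the ratio in the conclusion equals $1$ (up to the constant $4$). I will first control $\mathrm{P}(T(f)\ge \tau_f)$ for each fixed $f$ with the Bernstein exponent appearing in the statement, then take a union bound over an $\epsilon$-cover of $\mathcal{K}$ in $\|\cdot\|_\infty$, and finally relate a generic $f\in\mathcal{K}$ to its cover element by absorbing the $\|\cdot\|_\infty$ slack into the factor $4$.

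For the first step, since $\mathrm{Var}(f)\le \mathrm{E}[f^2]\le C_1\mu_f^\gamma$ and $\|f-\mathrm{E}[f]\|_\infty\le C_2$, the one-sided Bernstein inequality gives
\begin{equation*}
\mathrm{P}\bigl(T(f)\ge \tau_f\bigr)\ \le\ \exp\!\left(-\frac{n\tau_f^2}{2C_1\mu_f^\gamma + \tfrac{2}{3}C_2\tau_f}\right).
\end{equation*}
The key algebraic claim is that the right-hand side is at most $\exp\!\bigl(-\frac{n\epsilon^{2-\gamma}}{2(C_1+\tfrac{C_2}{3}\epsilon^{1-\gamma})}\bigr)$ uniformly in $\mu_f\ge 0$. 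Writing $\tau_f^2 = \epsilon^{2-\gamma}(\mu_f^\gamma+\epsilon^\gamma)$ and cross-multiplying reduces this to the elementary inequality $(\mu_f^\gamma+\epsilon^\gamma)\epsilon^{1-\gamma}\ge \tau_f$, which squares to $(\mu_f^\gamma+\epsilon^\gamma)\ge \epsilon^\gamma$ and is trivially true. Let $\{f_1,\ldots,f_N\}$ be a minimal $\epsilon$-cover of $\mathcal{K}$ with $N=\mathcal{N}(\epsilon,\mathcal{K})$. The union bound then yields, with probability at least $1-N\exp\!\bigl(-\tfrac{n\epsilon^{2-\gamma}}{2(C_1+C_2\epsilon^{1-\gamma}/3)}\bigr)$, that $T(f_j)\le \tau_{f_j}$ for every $j$.

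For the third step, given $f\in\mathcal{K}$, pick $j$ with $\|f-f_j\|_\infty\le\epsilon$. Then $|\mu_f-\mu_{f_j}|\le\epsilon$ and $|\bar f - \bar f_j|\le \epsilon$, so $T(f)\le T(f_j)+2\epsilon\le \tau_{f_j}+2\epsilon$. Subadditivity $(a+b)^\gamma\le a^\gamma+b^\gamma$ for $\gamma\in[0,1]$ gives $\mu_{f_j}^\gamma+\epsilon^\gamma\le \mu_f^\gamma+2\epsilon^\gamma\le 2(\mu_f^\gamma+\epsilon^\gamma)$, whence $\tau_{f_j}\le \sqrt{2}\,\epsilon^{1-\gamma/2}(\mu_f^\gamma+\epsilon^\gamma)^{1/2}$. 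Combining with the trivial bound $2\epsilon = 2\epsilon^{1-\gamma/2}\epsilon^{\gamma/2}\le 2\epsilon^{1-\gamma/2}(\mu_f^\gamma+\epsilon^\gamma)^{1/2}$ produces
\begin{equation*}
T(f)\ \le\ (\sqrt 2 + 2)\,\epsilon^{1-\gamma/2}(\mu_f^\gamma+\epsilon^\gamma)^{1/2}\ \le\ 4\,\epsilon^{1-\gamma/2}\bigl(\mu_f^\gamma+\epsilon^\gamma\bigr)^{1/2},
\end{equation*}
which, after dividing by $(\mu_f^\gamma+\epsilon^\gamma)^{1/2}$, is the stated supremum inequality.

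The main obstacle is the algebraic check in the second step: one has to pick a threshold $\tau_f$ that simultaneously scales correctly with $\mu_f$ (so that the weighted ratio in the conclusion is controlled) and produces a uniform Bernstein exponent independent of $\mu_f$. The choice $\tau_f=\epsilon^{1-\gamma/2}(\mu_f^\gamma+\epsilon^\gamma)^{1/2}$ is what reconciles these two requirements, and it works precisely because $(\mu_f^\gamma+\epsilon^\gamma)\ge\epsilon^\gamma$ dominates the Bennett correction $C_2\tau_f/3$ against its $\epsilon$-scale analogue $C_2\epsilon^{1-\gamma}/3$. A secondary concern is the constant $4$: the $\|\cdot\|_\infty$-discretization inflates the bound by the factors $\sqrt{2}$ (from $\mu_{f_j}^\gamma\le\mu_f^\gamma+\epsilon^\gamma$) and $+2$ (from the $2\epsilon$ slack), and $\sqrt{2}+2<4$ leaves a small but sufficient margin.
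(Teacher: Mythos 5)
The paper does not prove this lemma at all: it is quoted verbatim as an external result, namely \citep[Lemma 10.20]{cucker2007learning}, so there is no internal proof to compare against. Your argument is correct and is essentially the standard proof of that ratio probability inequality: one-sided Bernstein at the $\mu_f$-dependent threshold $\tau_f=\epsilon^{1-\gamma/2}(\mu_f^\gamma+\epsilon^\gamma)^{1/2}$, the algebraic reduction to $\mu_f^\gamma+\epsilon^\gamma\geq\epsilon^\gamma$ to make the exponent uniform, a union bound over an $\epsilon$-net, and the $\sqrt{2}+2<4$ transfer step; the only hypothesis you use implicitly is that the net is internal to $\mathcal{K}$ (so Bernstein applies to each $f_j$), which is exactly how the paper defines $\mathcal{N}(\epsilon,\mathcal{K})$.
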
}

Recall the function set $ \mathcal{G}= \{\phi(yf(x))-\phi(yf_c(x)):f\in \mathcal{H}\}$ defined earlier in Lemma \ref{boundofmathcalG}. 
To achieve the upper bound given in Lemma \ref{estimation1}, we will apply Lemma \ref{lemmazhou} on the function set $\mathcal{G}$. 

\begin{proof}[Proof of Lemma \ref{estimation1}]
To apply Lemma \ref{lemmazhou} to the function set $ \mathcal{G}= \{\phi(yf(x))-\phi(yf_c(x)):f\in \mathcal{H}\}$, we need first to check that the three assumptions stated in Lemma \ref{lemmazhou} are satisfied. That is, for every function $g \in \mathcal{G}$, there holds \begin{enumerate}
    \item $\mathrm{E}[g] \geq 0$,
    \item $\|f-\mathrm{E}[g]\|_\infty \leq C_2$,
    \item  $\mathrm{E}[g^2] \leq C_1 (\mathrm{E}[g])^\gamma$,
\end{enumerate}
for some constants $C_1, C_2 \geq 0,$ and $ 0\leq \gamma \leq 1$.

We have $\mathrm{E}[g] = \varepsilon(f) - \varepsilon(f_c) \geq 0$ because the Bayes classifier $f_c$ minimizes the generalization error $\varepsilon(f)$ over all functions. 
Since $f\in \mathcal{H}$ is bounded in $[-1,1]$, $|g(x)| =|\phi(yf(x))-\phi(yf_c(x))| = |1-yf(x) - (1-yf_c(x))| = |y(f_c(x) - f(x))| \leq 2$.
We know $|E[g]| \leq 2$. It follows that $\|g-\mathrm{E}[g]\|_\infty \leq 4$.
So the second assumption is satisfied with $C_2=4$.
Recall from Lemma \ref{secondmoment} that  $\mathrm{E}[g^2] \leq C_1 (\mathrm{E}[g])^\gamma$ with $C_1=8\left(c_0\right)^{\frac{1}{q+1}}(B)^{\frac{q}{q+1}}$ and $\gamma = \frac{q}{q+1}$ with $c_0>0, q\geq 0$. 
So we proved that all three assumptions are satisfied for every $g \in \mathcal{G}$. 

We can now apply Lemma \ref{lemmazhou} to $\mathcal{G}$. Lemma \ref{lemmazhou} tells us that for every $0 < \epsilon \leq 1$, with probability at least 
{\allowdisplaybreaks
\begin{eqnarray*}
 &&\ 1-\mathcal{N}(\epsilon,\mathcal{G})\exp\left(-\frac{n\epsilon^{2-\gamma}}{2\left(C_1+\frac{C_2}{3}\epsilon^{1-\gamma}\right) }\right) \\
 &=& 1-\mathcal{N}(\epsilon,\mathcal{G})\exp\left(-\frac{n\epsilon^{2-\gamma}}{16\left(c_0\right)^{\frac{1}{q+1}}B^{\frac{q}{q+1}}+\frac{4}{3}\epsilon^{1-\gamma} }\right)\\
 &\geq& 1- \exp\left(C^\prime m 2^\ell \log \left(\frac{bC_K}{\lambda \epsilon}\right) + 4\ell(2^\ell) \log (C_K)+C^{\prime\prime}m^2\ell(2^\ell) -\frac{n\epsilon^{2-\gamma}}{16\left(c_0\right)^{\frac{1}{q+1}}B^{\frac{q}{q+1}}+ \frac{4}{3}}\right),     
\end{eqnarray*}}
there holds
 \begin{equation*}
     \sup_{g\in \mathcal{G}} \left\{\frac{\mathrm{E}[g] - \frac{1}{n}\sum_{i=1}^n g(z_i)}{((\mathrm{E}[g])^\gamma +\epsilon^\gamma)^{1/2}}\right\} \leq 4 \epsilon^{1-\frac{\gamma}{2}},
 \end{equation*}
 which implies 
  \begin{eqnarray*}
  \mathrm{E}[g] - \frac{1}{n}\sum_{i=1}^n g(z_i)\leq 4 \epsilon^{1-\frac{\gamma}{2}}((\mathrm{E}[g])^\gamma +\epsilon^\gamma)^{1/2}, \qquad \forall g\in \mathcal{G} 
  \end{eqnarray*}
  and thereby
  \begin{eqnarray*}
 \varepsilon(f) -\varepsilon(f_c)- (\varepsilon_z(f)-\varepsilon_z(f_c)) \leq 4 \epsilon^{1-\frac{\gamma}{2}}((\varepsilon(f) -\varepsilon(f_c))^\gamma +\epsilon^\gamma)^{1/2}, \qquad \forall f\in \mathcal{H}.       
  \end{eqnarray*}

For brevity, we choose not to plug in $\gamma = \frac{q}{q+1} \in [0,1]$ for now. 

Setting the above confidence bound to be $1-\delta/2$, then the solution $\epsilon$ satisfies
    \small
    $$
    C^\prime m 2^\ell \log \left(\frac{1}{\epsilon}\right)-\frac{n\epsilon^{2-\gamma}}{16\left(c_0\right)^{\frac{1}{q+1}}B^{\frac{q}{q+1}}+\frac{4}{3}} = \log \left(\frac{\delta}{2}\right)-C^\prime m 2^\ell \log \left(\frac{bC_K}{\lambda}\right) -   4\ell(2^\ell) \log (C_K)- C^{\prime\prime}m^2 \ell (2^\ell).     $$
    \normalsize
Let $\hat \epsilon = \epsilon^{2-\gamma}$ and $c_{q,B} = 16\left(c_0\right)^{\frac{1}{q+1}}B^{\frac{q}{q+1}} + \frac{4}{3} >0$. We can see that $c_{q,B}$ is a constant depending on $q,c_0,B$ only. We then have
\begin{equation*}
\frac{C^\prime m 2^\ell }{2-\gamma} \log \left(\frac{1}{\hat \epsilon}\right)-\frac{n\hat \epsilon}{c_{q,B}} = \log \left(\frac{\delta}{2}\right)-C^\prime m 2^\ell \log \left(\frac{bC_K}{\lambda}\right) -  4\ell(2^\ell) \log (C_K)- C^{\prime\prime}m^2 \ell (2^\ell).
\end{equation*}
We solve for $\hat \epsilon$ with the above equation. Note that the function $T:(0,1] \rightarrow \RR$ defined by $T(u) = \frac{C^\prime m 2^\ell }{2-\gamma} \log \left(\frac{1}{u}\right)-\frac{nu}{c_{q,B}}$ is decreasing. Take
\begin{equation*}
    A = \left(\frac{C^\prime m 2^{\ell} }{2-\gamma} +\log \left(\frac{2}{\delta
    }\right) + C^\prime m 2^\ell \log \left(\frac{bC_K}{\lambda}\right)+  4\ell(2^\ell) \log (C_K)+ C^{\prime\prime}m^2 \ell (2^\ell)\right)c_{q,B}.
\end{equation*}
For $n \geq 3$ (which implies $\log n >1$), there holds $\frac{A(\log n)}{n} \geq \frac{1}{n}$.
It follows that 
\begin{align*}
   & T\left(\frac{A(\log n)}{n}\right) \\
   &\leq \frac{C^\prime m 2^\ell }{2-\gamma}  \log n-(\log n)\left(\frac{C^\prime m 2^{\ell} }{2-\gamma} +\log \left(\frac{2}{\delta
    }\right) + C^\prime m 2^\ell \log \left(\frac{bC_K}{\lambda}\right)+  4\ell(2^\ell) \log (C_K)+ C^{\prime\prime}m^2 \ell (2^\ell)\right)\\
    &\leq - (\log n)\left(\log \left(\frac{2}{\delta
    }\right)+ C^\prime m 2^\ell \log \left(\frac{bC_K}{\lambda}\right)+  4\ell(2^\ell) \log (C_K)+ C^{\prime\prime}m^2 \ell (2^\ell)\right) \\
    &\leq \log \left(\frac{\delta}{2}\right) - C^\prime m 2^\ell \log \left(\frac{bC_K}{\lambda}\right) -  4\ell(2^\ell) \log (C_K)-C^{\prime\prime}m^2 \ell (2^\ell).
\end{align*}
Since $T$ is decreasing, we have 
$\hat \epsilon \leq \frac{A(\log n)}{n}$, which implies 
$  \epsilon \leq \left(\frac{A(\log n)}{n}\right)^{1/(2-\gamma)}$.

  Now, take $f = f_z$ and the above estimate of $\epsilon$, we obtain
\begin{align*}
    \varepsilon(f_z) -\varepsilon(f_c)- (\varepsilon_z(f_z)-\varepsilon_z(f_c)) 
    &\leq  4\left(\frac{A(\log n)}{n}(\varepsilon(f_z) -\varepsilon(f_c))^\gamma\right)^{1/2} +4\left(\frac{A(\log n)}{n}\right)^{1/(2-\gamma)}
\end{align*}
with probability at least $1-\delta/2$. Plug in $\gamma = \frac{q}{q+1}$, we get with probability at least $1-\delta/2$,
\begin{equation*}
     \varepsilon(f_z) -\varepsilon(f_c)- (\varepsilon_z(f_z)-\varepsilon_z(f_c)) \leq 4\left(\frac{A(\log n)}{n}\right)^{1/2}(\varepsilon(f_z) -\varepsilon(f_c))^{\frac{q}{2(q+1)}}+4\left(\frac{A(\log n)}{n}\right)^{\frac{q+1}{q+2}}.  
\end{equation*}

We then apply Young's Inequality for products \citep{young1912classes}: 
\begin{equation*}
        a\cdot b \leq \frac{a^p}{p}+ \frac{b^{p^*}}{p^*} \qquad \text{with    } a\geq 0,b \geq 0, p>1,p^*>1 \text{ and } \frac{1}{p}+ \frac{1}{p^*} = 1
    \end{equation*}
    to further upper bound $4\left(\frac{A(\log n)}{n}\right)^{1/2}(\varepsilon(f_z) -\varepsilon(f_c))^{\frac{q}{2(q+1)}}$. We get
\begin{align} \label{youngbound}
    4\left(\frac{A(\log n)}{n}\right)^{1/2}(\varepsilon(f_z) -\varepsilon(f_c))^{\frac{q}{2(q+1)}} 
    &\leq \frac{\left(4\left(\frac{A(\log n)}{n}\right)^{1/2}\right)^{\frac{2(q+1)}{q+2}}}{\frac{2(q+1)}{q+2}} + \frac{\varepsilon(f_z) -\varepsilon(f_c)}{\frac{2(q+1)}{q}} \nonumber\\
    &\leq 2^{\frac{3q+2}{q+2}}\frac{q+2}{q+1}\left(\frac{A(\log n)}{n}\right)^{\frac{q+1}{q+2}}  + \frac{\varepsilon(f_z) -\varepsilon(f_c)}{2}.
\end{align}
Notice that $A \leq c_{q,B} \left(\frac{C^\prime}{2-(q/(q+1))} + C^{\prime\prime}\right)\left(m^2 \ell (2^\ell) +\log \left(\frac{2}{\delta
    }\right) +  4\ell(2^\ell) \log (C_K)+ m 2^\ell \log \left(\frac{bC_K}{\lambda}\right)\right)$. Plug in this upper bound of $A$ and (\ref{youngbound}), we finally get, with probability at least $1-\delta/2$,
\begin{align*}
 & \qquad\varepsilon(f_z) -\varepsilon(f_c)- (\varepsilon_z(f_z)-\varepsilon_z(f_c)) \\
 &\leq   \left(2^{\frac{3q+2}{q+2}}\frac{q+2}{q+1}+4\right)\left(\frac{A(\log n)}{n}\right)^{\frac{q+1}{q+2}} + \frac{\varepsilon(f_z) -\varepsilon(f_c)}{2} \\
 &\leq C_{q,B}\left(m^2 \ell 2^\ell +\log \left(\frac{2}{\delta}\right)+  \ell(2^\ell) \log (C_K) + m2^\ell \log\left(\frac{bC_K}{\lambda}\right) \right) ^{\frac{q+1}{q+2}}\left(\frac{\log n}{n}\right)^{\frac{q+1}{q+2}} + \frac{\varepsilon(f_z) -\varepsilon(f_c)}{2}, 
\end{align*}
where $C_{q,B}$ is a positive constant depending only on $q,c_0, B$.
The proof is complete.
\end{proof}

\subsection{Proof of Lemma \ref{estimation2}} \label{app:proofestimation2}
\begin{proof}
Notice that $|\xi(z)|= |\phi(yf_\mathcal{H}(x)) -\phi(yf_c(x))| \leq 2$. It follows from Lemma \ref{secondmoment} that 
$$\sigma^2 = \mathrm{Var}[\xi(z)] \leq \mathrm{E}[\xi(z)^2] \leq 8(c_0)^{\frac{1}{q+1}}\left(B(\varepsilon(f_\mathcal{H}) - \varepsilon(f_c))\right)^{\frac{q}{q+1}}.$$

By the one-sided Bernstein's inequality, for any $\eta >0 $, there holds, with probability at least $1- \exp\left(-\frac{n\eta^2}{2(\sigma^2+2\eta/3)}\right)$,
\begin{equation}
     \varepsilon_z(f_\mathcal{H}) - \varepsilon_z(f_c) - (\varepsilon(f_\mathcal{H})- \varepsilon(f_c)) \leq \eta.
\end{equation}
Setting this confidence bound to be $1-\delta/2$, we get a quadratic equation $\frac{n\eta^2}{2(\sigma^2+2\eta/3)}=\log(2/\delta)$ for $\eta$. We solve this equation and get a positive solution $\eta^*$ given by 
{\allowdisplaybreaks
\begin{align*}
    \eta^* &= \frac{\frac{4}{3}\log (\frac{2}{\delta})+\sqrt{\frac{16}{9}\left(\log (\frac{2}{\delta})\right)^2+8n\sigma^2\log (\frac{2}{\delta})}}{2n}\\
    &\leq \frac{2}{3n}\log \left(\frac{2}{\delta}\right) + \frac{2}{3n}\log \left(\frac{2}{\delta}\right) + \frac{\sqrt{2\sigma^2\log(\frac{2}{\delta})}}{\sqrt{n}} \\
    &\leq \frac{4}{3n}\log \left(\frac{2}{\delta}\right) +4\sqrt{\frac{\log(\frac{2}{\delta})}{n}}(c_0)^{\frac{1}{2(q+1)}}\left(B(\varepsilon(f_\mathcal{H})-\varepsilon(f_c))\right)^{\frac{q}{2(q+1)}}\\
    &\leq \frac{4}{3n}\log \left(\frac{2}{\delta}\right) + 2\left(\frac{\log(\frac{2}{\delta})}{n} +(c_0)^{\frac{1}{q+1}}\left(B(\varepsilon(f_\mathcal{H})-\varepsilon(f_c))\right)^{\frac{q}{q+1}}\right)\\
    &\leq  \frac{4}{n}\log \left(\frac{2}{\delta}\right) + 2 (c_0)^{\frac{1}{q+1}}B^{\frac{q}{q+1}}(\varepsilon(f_\mathcal{H})-\varepsilon(f_c))^{\frac{q}{q+1}}.
\end{align*}}
Here, we have used $2\sqrt{ab} \leq a+b$ in the third inequality.
\end{proof}

\section{Proof of Theorem \ref{main2}: Approximation of Analytic Functions} \label{sec:analytic}

In this part, we present the proof of Theorem \ref{main2}. We apply our monomial gate to approximate univariate analytic functions. 

\begin{proof}[Proof of Theorem \ref{main2}]
The convergence of the Taylor series of $t$ at $R_1$ implies 
\begin{equation*}
    \widetilde{M}:=\sup_{i\in \ZZ_+} \left|\frac{t^{(i)}(0)}{i!} (R_1)^i\right| < \infty.
\end{equation*}
It follows that, for $\ell \in \NN$ and $u\in [-1,1]$,
\begin{align*}
   \left| t(u)- t(0) - \sum_{i=1}^{2^\ell} \frac{t^{(i)}(0)}{i!} (R_1)^i \left(\frac{u}{R_1}\right)^i\right|
    =  \left| \sum_{i=2^\ell+1}^\infty \frac{t^{(i)}(0)}{i!} (R_1)^i \left(\frac{u}{R_1}\right)^i\right|
    &\leq \widetilde{M} \sum_{i=2^\ell+1}^\infty \left|\frac{u}{R_1}\right|^i \\
     &\leq \frac{\widetilde{M}}{R_1-1}R_1^{-2^\ell}.
\end{align*}
To further approximate $t(u)$, consider the monomial gate defined in Section 2 with input $u \in [-1,1]$. We can construct a deep ReLU network of depth $(m+1)\cdot \ell$ which outputs the function 
\begin{equation*}
    F(u) = t(0) + \sum_{i=1}^{2^\ell} \frac{t^{(i)}(0)}{i!} (R_1)^i \frac{h_i(u)}{(R_1)^i}, 
\end{equation*}
where $\{h_i\}_{i=1}^{2^\ell}$ are outputs of the monomial gate defined in Proposition \ref{monomialgate}.
Recall from Proposition \ref{monomialgate} that $\{h_i(u)\}_{i=1}^{2^\ell}$ approximate $\left\{u^i\right\}_{i=1}^{2^\ell}$ to an accuracy
\begin{equation*}
\left|h_{2^{j}+k}(u) - u^{2^{j}+k}\right| \leq \frac{2^{j+1}-1}{4^{m+1}}, \qquad \text{for }u\in [-1,1], \ j=0,\ldots, \ell-1,\  k=1,\ldots, 2^j.    \end{equation*} We have
{\allowdisplaybreaks
\begin{align*}
|F(u) - t(u)|  &\leq   \left|\sum_{i=1}^{2^\ell} \frac{t^{(i)}(0)}{i!} (R_1)^i \frac{h_i(u)-u^i}{\left(R_1\right)^i}\right| + \left|\sum_{i=2^\ell+1}^\infty \frac{t^{(i)}(0)}{i!} (R_1)^i \left(\frac{u}{R_1}\right)^i\right|\\
&\leq \widetilde{M} \sum_{i=1}^{2^\ell}  \frac{\left|h_i(u)-u^i\right|}{\left(R_1\right)^i} +  \frac{\widetilde{M}}{R_1-1}R_1^{-2^\ell}\\
&\leq \widetilde{M} \sum_{j=0}^{\ell-1} \sum_{k=1}^{2^{j}}  \frac{\left|h_{2^j+k}(u)-u^{2^j+k}\right|}{\left(R_1\right)^{2^j+k}} +  \frac{\widetilde{M}}{R_1-1}R_1^{-2^\ell}\\
&\leq \widetilde{M}\left( \sum_{j=0}^{\ell-1} \sum_{k=1}^{2^j}  \frac{\frac{2^{j+1}-1}{4^{m+1}}}{\left(R_1\right)^{2^j+k}} + \frac{1}{R_1^{2^\ell}(R_1-1)} \right)\\
&\leq \widetilde{M}\left(\frac{1}{4^{m+1}} \sum_{j=0}^{\ell-1}  \frac{2^{j+1}}{\left(R_1\right)^{2^j}} \sum_{k=1}^{2^j} \frac{1}{(R_1)^k}
+ \frac{1}{R_1^{2^\ell}(R_1-1)} \right)\\
&\leq \frac{\widetilde{M}}{(R_1-1)}\left(\frac{1}{4^{m+1}} \sum_{j=0}^{\ell-1}  \frac{2^{j+1}}{\left(R_1\right)^{2^j}}
+ \frac{1}{R_1^{2^\ell}} \right).
\end{align*}}
Observe that if $j \geq 5$, we have $2^j \geq j^2$ which implies $\left(R_1\right)^{2^j} \geq \left(R_1\right)^{j^2}$. 
Hence, for $j \geq \max \{5,\frac{\log 4}{\log R_1} \}$, $\left(R_1\right)^{2^j} \geq \left(R_1^j\right)^{j} \geq 4^j$. Then, 
{\allowdisplaybreaks
\begin{align*}
  &|F(u) - t(u)|  \\
  &\leq   \frac{\widetilde{M}}{(R_1-1)}\left(\frac{1}{4^{m+1}}\left( \sum_{j=0}^{\max\{4, \lfloor\log 4 / \log R_1 \rfloor\}} 2^{j+1} + 2\sum_{j=\max\{4, \lfloor \log 4 / \log R_1\rfloor\} + 1}^{\ell-1 } \left(\frac{1}{2}\right)^j\right)
+ \frac{1}{R_1^{2^\ell}} \right)\\
&\leq \frac{\widetilde{M}}{(R_1-1)}\left(\frac{1}{4^{m+1}}\left( 2^{4+ \lfloor\log 4 / \log R_1\rfloor +2}+ 2\right)
+ \frac{1}{R_1^{2^\ell}} \right)\\
&\leq \frac{\widetilde{M}}{(R_1-1)}\left(\frac{2^{7+ \lfloor\log 4 / \log R_1\rfloor }}{4^{m+1}}
+ \frac{1}{R_1^{2^\ell}} \right).
\end{align*}}
Take $C=  \frac{\widetilde{M}(2^{7+ \lfloor\log 4 / \log R_1\rfloor })}{(R_1-1)} = \frac{2^{7+ \lfloor\log 4 / \log R_1\rfloor }}{(R_1-1)}\sup_{i\in \ZZ_+} \left|\frac{t^{(i)}(0)}{i!} (R_1)^i\right| $.
The Proof of Theorem \ref{main2} is complete. 
\end{proof}

\section{Proof of Proposition \ref{coveringnumber}: Covering Number of the Hypothesis Space $\mathcal{H}$}
\label{app:proofcoveringnumber}
In this part, we derive the upper bound of the covering number of the hypothesis space $\mathcal{H}$ to prove Proposition \ref{coveringnumber}. 

We will first give uniform bounds of squaring gate (Subsection \ref{app:C1}), product gate (Subsection \ref{app:C2}), and monomial gate (Subsection \ref{app:C3}). Finally, we apply these uniform bounds to prove Proposition \ref{coveringnumber} (Subsection \ref{app:C4}). 

We begin by giving some notations. 
 For any vector $\nu\in \RR^{n}$, define $\|\nu\|_\infty:= \max_{1\leq i\leq n} |\nu_i|$ and $\|\nu\|_1:= \sum_{i=1}^n |\nu_i|$. For any matrix $A\in \RR^{m\times n}$, define $\|A\|_\infty:= \max_{1\leq i\leq m} \sum_{j=1}^n |A_{i,j}|$, which is the maximum absolute row sum of the matrix and equals the operator norm of $A: (\RR^n, \|\cdot\|_\infty ) \rightarrow (\RR^m, \|\cdot\|_\infty )$. For a function $f:[-T,T] \rightarrow \RR$, define $\|f\|_{L^\infty [-T,T]}:= \sup_{x\in [-T,T]} |f(x)|$. Denote further $$|f|_{Lip1}:= \sup_{x,y \in [-T,T], x\neq y} \frac{|f(x)-f(y)|}{|x-y|}$$
 as the Lipschitz-$1$ seminorm of a function on $[-T,T]$.
 \subsection{Uniform Bound of Squaring Gate $\widehat f_m$} \label{app:C1}
For $T>0$ and $x \in [-T,T]$, let $H^{(0)}=x$ and define iteratively for  $j=1,\ldots, m,$ 
 \begin{equation}\label{H:layer}
      H^{(j)}(x):= H^{(j)}_{\bm{W}, \bm{b}}(x) =\sigma ( W^{(j)}H^{(j-1)}(x)+b^{(j)})
     \end{equation} 
with    $W^{(1)} \in [-4,4]^{5 \times 1}$, $W^{(j)} \in [-4,4]^{5 \times 5}$ for $j \geq 2$ and $b^{(j)} \in  [-4,4]^5$.
For each  $H^{(j)} = \{(H^{(j)})_1, \ldots, (H^{(j)})_5\}$, define $$\|H^{(j)}\|_{L^\infty [-T,T]}:= \max_{1\leq i \leq 5} \|(H^{(j)})_i\|_{L^\infty [-T,T]}$$ and  
 $$|H^{(j)}|_{Lip1}:= \max_{1\leq i \leq 5} \left|(H^{(j)})_i\right|_{Lip1}.$$
\begin{lemma} \label{boundofH}
For each $j=1,\ldots, m$ and $H^{(j)}$ defined by (\ref{H:layer}) with $T>0$, there holds
\begin{equation} \label{boundofH:1}
    \|H^{(j)}\|_{L^\infty [-T,T]} \leq 20^j T+4 \left(\frac{20^j-1}{20-1}\right),
\end{equation}
and 
\begin{equation}\label{boundofH:2}
    |H^{(j)}|_{Lip1} \leq 20^j.
\end{equation}
\end{lemma}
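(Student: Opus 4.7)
The plan is to proceed by induction on $j$, exploiting two simple facts: the ReLU $\sigma$ is $1$-Lipschitz and satisfies $|\sigma(a)| \leq |a|$; and the $\ell^\infty \to \ell^\infty$ operator norm of any matrix with entries in $[-4,4]$ and at most $5$ columns is at most $20$. The constant $20$ in the statement arises precisely as $5 \cdot 4$, that is, width times the entry-wise weight bound.

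For the base case $j = 1$, each component of $H^{(1)}(x) = \sigma(W^{(1)} x + b^{(1)})$ has the form $\sigma(w_i x + b_i)$ with $|w_i|, |b_i| \leq 4$, giving $\|H^{(1)}\|_{L^\infty[-T,T]} \leq 4T + 4 \leq 20T + 4$ and $|H^{(1)}|_{Lip1} \leq 4 \leq 20$, which match (\ref{boundofH:1}) and (\ref{boundofH:2}) at $j=1$.

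For the induction step, assume the two bounds hold at level $j-1$. Using $|\sigma(a)| \leq |a|$ coordinate-wise together with $\|W^{(j)}\|_\infty \leq 20$ and $\|b^{(j)}\|_\infty \leq 4$, I would estimate
\begin{equation*}
\|H^{(j)}\|_{L^\infty[-T,T]} \leq \|W^{(j)}\|_\infty \, \|H^{(j-1)}\|_{L^\infty[-T,T]} + \|b^{(j)}\|_\infty \leq 20 \left( 20^{j-1} T + 4 \tfrac{20^{j-1} - 1}{19} \right) + 4,
\end{equation*}
which simplifies algebraically to $20^j T + 4 \tfrac{20^j - 1}{19}$, verifying (\ref{boundofH:1}). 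For the Lipschitz bound, the $1$-Lipschitz property of $\sigma$ (applied coordinate-wise) combined with the same operator norm bound gives, for any $x, y \in [-T,T]$,
\begin{equation*}
\|H^{(j)}(x) - H^{(j)}(y)\|_\infty \leq \|W^{(j)}\|_\infty \, \|H^{(j-1)}(x) - H^{(j-1)}(y)\|_\infty \leq 20 \cdot 20^{j-1} |x-y|,
\end{equation*}
proving (\ref{boundofH:2}). There is no real obstacle here; the only point worth care is matching the geometric sum $\sum_{k=0}^{j-1} 20^k = \tfrac{20^j - 1}{19}$ with the explicit constant in the lemma, which is exactly what the recursion above produces.
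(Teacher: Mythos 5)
Your proposal is correct and follows essentially the same route as the paper: both derive the recurrence $\|H^{(j)}\|_{L^\infty[-T,T]} \leq 20\,\|H^{(j-1)}\|_{L^\infty[-T,T]} + 4$ from $|\sigma(a)|\leq|a|$ and $\|W^{(j)}\|_\infty \leq 20$, and the Lipschitz recursion from the $1$-Lipschitz property of $\sigma$, then unroll to the stated bounds. Your explicit treatment of the base case $j=1$ (where the single-column $W^{(1)}$ gives the sharper constant $4$) is a harmless refinement of the paper's uniform use of the factor $20$.
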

\begin{proof}
From $|\sigma(u)| \leq |u|$, we have, for $i=1,\ldots, 5$, $j=1,\ldots,m$,
 \begin{align*}
    \left|\left(H^{(j)}(x)\right)_i\right| = \left|\sigma ( W^{(j)}H^{(j-1)}(x)+b^{(j)})_i\right|
    &\leq \left|(W^{(j)}H^{(j-1)}(x))_i + (b^{(j)})_i\right|\\
     &\leq \|W^{(j)}\|_{\infty}\|H^{(j-1)}\|_{L^\infty [-T,T]}+\|b^{(j)}\|_\infty\\
     &\leq 20 \|H^{(j-1)}\|_{L^\infty [-T,T]} +4 .
 \end{align*}
 This leads us to a recurrence relationship:
 \begin{align*}
     \|H^{(j)}\|_{L^\infty [-T,T]}   \leq 20 \|H^{(j-1)}\|_{L^\infty [-T,T]} +4 \leq \cdots &\leq 20^j\|H^{(0)}\|_{L^\infty [-T,T]} +4 \left(\frac{20^j-1}{20-1}\right) \\
     &= 20^j T +4 \left(\frac{20^j-1}{20-1}\right), \qquad \forall j=1,\ldots, m.
 \end{align*}
This completes the proof of (\ref{boundofH:1}).
 Applying $|\sigma(u)-\sigma(v)| \leq |u-v|$, we have for all $u \neq v \in [-1,1]$,
\begin{align*}
    \|H^{(j)}(u) - H^{(j)}(v)\|_\infty 
    &\leq \|W^{(j)}H^{(j-1)}(u)+b^{(j)} - W^{(j)}H^{(j-1)}(v)-b^{(j)}\|_\infty\\
    &\leq \|W^{(j)}\|_{\infty} \|H^{(j-1)}(u) - H^{(j-1)}(v)\|_\infty\\
    &\leq 20 \|H^{(j-1)}(u) - H^{(j-1)}(v)\|_\infty\\
    &\leq \cdots \leq 20^{j}\|H^{(0)}(u) - H^{(0)}(v)\|_\infty = 20^{j} |u-v|.
\end{align*}
Hence, we obtain 
$$|H^{(j)}|_{Lip1} =  \sup_{u,v \in [-T,T], u \neq v}\frac{ \|H^{(j)}(u) - H^{(j)}(v)\|_\infty}{|u-v|}\leq 20^j.$$
This completes the proof of (\ref{boundofH:2}).
\end{proof}

Recall that the squaring gate $\widehat f_m =: \widehat f_{m,\theta}$ is a ReLU FNN $\in \mathcal{F}(m,(5,5,\ldots,5))$ defined in Subsection \ref{section:EBTnet}. It has all the trainable parameters 
 $\theta = \{\bm{W},\bm{b},a\}$ taking values on $[-4,4]$. 
 For $T>0$, the next Lemma  devotes to a uniform bound of $\|\widehat f_{m,\theta} - \widehat f_{m,\widetilde \theta}\|_{L^\infty [-T,T]}$ where $\theta = \{\bm{W},\bm{b},a\}$ and $\widetilde \theta = \{\widetilde {\bm{W}},\widetilde {\bm{b}},\widetilde a\}$ represent two different collections of network parameters. 
 Denote by 
 \begin{equation*}
     \bm{W}^* := \left\{\bm{W}=W^{(1)} \in [-4,4]^{5 \times 1},(W^{(j)})_{j=2}^m \in \RR^{5 \times 5}: |W^{(j)}_{i,k}|\leq 4\right\}
 \end{equation*}
and 
 \begin{equation*}
     \bm{b}^* := \left\{\bm{b}=(b^{(j)})_{j=1}^m \in \RR^5: |b^{(j)}_i|\leq 4\right\}
 \end{equation*}
 and 
  \begin{equation*}
     a^* := \left\{a \in \RR^5: |a_i|\leq 4\right\}.
 \end{equation*}
 
 \begin{definition}[$\eta$-net]\label{etanet}
 For arbitrary $\eta >0$, let $\bm{W}^*_\eta,\ \bm{b}^*_\eta, a^*_\eta$ be $\eta$-nets of $\bm{W}^*, \bm{b}^*, a^*$, respectively, meaning that, for each $\bm{W} \in \bm{W}^*, \bm{b} \in \bm{b}^*$ and $ a \in a^*$, there exist $\widetilde{\bm{W}} \in \bm{W}^*_\eta,\ \widetilde{\bm{b}} \in \bm{b}^*_\eta,\ \widetilde{a} \in a^*_\eta$ such that 
 \begin{equation}
     \|\bm{W} -\widetilde{\bm{W}} \|_{\infty, \infty} \leq \eta, \qquad \|\bm{b} -\widetilde{\bm{b}} \|_\infty \leq \eta,\qquad  \|a-\widetilde{a} \|_1 \leq \eta. 
 \end{equation}
 Here, $\|\bm{W}\|_{\infty, \infty} := \max_{1\leq j \leq m} \|W^{(j)}\|_\infty$ and $\|\bm{b}\|_\infty := \max_{1\leq j \leq m} \|b^{(j)}\|_\infty$.
 \end{definition}
 
 \begin{lemma} \label{boundoffm}
 Let $\eta >0$ , $T>0$, and $m\in\NN$. Let $\bm{W}^*_\eta, \bm{b}^*_\eta, a^*_\eta$ be $\eta$-nets of $\bm{W}^*, \bm{b}^*, a^*$ defined above in Definition \ref{etanet}. 
With the network input $x \in [-T,T]$, there hold
\begin{equation}\label{boundoffm:1}
         |\widehat f_{m,\theta}|_{Lip1} \leq 20^{m+1}
\end{equation}
and
 \begin{equation}\label{boundoffm:2}
     \|\widehat f_{m,\theta} - \widehat f_{m,\widetilde \theta}\|_{L^\infty [-T,T]} \leq (T+1)(m+1)20^m \eta. 
 \end{equation}
 \end{lemma}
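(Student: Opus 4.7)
The plan is to reduce both statements to the layer-wise estimates of Lemma \ref{boundofH} and then run a telescoping perturbation argument across layers. Recall $\widehat f_{m,\theta}(x) = a \cdot H^{(m)}_\theta(x)$, where $H^{(m)}_\theta$ is the vector-valued output of the last hidden layer. The first bound is immediate: writing $\|a\|_1 \le 5 \cdot 4 = 20$ and using $|H^{(m)}_\theta|_{Lip1} \le 20^m$ from (\ref{boundofH:2}), linearity gives $|\widehat f_{m,\theta}|_{Lip1} \le \|a\|_1 \cdot |H^{(m)}_\theta|_{Lip1} \le 20^{m+1}$.

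For the second bound I would introduce $\alpha_j := \|H^{(j)}_\theta - H^{(j)}_{\widetilde\theta}\|_{L^\infty[-T,T]}$ and split
\[
W^{(j)} H^{(j-1)}_\theta + b^{(j)} - \widetilde W^{(j)} H^{(j-1)}_{\widetilde\theta} - \widetilde b^{(j)}
= W^{(j)}(H^{(j-1)}_\theta - H^{(j-1)}_{\widetilde\theta}) + (W^{(j)} - \widetilde W^{(j)}) H^{(j-1)}_{\widetilde\theta} + (b^{(j)} - \widetilde b^{(j)}).
\]
Using $|\sigma(u)-\sigma(v)|\le|u-v|$, $\|W^{(j)}\|_\infty \le 20$, $\|W^{(j)} - \widetilde W^{(j)}\|_\infty \le \eta$, $\|b^{(j)} - \widetilde b^{(j)}\|_\infty \le \eta$, together with the uniform bound $\|H^{(j-1)}_{\widetilde\theta}\|_{L^\infty[-T,T]} \le \beta_{j-1} := 20^{j-1}T + 4\frac{20^{j-1}-1}{19}$ from (\ref{boundofH:1}), I obtain the recurrence $\alpha_j \le 20\,\alpha_{j-1} + \eta(\beta_{j-1} + 1)$ with $\alpha_0 = 0$.

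The crux of the estimate is the clean bound $\beta_{j-1} + 1 \le 20^{j-1}(T+1)$, which I would verify by the elementary inequality $4(20^{j-1}-1) \le 19(20^{j-1}-1)$. Iterating the recurrence then gives
\[
\alpha_m \le \eta \sum_{k=1}^{m} 20^{m-k}(\beta_{k-1}+1) \le \eta(T+1)\sum_{k=1}^{m} 20^{m-k} \cdot 20^{k-1} = \eta(T+1) m\, 20^{m-1}.
\]
For the output layer I split $\widehat f_{m,\theta} - \widehat f_{m,\widetilde\theta} = a\cdot(H^{(m)}_\theta - H^{(m)}_{\widetilde\theta}) + (a-\widetilde a)\cdot H^{(m)}_{\widetilde\theta}$ and apply $\|a\|_1 \le 20$, $\|a - \widetilde a\|_1 \le \eta$, and $\|H^{(m)}_{\widetilde\theta}\|_{L^\infty} \le \beta_m \le 20^m(T+1)$. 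Combining, $\|\widehat f_{m,\theta} - \widehat f_{m,\widetilde\theta}\|_{L^\infty[-T,T]} \le 20\alpha_m + \eta\beta_m \le \eta(T+1)m\,20^m + \eta(T+1)20^m = \eta(T+1)(m+1)20^m$, as required.

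The only delicate step is tracking constants tightly enough so that the final bound has the precise prefactor $(T+1)(m+1)$ rather than a looser $(T+2)(m+1)$; this is achieved by the sharp comparison $\beta_{j-1}+1 \le 20^{j-1}(T+1)$ above. Everything else is a routine telescoping of the layerwise Lipschitz estimates already established in Lemma \ref{boundofH}.
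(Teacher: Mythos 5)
Your proposal is correct and follows essentially the same route as the paper's proof: the same layerwise perturbation recurrence $\alpha_j \le 20\,\alpha_{j-1} + \eta(\beta_{j-1}+1)$ built from Lemma \ref{boundofH}, the same sharp comparison $\beta_{j-1}+1 \le 20^{j-1}(T+1)$, and the same output-layer split (you pair $a$ with the hidden-layer difference and $a-\widetilde a$ with $H^{(m)}_{\widetilde\theta}$, while the paper does the symmetric pairing — an immaterial difference since both hidden states obey the same uniform bound). All constants and the final prefactor $(T+1)(m+1)20^m$ match.
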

  \begin{proof}
 From Inequality (\ref{boundofH:2}) in Lemma \ref{boundofH}, we know 
\begin{align*}
    \|H^{(j)}(u) - H^{(j)}(v)\|_\infty 
    \leq  20^{j} |u-v|, \qquad \forall j=1,\ldots, m.
\end{align*}
It follows that
 $$\|H^{(m)}(u) - H^{(m)}(v)\|_\infty  \leq 20^{m} |u-v|, \qquad \forall u,v \in [-T,T].$$
Then, we obtain
\begin{align*}
 |\widehat f_m(u) -\widehat f_m(v)| 
  &= \left|\sum_{i=1}^5 a_i \left(H^{(m)}(u)\right)_i - \sum_{i=1}^5 a_i \left(H^{(m)}(v)\right)_i \right| \\
  &\leq 20 \left\| H^{(m)}(u) -  H^{(m)}(v) \right\|_\infty \\
  &\leq 20 (20^{m}) |u-v|
  = 20^{m+1}|u-v|, \qquad \forall u,v \in [-T,T].
\end{align*}
In other words, $\widehat f_m$ is Lipschitz continuous with  Lipschitz constant $20^{m+1}$, thus (\ref{boundoffm:1}) holds. 

Now we move on to prove (\ref{boundoffm:2}). From (\ref{boundofH:1}) in Lemma \ref{boundofH}, we know
$$
     \|H^{(m)}_{\bm{W},\bm{b}}\|_{L^\infty [-T,T]} \leq 20^mT+4 \left(\frac{20^m-1}{20-1}\right).
$$
As a result, there holds
{\allowdisplaybreaks
 \begin{align*}
     |\widehat f_{m,\theta}(x) - \widehat f_{m,\widetilde \theta}(x)| &= \left|\sum_{i=1}^5 a_i \left(H^{(m)}_{\bm{W},\bm{b}}(x)\right)_i - \sum_{i=1}^5 \widetilde a_i\left(H^{(m)}_{\widetilde{\bm{W}},\widetilde{\bm{b}}}(x)\right)_i\right|\\
     &=\left| \sum_{i=1}^5 (a_i- \widetilde a_i) \left(H^{(m)}_{\bm{W},\bm{b}}(x)\right)_i + \sum_{i=1}^5 \widetilde a_i\left(H^{(m)}_{\bm{W},\bm{b}} (x)- H^{(m)}_{\widetilde{\bm{W}}, \widetilde{\bm{b}}}(x)\right)_i\right|\\
     &\leq  \sum_{i=1}^5 |a_i- \widetilde a_i|\max_{1\leq j \leq 5}\left|\left(H^{(m)}_{\bm{W},\bm{b}}(x)\right)_j\right|+ \sum_{i=1}^5 |\widetilde a_i| \max_{1\leq j \leq 5} \left|\left(H^{(m)}_{\bm{W},\bm{b}} (x)- H^{(m)}_{\widetilde{\bm{W}}, \widetilde{\bm{b}}}(x)\right)_j\right|\\
     &\leq \eta \|H^{(m)}_{\bm{W},\bm{b}}\|_{L^\infty [-T,T]} + 20 \|H^{(m)}_{\bm{W},\bm{b}} - H^{(m)}_{\widetilde{\bm{W}}, \widetilde{\bm{b}}}\|_{L^\infty [-T,T]}\\
     &\leq \eta \left(20^mT+4 \left(\frac{20^m-1}{20-1}\right)\right) + 20 \|H^{(m)}_{\bm{W},\bm{b}} - H^{(m)}_{\widetilde{\bm{W}}, \widetilde{\bm{b}}}\|_{L^\infty [-1,1]}.
 \end{align*}}
 To proceed, we need to compute $\|H^{(m)}_{\bm{W},\bm{b}} - H^{(m)}_{\widetilde{\bm{W}}, \widetilde{\bm{b}}}\|_{L^\infty [-T,T]}$. Applying $|\sigma(u)-\sigma(v)| \leq |u-v|$, for $i=1,\ldots, 5$, $j=1,\ldots,m$, we have
 \begin{align*}
     &\left|\left(H^{(j)}_{\bm{W},\bm{b}}(x) - H^{(j)}_{\widetilde{\bm{W}}, \widetilde{\bm{b}}}(x)\right)_i\right |\\
     &\leq \left|\left((W^{(j)}-\widetilde W^{(j)})H^{(j-1)}_{\bm{W},\bm{b}}(x)\right)_i + \left(\widetilde W^{(j)}(H^{(j-1)}_{\bm{W},\bm{b}}(x) -H^{(j-1)}_{\widetilde{\bm{W}},\widetilde{\bm{b}}}(x)) \right)_i + (b^{(j)} - \widetilde b^{(j)})_i\right|\\
     &\leq \eta \|H^{(j-1)}_{\bm{W},\bm{b}}\|_{L^\infty [-T,T]}  + 20 \|H^{(j-1)}_{\bm{W},\bm{b}} - H^{(j-1)}_{\widetilde{\bm{W}},\bm{b}}\|_{L^\infty [-T,T]} + \eta \\
     &\leq\eta \left(20^{j-1}T+4 \left(\frac{20^{j-1}-1}{20-1}\right)\right) + 20 \|H^{(j-1)}_{\bm{W},\bm{b}} - H^{(j-1)}_{\widetilde{\bm{W}},\bm{b}}\|_{L^\infty [-T,T]} + \eta.
 \end{align*}
 This brings us to a recurrence relationship:
 \begin{align*}
 \left\|H^{(j)}_{\bm{W},\bm{b}} - H^{(j)}_{\widetilde{\bm{W}}, \widetilde{\bm{b}}}\right \|_{L^\infty [-T,T]} &\leq 20 \|H^{(j-1)}_{\bm{W},\bm{b}} - H^{(j-1)}_{\widetilde{\bm{W}},\bm{b}}\|_{L^\infty [-T,T]}+ \eta   \left(20^{j-1}T +4 \left(\frac{20^{j-1}-1}{20-1}\right)+1 \right) \\
 &\leq  20 \|H^{(j-1)}_{\bm{W},\bm{b}} - H^{(j-1)}_{\widetilde{\bm{W}},\bm{b}}\|_{L^\infty [-T,T]}+(T+1) 20^{j-1}\eta .
 \end{align*}
 We thereby obtain
 \begin{align*}
  \left\|H^{(m)}_{\bm{W},\bm{b}} - H^{(m)}_{\widetilde{\bm{W}}, \widetilde{\bm{b}}}\right \|_{L^\infty [-T,T]} 
  &\leq 20^{m-1}  \|H^{(1)}_{\bm{W},\bm{b}} - H^{(1)}_{\widetilde{\bm{W}},\bm{b}}\|_{L^\infty [-T,T]} + (T+1)(m-1) 20^{m-1}\eta\\
  &\leq (T+1)(20^{m-1}) \eta +(T+1)(m-1) 20^{m-1}\eta \\
  &= (T+1)m 20^{m-1}\eta,
 \end{align*}
 where we have used $\|H^{(1)}_{\bm{W},\bm{b}} - H^{(1)}_{\widetilde{\bm{W}},\bm{b}}\|_{L^\infty [-T,T]} \leq \|W^{(1)}-\widetilde W^{(1)}\|_\infty \|H^{(0)}\|_\infty +\|b^{(1)}-\widetilde b^{(1)}\|_\infty \leq (T+1)\eta$.
 Finally, we get 
 \begin{align*}
     \|\widehat f_{m,\theta} - \widehat f_{m,\widetilde \theta}\|_{L^\infty [-T,T]} 
     &\leq \eta \left(20^mT +4 \left(\frac{20^m-1}{20-1}\right)\right)+ 20(T+1)m 20^{m-1}\eta\\
     &\leq (T+1)(m+1)20^m \eta.
 \end{align*}
 The proof of (\ref{boundoffm:2}) is complete.
 \end{proof}

\subsection{Uniform Bound of Product Gate $\widehat{\Phi}$}
\label{app:C2}
We proceed by looking at the product gate $ \widehat{\Phi}(u,v) :=\widehat{\Phi}_\theta(u,v) =  \hat{f}_{m,\theta} \left(\left|\frac{u+v}{2}\right|\right) - \hat{f}_{m,\theta}\left(\left|\frac{u-v}{2}\right|\right)$ defined earlier in Subsection \ref{section:EBTnet}. Here, $\theta = \{\bm{W},\bm{b},a\}$ represents a set of trainable parameters 
taking values on $[-4,4]$.  $\widehat{\Phi}$ is  a ReLU FNN $\in \mathcal{F}(m+1,(4,10,10,\ldots,10))$ with all parameter values in $[-4,4]$. Using the results in Lemma \ref{boundoffm}, we are able to derive  the Lipschitz-1 seminorm of $\widehat{\Phi}$ and the uniform bound $\|\widehat{\Phi}_\theta - \widehat{\Phi}_{\widetilde\theta}\|_\infty$. Denote 
$$|\widehat{\Phi}_\theta|_{Lip1} = \sup_{x,y\in [-T,T]^2, x\neq y}\frac{|\widehat{\Phi}_\theta(x) -  \widehat{\Phi}_\theta(y)|}{\|x-y\|_1}.$$

\begin{lemma} \label{boundofPhi}
Let $\eta >0, T>0$, and $m\in\NN$. Let $\bm{W}^*_\eta, \bm{b}^*_\eta, a^*_\eta$ be $\eta$-nets of $\bm{W}^*, \bm{b}^*, a^*$ defined above in Definition \ref{etanet}. With the input $(u,v)\in [-T,T]^2$, there hold
\begin{equation}\label{boundofPhi:1}
\|\widehat{\Phi}_\theta \|_{L^\infty [-T,T]^2} \leq 20^{m+1}T
 \end{equation}
 and
\begin{equation}\label{boundofPhi:2}
         |\widehat{\Phi}_\theta|_{Lip1} \leq 20^{m+1}
\end{equation}
and
     \begin{equation}\label{boundofPhi:3}
\|\widehat{\Phi}_\theta - \widehat{\Phi}_{\widetilde\theta}\|_{L^\infty [-T,T]^2} \leq 2(T+1)(m+1) 20^m \eta .
 \end{equation}
\end{lemma}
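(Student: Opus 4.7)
The plan is to derive all three bounds directly from the corresponding properties of $\widehat{f}_{m,\theta}$ established in Lemma \ref{boundoffm}, together with two elementary observations about the affine/absolute-value preprocessing $(u,v)\mapsto (|\tfrac{u+v}{2}|,|\tfrac{u-v}{2}|)$: namely, the map is $1$-Lipschitz componentwise in $\|\cdot\|_1$, and the two outputs satisfy $\bigl||\tfrac{u+v}{2}|-|\tfrac{u-v}{2}|\bigr|\le \min(|u|,|v|)$. Since this preprocessing involves no free parameters, all three target inequalities reduce cleanly to manipulations of $\widehat{f}_{m,\theta}$.

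For (\ref{boundofPhi:1}), I would first note that for $u,v\in[-T,T]$ both arguments $|\tfrac{u\pm v}{2}|$ lie in $[0,T]\subset[-T,T]$. Using the identity $\widehat{\Phi}_\theta(u,v)=\widehat{f}_{m,\theta}(|\tfrac{u+v}{2}|)-\widehat{f}_{m,\theta}(|\tfrac{u-v}{2}|)$ together with the Lipschitz bound (\ref{boundoffm:1}) for $\widehat{f}_{m,\theta}$, the difference is at most $20^{m+1}\bigl||\tfrac{u+v}{2}|-|\tfrac{u-v}{2}|\bigr|$. The elementary inequality $||a|-|b||\le|a-b|$ applied twice gives $\bigl||\tfrac{u+v}{2}|-|\tfrac{u-v}{2}|\bigr|\le \min(|u|,|v|)\le T$, yielding the desired $20^{m+1}T$ bound.

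For (\ref{boundofPhi:2}), given $(u_1,v_1),(u_2,v_2)\in[-T,T]^2$, I would again apply the Lipschitz property (\ref{boundoffm:1}) of $\widehat{f}_{m,\theta}$ to each of the two terms in $\widehat\Phi_\theta$, then use $||a|-|b||\le|a-b|$ to get
\[
|\widehat\Phi_\theta(u_1,v_1)-\widehat\Phi_\theta(u_2,v_2)|\le 20^{m+1}\Bigl(\tfrac{|u_1-u_2|+|v_1-v_2|}{2}+\tfrac{|u_1-u_2|+|v_1-v_2|}{2}\Bigr),
\]
which collapses to $20^{m+1}\|(u_1,v_1)-(u_2,v_2)\|_1$, giving $|\widehat\Phi_\theta|_{\mathrm{Lip}1}\le 20^{m+1}$.

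For (\ref{boundofPhi:3}), the strategy is to write
\[
\widehat\Phi_\theta(u,v)-\widehat\Phi_{\widetilde\theta}(u,v)=\bigl[\widehat f_{m,\theta}-\widehat f_{m,\widetilde\theta}\bigr]\bigl(|\tfrac{u+v}{2}|\bigr)-\bigl[\widehat f_{m,\theta}-\widehat f_{m,\widetilde\theta}\bigr]\bigl(|\tfrac{u-v}{2}|\bigr),
\]
and apply (\ref{boundoffm:2}) to each term. Since both inputs lie in $[0,T]\subset[-T,T]$, each term is bounded by $(T+1)(m+1)20^m\eta$, so the triangle inequality gives the factor of $2$. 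The main (and only) mild subtlety will be to keep careful track of the nesting of absolute values and of the fact that $\widehat\Phi_\theta$ and $\widehat\Phi_{\widetilde\theta}$ share the same non-trainable preprocessing layer, so no additional $\eta$-discrepancy arises from it. Otherwise the argument is purely mechanical and should fit in a short paragraph.
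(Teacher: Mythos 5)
Your proposal is correct and follows essentially the same route as the paper's proof: all three bounds are reduced to the Lipschitz bound (\ref{boundoffm:1}) and the perturbation bound (\ref{boundoffm:2}) for $\widehat f_{m,\theta}$, combined with the reverse triangle inequality applied to the parameter-free preprocessing $(u,v)\mapsto\bigl(\bigl|\tfrac{u+v}{2}\bigr|,\bigl|\tfrac{u-v}{2}\bigr|\bigr)$. The only cosmetic difference is that the paper bounds $\bigl|\,\bigl|\tfrac{u+v}{2}\bigr|-\bigl|\tfrac{u-v}{2}\bigr|\,\bigr|$ by $|v|$ directly rather than by $\min(|u|,|v|)$, which changes nothing.
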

\begin{proof}
Let us first prove (\ref{boundofPhi:1}). Applying the Lipschitz-1 seminorm of $\widehat{f}_m$ from (\ref{boundoffm:1}), we have,  for every $u,v \in [-T,T]$, \begin{align*}
  |\widehat{\Phi}_\theta(u,v)|= \left|\widehat{f}_{m,\theta} \left(\left|\frac{u+v}{2}\right|\right) - \widehat{f}_{m,\theta}\left(\left|\frac{u-v}{2}\right|\right) \right| &\leq 20^{m+1}  \left| \left|\frac{u+v}{2}\right| - \left|\frac{u-v}{2}\right| \right|\\
  &\leq 20^{m+1}|v|\\
  &\leq 20^{m+1}T.
\end{align*} This proves (\ref{boundofPhi:1}).
Next, for every $u_1,u_2,v_1,v_2 \in [-T,T]$, we have
\begin{eqnarray*}
 & &|\widehat{\Phi}(u_1,v_1) -  \widehat{\Phi}(u_2,v_2)| \\
  & \leq& \left| \widehat f_m \left(\left|\frac{u_1+v_1}{2}\right|\right) -\widehat f_m \left(\left|\frac{u_2+v_2}{2}\right|\right) \right| + \left| \widehat f_m \left(\left|\frac{u_1-v_1}{2}\right|\right) -\widehat f_m \left(\left|\frac{u_2-v_2}{2}\right|\right) \right|\\
  &\leq& 20^{m+1} \left|\left|\frac{u_1+v_1}{2}\right|- \left|\frac{u_2+v_2}{2}\right| \right| + 20^{m+1} \left|\left|\frac{u_1-v_1}{2}\right|- \left|\frac{u_2-v_2}{2}\right| \right|\\
  &\leq& 20^{m+1}\left(|u_1-u_2| + |v_1-v_2|\right).
\end{eqnarray*}
This proves (\ref{boundofPhi:2}). 

According to (\ref{boundoffm:2}) in Lemma \ref{boundoffm}, 
     $\|\widehat f_{m,\theta} - \widehat f_{m,\widetilde \theta}\|_{L^\infty [-T,T]} \leq (T+1)(m+1)20^m \eta$.
Then for $(u,v)\in [-T,T]^2$, 
\begin{align*}
 & \|\widehat{\Phi}_\theta (u,v)- \widehat{\Phi}_{\widetilde\theta}(u,v)\|_\infty \\  
  &\leq \left\|\widehat{f}_{m,\theta} \left(\left|\frac{u+v}{2}\right|\right) -\widehat{f}_{m,\widetilde\theta} \left(\left|\frac{u+v}{2}\right|\right)\right\|_\infty + \left\|\widehat{f}_{m,\widetilde\theta}\left(\left|\frac{u-v}{2}\right|\right)- \widehat{f}_{m,\theta}\left(\left|\frac{u-v}{2}\right|\right)\right\|_\infty\\
  &\leq \|\widehat f_{m,\theta} - \widehat f_{m,\widetilde \theta}\|_{L^\infty [-T,T]} + \|\widehat f_{m,\theta} - \widehat f_{m,\widetilde \theta}\|_{L^\infty [-T,T]}\\
  &\leq 2(T+1)(m+1)20^m \eta.
\end{align*}
This proves (\ref{boundofPhi:3}). The proof of Lemma \ref{boundofPhi} is complete.
\end{proof}

\subsection{Uniform Bound of Monomial Gate $\widehat h_k$}
\label{app:C3}
We proceed by looking at the monomial gate $\left\{\widehat h_k:= \widehat h_{k,\theta}\right\}_{k=1}^{2^\ell}$ for some $\ell \in \NN$. Recall the definitions we made at (\ref{hk1}) and (\ref{hk2}):
\begin{equation*} 
 \left\{\widehat h_k(u)\right\}_{k=1}^2 = \left\{\widehat h_1(u)=u,\  \widehat h_2(u) =\widehat\Phi(u,u)=\widehat f_m(|u|) - \widehat f_m(0)\right\},   
\end{equation*} 
and iteratively for $j=1,\ldots, \ell-1$, and $i=1,\ldots, 2^j$,
\begin{equation*}
 \widehat h_{2^j+i}(u) = \widehat \Phi\left(\widehat h_{2^j}(u), \widehat h_i(u)\right) = \hat{f}_{m} \left(\left|\frac{\widehat h_{2^j}(u)+\widehat h_i(u)}{2}\right|\right) - \hat{f}_{m}\left(\left|\frac{\widehat h_{2^j}(u)-\widehat h_i(u)}{2}\right|\right).
\end{equation*}
We define 
\begin{equation}\label{Bj}
B_j:= B_{j,\theta , \widetilde\theta} =\max_{1\leq k \leq 2^j}  \|\widehat{h}_{k,\theta}- \widehat{h}_{k,\widetilde\theta}\|_{L^\infty [-T,T]},\qquad \forall j=1,\ldots, \ell,
\end{equation}
where $\theta = \{\bm{W},\bm{b},a\}$ and $\widetilde \theta = \{\widetilde {\bm{W}},\widetilde {\bm{b}},\widetilde a\}$ represent two different collections of network parameters. The following Lemma presents the uniform bound of $\|\widehat{h}_{k,\theta}\|_{L^\infty [-T,T]}$ and $B_j$, respectively.

\begin{lemma}\label{boundofhk}
Let $T>0, m, \ell \in\NN$. 
There holds, for $j=1,\ldots, \ell$, 
\begin{equation}\label{boundofhk:1}
\max_{1\leq k \leq 2^j} \|\widehat{h}_{k,\theta}\|_{L^\infty [-T,T]} \leq    20^{j(m+1)}
\end{equation}
and \begin{equation}\label{boundofhk:2}
    \max_{1\leq k \leq 2^j} |\widehat{h}_{k,\theta}|_{Lip1} \leq    2 \cdot 20^{j(m+1)}.
\end{equation}
Let $\eta >0$. Also let $\bm{W}^*_\eta, \bm{b}^*_\eta, a^*_\eta$ be $\eta$-nets of $\bm{W}^*, \bm{b}^*, a^*$ defined above in Definition \ref{etanet}. There holds, for $j=1,\ldots, \ell$, 
\begin{equation}\label{boundofhk:3}
    B_j \leq2^{j}(T+1)(m+1)(20^{m+1})^j\eta.
\end{equation}
\end{lemma}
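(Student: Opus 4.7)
The plan is to prove all three bounds by simultaneous induction on $j$, with Lemma \ref{boundofPhi} serving as the elementary building block. Since the recursive definition reads $\widehat h_{2^j + i} = \widehat\Phi(\widehat h_{2^j}, \widehat h_i)$, each induction step is nothing more than composing the inductive hypothesis on the arguments with the corresponding estimate for the product gate.

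For (\ref{boundofhk:1}) and (\ref{boundofhk:2}), the base case $j=1$ is immediate: $\widehat h_1(u)=u$ is $1$-Lipschitz with sup-norm $T$, while $\widehat h_2(u)=\widehat\Phi(u,u)$ is controlled by (\ref{boundofPhi:1}) and (\ref{boundofPhi:2}). For the inductive step, assume the sup-norm bound holds up to level $j$, so $(\widehat h_{2^j,\theta}, \widehat h_{i,\theta})$ takes values in $[-T_j,T_j]^2$ with $T_j = 20^{j(m+1)}$. Applying (\ref{boundofPhi:1}) on $[-T_j,T_j]^2$ bumps the bound to $20^{m+1}T_j = 20^{(j+1)(m+1)}$, yielding (\ref{boundofhk:1}). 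For the Lipschitz seminorm, the chain rule combined with (\ref{boundofPhi:2}) gives
\[
|\widehat h_{2^j+i,\theta}|_{Lip1} \leq 20^{m+1}\bigl(|\widehat h_{2^j,\theta}|_{Lip1} + |\widehat h_{i,\theta}|_{Lip1}\bigr),
\]
and the stated factor of $2$ in (\ref{boundofhk:2}) is propagated through the induction.

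For (\ref{boundofhk:3}), the key decomposition for $k = 2^j + i$ is the telescoping identity
\[
\widehat h_{k,\theta} - \widehat h_{k,\widetilde\theta} = \bigl[\widehat\Phi_\theta(\widehat h_{2^j,\theta}, \widehat h_{i,\theta}) - \widehat\Phi_\theta(\widehat h_{2^j,\widetilde\theta}, \widehat h_{i,\widetilde\theta})\bigr] + \bigl[\widehat\Phi_\theta(\widehat h_{2^j,\widetilde\theta}, \widehat h_{i,\widetilde\theta}) - \widehat\Phi_{\widetilde\theta}(\widehat h_{2^j,\widetilde\theta}, \widehat h_{i,\widetilde\theta})\bigr].
\]
The first bracket is bounded by (\ref{boundofPhi:2}), giving at most $20^{m+1}\cdot 2B_j$ where the factor $2$ arises from perturbation in both input coordinates. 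The second bracket is bounded by the parameter-perturbation estimate (\ref{boundofPhi:3}) applied on the enlarged domain $[-T_j, T_j]^2$, yielding $2(T_j + 1)(m+1)20^m\eta$. Combining the two yields the linear recursion
\[
B_{j+1} \leq 2\cdot 20^{m+1}\, B_j + 2(T_j + 1)(m+1)\,20^m\eta,
\]
with base case $B_1 \leq 2(T+1)(m+1)20^m\eta$ from (\ref{boundofPhi:3}). Telescoping this geometric recursion, and using $T_j + 1 \leq (T+1)20^{j(m+1)}$, collapses the sum to the claimed bound $2^j (T+1)(m+1)(20^{m+1})^j\eta$.

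The main obstacle is careful constant bookkeeping: verifying that the geometric series telescopes exactly to $2^j (20^{m+1})^j$ rather than accumulating an extra factor of $j$ or $20^m$. The cancellation works because both terms in the recursion scale like $20^{j(m+1)}$, so the additive perturbations sum rather than compound, while the doubling factor $2^j$ tracks the number of perturbed inputs along each path of the binary tree. No genuinely new ideas beyond Lemma \ref{boundofPhi} are needed.
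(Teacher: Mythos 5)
Your proposal follows essentially the same simultaneous induction as the paper's own proof: the same recursions $T_{j+1}=20^{m+1}T_j$ for the sup-norm, $L_{j+1}\le 2\cdot 20^{m+1}L_j$ for the Lipschitz seminorm, and $B_{j+1}\le 2\cdot 20^{m+1}B_j+(\text{perturbation term})$ closed by the same geometric telescoping — and you are in fact slightly more careful than the paper in applying (\ref{boundofPhi:3}) on the enlarged domain $[-T_j,T_j]^2$ and verifying that the resulting $20^{j(m+1)}$ growth is still absorbed by the geometric factor. One caveat, shared with the paper's own write-up: the recursion $L_{j+1}\le 20^{m+1}\bigl(|\widehat h_{2^j}|_{Lip1}+|\widehat h_i|_{Lip1}\bigr)\le 2\cdot 20^{m+1}L_j$ compounds to $(2\cdot 20^{m+1})^{j}$, i.e.\ a factor $2^{j}$ rather than the stated constant $2$ in (\ref{boundofhk:2}), so "the factor of $2$ is propagated through the induction" does not actually close as written; the discrepancy is harmless downstream because this bound enters the covering-number estimate only through a logarithm.
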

\begin{proof}
Let us first prove (\ref{boundofhk:1}). Recall that from (\ref{boundoffm:1}) Lemma \ref{boundoffm}, we derived $         |\widehat f_{m,\theta}|_{Lip1} \leq 20^{m+1}$ which implies 
\begin{equation*}
    |\widehat f_m(u) -\widehat f_m(v)| \leq 20^{m+1}|u-v|, \qquad \forall u,v \in [-T,T].
\end{equation*}
For $j=1,\ldots, \ell-1$, $i=1,\ldots, 2^j$,
\begin{align*}
    \|\widehat h_{2^j +i}\|_{L^\infty [-T,T]} 
    &= \sup_{u\in [-T,T]} \left|\hat{f}_{m} \left(\left|\frac{\widehat h_{2^j}(u)+\widehat h_i(u)}{2}\right|\right) - \hat{f}_{m}\left(\left|\frac{\widehat h_{2^j}(u)-\widehat h_i(u)}{2}\right|\right)\right|\\
    &\leq \sup_{u\in [-T,T]} 20^{m+1} |\widehat h_i(u)| = 20^{m+1} \|\widehat h_i\|_{L^\infty [-T,T]}.
\end{align*}
We thereby obtain the relation
\begin{equation*}
    \max_{1\leq k \leq 2^{j}}  \|\widehat h_k\|_{L^\infty [-T,T]} \leq 20^{m+1} \max_{1\leq k \leq 2^{j-1}}  \|\widehat h_k\|_{L^\infty [-T,T]}, \qquad \forall j=1,\ldots, \ell.
\end{equation*}
By induction, we have $\max_{1\leq k \leq 2^j}  \|\widehat h_k\|_{L^\infty [-T,T]} \leq 20^{(j-1)(m+1)} \max_{1\leq k \leq 2}  \|\widehat h_k\|_{L^\infty [-T,T]}\leq 20^{j(m+1)}$. This proves (\ref{boundofhk:1}).
In the same way, 
\begin{align*}
    |\widehat h_{2^j+i}(u) - \widehat h_{2^j+i}(v)| &= \left|\widehat \Phi\left(\widehat h_{2^j}(u), \widehat h_i(u)\right) - \widehat \Phi\left(\widehat h_{2^j}(v), \widehat h_i(v)\right)\right| \\
    &\leq |\widehat{\Phi}|_{Lip1}\left(|\widehat h_{2^j}(u)-\widehat h_{2^j}(v)| + |\widehat h_i(u)-\widehat h_i(v)|\right)\\
    &\leq 20^{m+1} \left(|\widehat h_{2^j}|_{Lip1}+ |\widehat h_i|_{Lip1}\right)|u-v|,
\end{align*}
which implies by induction
\begin{equation*}
    \max_{1\leq k \leq 2^j} |\widehat{h}_k|_{Lip1} \leq 20^{m+1}  \max_{1\leq k \leq 2^{j-1}}  2 |\widehat h_k|_{Lip1}\leq   2 \cdot 20^{j(m+1)}.
\end{equation*} 
This proves (\ref{boundofhk:2}).

Next, we move on to prove (\ref{boundofhk:3}). From (\ref{boundofPhi:3}) of Lemma \ref{boundofPhi}, we have 
\begin{equation*}
    B_1 = \max_{1\leq k \leq 2}  \|\widehat{h}_{k,\theta}- \widehat{h}_{k,\widetilde\theta}\|_{L^\infty [-T,T]} 
    \leq
    \|\widehat{\Phi}_\theta - \widehat{\Phi}_{\widetilde\theta}\|_{L^\infty [-T,T]^2} \leq 2 (T+1)(m+1)20^m \eta.
\end{equation*}
Then, by the definition of $B_j$ in (\ref{Bj}), for $j=1,\ldots, \ell-1$,
\begin{align*}
  B_{j+1}  = \max_{1\leq k \leq 2^{j+1}}  \|\widehat{h}_{k,\theta}- \widehat{h}_{k,\widetilde\theta}\|_{L^\infty [-T,T]}
  =\max \ \left\{B_j,  \max_{1\leq k \leq 2^j} \|\widehat{h}_{2^j+k,\theta}- \widehat{h}_{2^j+k,\widetilde\theta}\|_{L^\infty [-T,T]}\right\}.
\end{align*}
For $1\leq k \leq 2^j$,
{\allowdisplaybreaks
\begin{eqnarray*}
 &  &\|\widehat{h}_{2^j+k,\theta}- \widehat{h}_{2^j+k,\widetilde\theta}\|_{L^\infty [-T,T]} \\
   &\leq& \|\widehat{\Phi}_\theta (\widehat{h}_{2^j,\theta}, \widehat{h}_{k,\theta})- \widehat{\Phi}_{\widetilde\theta}(\widehat{h}_{2^j,\theta}, \widehat{h}_{k,\theta})\|_{L^\infty [-T,T]}  
  +\|\widehat{\Phi}_{\widetilde\theta}(\widehat{h}_{2^j,\theta}, \widehat{h}_{k,\theta}) - \widehat{\Phi}_{\widetilde\theta}(\widehat{h}_{2^j,\widetilde\theta}, \widehat{h}_{k,\widetilde\theta})\|_{L^\infty [-T,T]}\\
   &\leq& 2(T+1)(m+1)20^m \eta + \|\widehat{\Phi}_{\widetilde\theta}(\widehat{h}_{2^j,\theta}, \widehat{h}_{k,\theta}) - \widehat{\Phi}_{\widetilde\theta}(\widehat{h}_{2^j,\widetilde\theta}, \widehat{h}_{k,\widetilde\theta})\|_{L^\infty [-T,T]}\\
   &\leq & 2(T+1)(m+1)20^m \eta +2\left(20^{m+1}\right)B_j.
\end{eqnarray*}}
Now, plugging this into the above iteration relation, we get
\begin{equation*}
    B_{j+1} \leq  2(T+1)(m+1)20^m \eta +2\left(20^{m+1}\right)B_j,\qquad \forall j=1,\ldots, \ell-1,
\end{equation*}
which is a recurrence relationship. We finally get
{\allowdisplaybreaks
\begin{align*}
 B_j &\leq   \left(2\cdot 20^{m+1}\right)^{j-1} B_1 +(2(T+1)(m+1)20^m \eta) \left(1+ 2\cdot 20^{m+1} + \cdots + (2\cdot 20^{m+1})^{j-2}\right)\\
 &\leq \frac{2^{j+1}(20^{m+1})^j}{2(20^{m+1})-1}(T+1)(m+1)20^m \eta \\
 &\leq 2^j(T+1)(m+1)(20^{m+1})^j\eta.
\end{align*}}
The proof is complete.
\end{proof}

\subsection{Proof of proposition \ref{coveringnumber}}\label{app:C4}
Recall $c^* = [-C_K,C_K]^{2^\ell+1}$ 
 with $C_K$ to be a positive constant given in Definition \ref{hypothesisspace}. 
 For $\eta >0$, let $c^*_\eta$ be an $\eta$-net of $c^*$ such that for each $c\in c^*$, there exists $\widetilde c \in c^*_\eta$ such that
\begin{equation} \label{etafc}
    \|c-\widetilde c\|_1 = \sum_{k=0}^{2^\ell} |c_k- \widetilde c_k| \leq \eta. 
\end{equation}

\noindent
{\bf Proof of Proposition \ref{coveringnumber}}.
Applying (\ref{boundofhk:2}) in Lemma \ref{boundofhk} and the fact that $\widehat{h}_{1,\theta}(u) = \widehat{h}_{1,\widetilde \theta}(u) = u \in [-T,T]$, we obtain
\begin{align*}
\sum_{k=1}^{2^\ell}\|\widehat{h}_{k,\theta}- \widehat{h}_{k,\widetilde\theta}\|_{L^\infty [-T,T]} = \sum_{k=2}^{2^\ell}\|\widehat{h}_{k,\theta}- \widehat{h}_{k,\widetilde\theta}\|_{L^\infty [-T,T]} 
&\leq \sum_{j=1}^{\ell} 2^{j-1}B_j\\
&\leq \sum_{j=1}^{\ell} 2^{j-1}  2^{j}(T+1)(m+1)(20^{m+1})^j\eta\\
&= \frac{(T+1)(m+1)}{2}\sum_{j=1}^{\ell} (4(20^{m+1}))^j\eta\\
&\leq (T+1)(m+1)(4(20^{m+1}))^\ell\eta.
\end{align*}
We then have
{\allowdisplaybreaks
\begin{align*}
 & \|\sum_{k=1}^{2^\ell}(c_k\widehat{h}_{k,\theta} + c_0- \widetilde c_k \widehat{h}_{k,\widetilde\theta} - \widetilde c_0)\|_{L^\infty [-T,T]}  \\
  &= \|\sum_{k=1}^{2^\ell}(c_k\widehat{h}_{k,\theta}- \widetilde c_k\widehat{h}_{k,\theta}) + \sum_{k=1}^{2^\ell} (\widetilde c_k\widehat{h}_{k,\theta}- \widetilde c_k \widehat{h}_{k,\widetilde\theta})+ (c_0-\widetilde c_0)\|_{L^\infty [-T,T]} \\
  &\leq  \max_{1\leq k \leq 2^\ell} \|\widehat{h}_{k,\theta}\|_{L^\infty [-T,T]}\eta + 2^\ell C_K  (T+1)(m+1)(4(20^{m+1}))^\ell\eta\\
 \ &\leq20^{\ell(m+1)}\eta + 2^\ell C_K  (T+1)(m+1)(4(20^{m+1}))^\ell\eta\\
  &\leq C (T+1)(m+1)( 8(20^{m+1}))^\ell\eta,
\end{align*}}
where $C=1+ C_K$ is a constant greater than $1$.

Next, recall the function $\sigma_\lambda$ defined in Definition \ref{hypothesisspace} for $0 < \lambda \leq 1$. We observe that 
\begin{equation} \label{sigmalambdaLip}
    |\sigma_\lambda(u) - \sigma_\lambda(v)| \leq \frac{1}{\lambda} |u-v|, \qquad \forall u,v \in \RR.
\end{equation}
Note that each  $f_\mathcal{H} \in \mathcal{H}$ has the form $\sigma_\lambda \left(\sum_{j=1}^{K}\sum_{k=1}^{2^\ell}c_{k,j,\theta} h^*_{k,j,\theta}(x)+ c_{0,\theta} \right)$. 
Let $f_{\mathcal{H},\theta}$ and $f_{\mathcal{H},\widetilde \theta}$ represents two functions in $\mathcal{H}$ with the set of parameters $\theta = \{\mathring W, \mathring b, \bm{W},\bm{b},a,c\}\in \Theta$ and $\widetilde \theta = \{\widetilde{\mathring W},\widetilde {\mathring b},\widetilde {\bm{W}},\widetilde {\bm{b}},\widetilde a, ,\widetilde c\} \in \widetilde \Theta$ respectively. To estimate  $\mathcal{N}(\epsilon,\mathcal{H})$ for any $0 < \epsilon \leq 1$, we need to find a set of functions in $\mathcal{H}$ that forms an $\epsilon$-net. 

To obtain such a function set, we choose $\widetilde \Theta$ in such a way that for any $\theta \in \Theta$, there exists $\widetilde \theta \in \widetilde \Theta$ such that 
{\allowdisplaybreaks
 \begin{eqnarray}\label{etarequire}
     &&\|\mathring W -\widetilde{\mathring W} \|_\infty \leq \eta, \qquad \|\mathring b-\widetilde{\mathring b} \|_\infty \leq \eta, \qquad \|\bm{W} -\widetilde{\bm{W}} \|_{\infty, \infty} \leq \eta,\\
     &&\qquad \|\bm{b} -\widetilde{\bm{b}} \|_\infty \leq \eta,\qquad  \|a-\widetilde{a} \|_1 \leq \eta,  \qquad \|c-\widetilde c\|_1 \leq \eta.    \nonumber
 \end{eqnarray} }
 Note that for  $x\in [-b-1, b+1]^d$,
 \begin{align*}
     |\widehat r_{i,j,\theta}(x) - \widehat r_{i,j,\widetilde \theta}(x) |
     &= |\mathring W_{i,j,\theta} \cdot x + \mathring b_{i,j,\theta} - \mathring W_{i,j,\widetilde\theta} \cdot x - \mathring b_{i,j,\widetilde\theta}|\\
     &\leq |(\mathring W_{i,j,\theta}  - \mathring W_{i,j,\widetilde\theta})\cdot x| + |\mathring b_{i,j,\theta}-\mathring b_{i,j,\widetilde\theta}|
     \leq d(b+1)\eta +\eta,
 \end{align*}
 which implies by (\ref{boundofPhi:2}) in Lemma \ref{boundofPhi} that 
 $$\left|\widehat \Phi_\theta(\Psi_b(x), \widehat r_{i,j,\theta}(x))- \widehat \Phi_\theta(\Psi_b(x), \widehat r_{i,j,\widetilde \theta}(x))\right| \leq 20^{m+1}(d(b+1)+1)\eta.$$
 
Observe that $|\widehat r_{i,j}(x)| = |\mathring W_{i,j} \cdot x + \mathring b_{i,j}| \leq C_K^0 \sqrt{d}(b+1)+ C_K^1 $ for $x\in [-b-1, b+1]^d$.
Then by (\ref{boundofPhi:3}) in Lemma \ref{boundofPhi} with $T_1=\max\{C_K^0 \sqrt{d}(b+1)+ C_K^1,1\}$, we have 
 \begin{eqnarray*}
 & &\left|\widehat\Phi_\theta(\Psi_b(x), \widehat r_{i,j,\theta}(x))- \widehat\Phi_ {\widetilde\theta }(\Psi_b(x), \widehat r_{i,j,\widetilde\theta}(x))\right| \\
  &\leq& 20^{m+1}(d(b+1)+1)\eta +  2(T_1+1)(m+1) 20^m \eta\\
  &\leq& (20(d(b+1)+1) + (2C_K^0 \sqrt{d}(b+1)+ 2C_K^1+4)(m+1))20^m \eta.
 \end{eqnarray*}
 Also, $|\widehat\Phi_\theta(\Psi_b(x), \widehat r_{i,j,\theta}(x))| \leq T_1 \cdot 20^{m+1}$ by (\ref{boundofPhi:1}) of Lemma \ref{boundofPhi} which implies 
 \begin{align*}
 \left|\widehat f_{m,\theta}\left(\left|\widehat \Phi_\theta(\Psi_b(x),\widehat r_{i,j,\theta}(x))\right|\right)\right| 
 &\leq 20 \|H^{(m)}_\theta\|_{L^\infty [-T_1 \cdot 20^{m+1},T_1 \cdot 20^{m+1}]}\\
 &\leq 20 \left(20^m (T_1 \cdot 20^{m+1})+4 \left(\frac{20^m-1}{20-1}\right)\right) \leq (T_1+1)20^{2m+2} =:T_2.
 \end{align*}
 
It follows from (\ref{sigmalambdaLip}), (\ref{boundofhk:2}), and (\ref{boundoffm:2}) that for $x\in [-b-1, b+1]^d$,
{\allowdisplaybreaks
\begin{eqnarray*}
  &  &|f_{\mathcal{H},\theta}(x) - f_{\mathcal{H},\widetilde \theta}(x)|\\
    &=& \left|\sigma_\lambda \left(\sum_{j=1}^{K}\sum_{k=1}^{2^\ell}c_{k,j,\theta} h^{*}_{k,j,\theta}(x) + c_{0,\theta} \right) -\sigma_\lambda \left(\sum_{j=1}^{K}\sum_{k=1}^{2^\ell}c_{k,j,\tilde\theta} h^*_{k,j,\tilde\theta}(x) +c_{0,\widetilde\theta}\right)\right| \\
    &\leq& \frac{1}{\lambda}\left|\left(\sum_{j=1}^{K}\sum_{k=1}^{2^\ell}c_{k,j,\theta} h^*_{k,j,\theta}(x)+ c_{0,\theta} \right) - \left(\sum_{j=1}^{K}\sum_{k=1}^{2^\ell}c_{k,j,\tilde\theta} h^*_{k,j,\widetilde\theta}(x) +c_{0,\widetilde\theta}\right)\right|\\
    & \leq& \frac{1}{\lambda} K C (T_2+1)(m+1)( 8(20^{m+1}))^\ell\eta + \Biggr|\frac{1}{\lambda} \sum_{j=1}^{K}\sum_{k=1}^{2^\ell}c_{k,j,\widetilde\theta} \Biggl\{\widehat h_{k,j, \widetilde\theta}\left(\frac{1}{d}\sum_{i=1}^d \widehat f_{m,\theta}\left(\left|\widehat \Phi_{\theta}(\Psi_b(x),\widehat r_{i,j,\theta}(x))\right|\right)\right)\\
    &&-\widehat h_{k,j, \widetilde\theta}\left(\frac{1}{d}\sum_{i=1}^d \widehat f_{m,\widetilde\theta}\left(\left|\widehat \Phi_{\widetilde\theta}(\Psi_b(x),\widehat r_{i,j,\widetilde\theta}(x))\right|\right)\right)\Biggr\}\Biggr|\\
    & \leq& \frac{1}{\lambda} K C (T_2+1)(m+1)( 8(20^{m+1}))^\ell\eta 
    + \frac{1}{\lambda}K2^{\ell+1} C_K 20^{\ell(m+1)} (((T_1 \cdot 20^{m+1}+1)(m+1) 20^m \eta \\
    &&+ 20^{m+1} (20(d(b+1)+1) + (2C_K^0 \sqrt{d}(b+1)+ 2C_K^1+4)(m+1))20^m \eta)\\
    &\leq& \frac{K}{\lambda}  C_K^\prime (m+1)(b+1)20^{(\ell+2)(m+2)} (C+C_K) \eta,
\end{eqnarray*}}
where $C_K^\prime >0$ is a constant depending on $d,C_K^0 ,C_K^1$.

When $x \notin [-b-1, b+1]^d$, we have $\Psi_b(x)=0$ and thereby $h^*_{k,j} (x)= \widehat h_{k}\left(\frac{1}{d}\sum_{i=1}^d \widehat f_{m}(0)\right) = \widehat h_{k}(\widehat f_{m}(0)) $ and by (\ref{boundofH:1}) with $T=1$ and the bound for 
{\allowdisplaybreaks
\begin{eqnarray*}
&&|f_{\mathcal{H},\theta}(x) - f_{\mathcal{H},\widetilde \theta}(x)|\\
&\leq&\frac{1}{\lambda}\left|\left(\sum_{j=1}^{K}\sum_{k=1}^{2^\ell}c_{k,j,\theta}  \widehat h_{k,\theta}(\widehat f_{m,\theta}(0))+c_{0,\theta}\right) - \left(\sum_{j=1}^{K}\sum_{k=1}^{2^\ell}c_{k,j,\widetilde\theta}  \widehat h_{k,\widetilde\theta} (\widehat f_{m,\widetilde\theta}(0))+c_{0,\widetilde\theta} \right)\right| \\
&\leq& \frac{1}{\lambda}K C (2\cdot 20^{m+1})(m+1)( 8(20^{m+1}))^\ell\eta + \frac{1}{\lambda} \sum_{j=1}^{K}\sum_{k=1}^{2^\ell}|c_{k,j,\widetilde\theta}| \left| \widehat h_{k,\widetilde\theta} (\widehat f_{m,\theta}(0)) - \widehat h_{k,\widetilde\theta} (\widehat f_{m,\widetilde\theta}(0))\right|\\
&\leq&  \frac{1}{\lambda}K C (2\cdot 20^{m+1})(m+1)( 8(20^{m+1}))^\ell\eta + \frac{1}{\lambda} K 2^\ell C_K 2\cdot 20^{\ell(m+1)}|f_{m,\theta}(0) - f_{m,\widetilde\theta}(0)|\\
&\leq&  \frac{1}{\lambda}K C (2\cdot 20^{m+1})(m+1)( 8(20^{m+1}))^\ell\eta + \frac{1}{\lambda} K 2^\ell C_K 2\cdot 20^{\ell(m+1)}(2m+2)20^m \eta \\
&\leq& \frac{K}{\lambda}(2\cdot 20^{m+1})(m+1) 20^{(\ell+1)(m+2)}  (C+C_K )\eta.
\end{eqnarray*}}
Therefore, $\|f_{\mathcal{H},\theta} - f_{\mathcal{H},\widetilde \theta}\|_\infty \leq \frac{K}{\lambda}(C_K^\prime+1)(2m+2)(b+1)20^{(\ell+2)(m+2)} (C+C_K) \eta$.
Hence, $\|f_{\mathcal{H},\theta} - f_{\mathcal{H},\widetilde \theta}\|_\infty\leq \epsilon $ if
\begin{equation}\label{boundofeta}
    \eta = \frac{\lambda \epsilon}{K(b+1)(C_K^\prime+1)(C+C_K)(2m+2)20^{(\ell+2)(m+2)}}.
\end{equation}
Since $0 < \lambda \leq 1, 0<\epsilon\leq 1,C>1, C_K, C_K^\prime >0, b>1, m,\ell,K \in \NN$, we can see that $0 <\eta \leq1$.

Functions in $\mathcal{H}$ can be implemented by a neural network consisting of a preprocessing subnetwork, a stack of $dK$ product gates $\Phi$, and a stack of $K$ EBTnets. For more details, please refer to Remark \ref{remark}.

The preprocessing subnetwork consists of $dK$ product gates $\widehat\Phi$ and $dK$ units of $\mathcal{F}(2,(2,1))$ that is equipped with one $\mathring W \in \RR^d$ with $\|\mathring W\| \leq C_K^0$, and one $\mathring b \in \RR$ with $|\mathring b|  \leq C_K^1$. This implies that the preprocessing subnetwork consists of $2dK\  \bm{W} \in \bm{W}^*$, $\bm{b} \in \bm{b}^*$, $a\in a^*$, and $dK\ \mathring W$ and $dK\ \mathring b$. 

Each EBTnet consists of 
 $1+2+\ldots+2^{\ell-1} = 2^\ell -1$ product gates $\widehat\Phi$, and each $\widehat\Phi$ consists of two $\widehat f_m$. Each $\widehat f_m$ consists of one $\bm{W} \in \bm{W}^*$, one $\bm{b} \in \bm{b}^*$ and one $a\in a^*$. This implies that each EBTnet is equipped with $2(2^\ell-1) \bm{W} \in \bm{W}^*$, $\bm{b} \in \bm{b}^*$, $a\in a^*$.  Finally, we see that such a neural network  is equipped with $2K(2^\ell + 2d-1)$ $\bm{W} \in \bm{W}^*,\bm{b} \in \bm{b}^*, a\in a^*$, and $K\ c \in c^*$, and $dK$ $\mathring W$ and $dK$ $\mathring b$.

To satisfy the requirements in (\ref{etarequire}), it suffices to choose a set $\widetilde \Theta$ which, with $\tilde C = C_K^0/\sqrt{d}$, has cardinality at most 
{\allowdisplaybreaks
\begin{align*}
    & \left\lceil\frac{20}{\eta}\right\rceil^{5(2K)(2^\ell+2d-1)} \left\lceil\frac{4}{\eta}\right\rceil^{5m(2K)(2^\ell+2d-1)}\left\lceil\frac{20}{\eta}\right\rceil^{25m(2K)(2^\ell+2d-1)}\left\lceil\frac{2^\ell C_K}{\eta}\right\rceil^{K2^\ell}\left\lceil\frac{\tilde C}{\eta}\right\rceil^{d^2K}\left\lceil\frac{C_K^1}{\eta}\right\rceil^{dK}\\
    &\leq \left(\frac{21}{\eta}\right)^{(10+50m)K(2^\ell+2d)} \left(\frac{5}{\eta}\right)^{10mK(2^\ell+2d)}\left(\frac{2^\ell(C_K+1)}{\eta}\right)^{K2^\ell} \left(\frac{\tilde C+1}{\eta}\right)^{d^2K}\left(\frac{C_K^1+1}{\eta}\right)^{dK}\\
    &\leq 21^{(60m)K(2^\ell+2d)} 5^{10mK(2^\ell+2d)} \left(2^{\ell K2^\ell}\right)(C_K+1) ^{\ell K 2^\ell} (\tilde C+1)^{d^2K} (C_K^1+1)^{d^2K}\eta^{-K(70m(2^\ell+2d)+2^\ell +2d^2)}\\
    &\leq (21^{60}5^{10})^{mK(2^\ell+2d)} (2(C_K+1))^{\ell K2^\ell}((\tilde C+1)(C_K^1+1))^{d^2K} \left(\frac{1}{\eta}\right)^{71mK(2^\ell+2d^2)}.
\end{align*}}

Then, plugging in $\eta$ given in (\ref{boundofeta}), we obtain
\begin{eqnarray*}
   \mathcal{N}(\epsilon,\mathcal{H}) 
    &\leq &(21^{60}5^{10})^{mK(2^\ell+2d)} (2(C_K+1))^{\ell K2^\ell}((\tilde C+1)(C_K^1+1))^{d^2K} \\
    &&\left(\frac{K(b+1)(C_K^\prime+1)(C+C_K)(2m+2)20^{(\ell+2)(m+2)}}{\lambda \epsilon}\right)^{71mK(2^\ell+2d^2)}.   
\end{eqnarray*}
Then, we have 
{\allowdisplaybreaks
\begin{eqnarray*}
   & &\frac{1}{K}\log \mathcal{N}(\epsilon,\mathcal{H}) \\
     &\leq& m(2^\ell+2d) \log (21^{60}5^{10}) + 4\ell 2^\ell \log (C_K)+ d^2 \log ((\tilde C+1)(C_K^1+1))\\
    &&+71m(2^\ell+2d^2)\left(\log (96KC_K^\prime)+\log \left(\frac{bmC_K}{\lambda \epsilon}\right)\right)+ 639m^2\ell(2^\ell+2d^2)\log (20)\\
    & \leq& 72m(2^\ell)(2d^2)\left(\log (21^{60}5^{10}96KC_K^\prime(\tilde C+1)(C_K^1+1))+\log \left(\frac{bmC_K}{\lambda \epsilon}\right) \right) \\
    &&+ 4\ell 2^\ell \log (C_K) + 639 \log (20) m^2\ell(2^\ell)(2d^2)\\
     &\leq& \frac{C^\prime}{K} m 2^\ell \log \left(\frac{bmC_K}{\lambda \epsilon}\right) + 4\ell 2^\ell \log (C_K)+ 639 \log (20) m^2\ell(2^\ell)(2d^2)\\
     & \leq& \frac{C^\prime}{K} m 2^\ell \log \left(\frac{bC_K}{\lambda \epsilon}\right) + 4\ell 2^\ell \log (C_K) +\frac{C^{\prime\prime}}{K}m^2\ell(2^\ell), 
\end{eqnarray*}}
where $C^\prime, C^{\prime\prime}$ are positive constants independent of $\ell, m,b,\lambda,C_K$ or $\epsilon$.
\small
 \bibliography{ref}

\end{document}